\documentclass[11pt]{article} 

\usepackage[T1]{fontenc}
\usepackage[utf8]{inputenc}
\usepackage{lmodern}
\usepackage{url}
\usepackage{amsmath,amsthm,amssymb}
\newtheorem{theorem}{Theorem}

\newtheorem{lemma}[theorem]{Lemma}

\newtheorem{definition}[theorem]{Definition}

\usepackage{algorithm}
\usepackage{algpseudocode}

\usepackage{caption}
\usepackage{bm}
\usepackage{xcolor}
\usepackage{xspace}
\usepackage{paralist}
\usepackage{graphicx}
\usepackage{ifpdf}
\usepackage{subcaption}
\usepackage{fullpage}
\usepackage[font=small,labelfont=bf]{caption}

\newcommand{\Var}[1]{\mathbf{Var}\left[#1\right]}	

\newcommand{\Exp}[1]{\mathbf{E}\left[#1\right]}
\newcommand{\ip}[1]{\ensuremath{\langle#1\rangle}}
\newcommand{\Prob}[1]{\mathbf{Pr}\left[#1\right]}
\newcommand{\nat}{\ensuremath{\mathbb{N}}}
\newcommand{\real}{\ensuremath{\mathbb{R}}}
\newcommand{\hide}[1]{{}}
\newcommand{\grad}{\ensuremath{\bm{\mathcal \nabla}}\xspace}
\newcommand{\bx}{\ensuremath{\bm x}\xspace}
\newcommand{\by}{\ensuremath{\bm y}\xspace}
\newcommand{\bz}{\ensuremath{\bm z}\xspace}
\newcommand{\COMMENTED}[1]{{}}

\usepackage{hyperref}
\usepackage{url}
\newcommand{\out}[1]{\COMMENTED{#1}}



\title{New Bounds For Distributed \\ Mean Estimation and  Variance Reduction}

%

\begin{document}

\author{
	Peter Davies\\{IST Austria}\\\texttt{peter.davies@ist.ac.at}
	\and 
	Vijaykrishna Gurunathan\\{IIT Bombay}\\\texttt{krishnavijay1999@gmail.com}
	\and
	Niusha Moshrefi\\{IST Austria}\\\texttt{niusha.moshrefi@ist.ac.at}
	\\ \and 
	Saleh Ashkboos\\{IST Austria}\\\texttt{saleh.ashkboos@ist.ac.at}
	\and
	Dan Alistarh\\{IST Austria \& NeuralMagic}\\\texttt{dan.alistarh@ist.ac.at}
}
	
\date{}
\maketitle
\thispagestyle{empty}
		\begin{abstract}

            We consider the problem of \emph{distributed mean estimation (DME)}, in which $n$ machines are each given a local $d$-dimensional vector $\bx_v \in \real^d$, and must cooperate to estimate the mean of their inputs $\bm \mu = \frac 1n\sum_{v = 1}^n \bx_v$, while minimizing total communication cost. 
            DME is a fundamental construct in distributed machine learning, and there has been considerable work on variants of this problem, especially in the context of \emph{distributed variance reduction} for stochastic gradients in parallel SGD. Previous work typically assumes an upper bound on the norm of the input vectors, and achieves an error bound in terms of this norm. However, in many real applications, the input vectors are concentrated around the correct output $\bm \mu$, but $\bm \mu$ itself has large norm. In such cases, previous output error bounds perform poorly. 
            
			In this paper, we show that output error bounds need not depend on input norm. We provide a method of quantization which allows distributed mean estimation to be performed with solution quality dependent only on the \emph{distance between inputs}, not on input norm, and show an analogous result for distributed variance reduction. The technique is based on a new connection with lattice theory. We also provide lower bounds showing that the \emph{communication to error trade-off} of our algorithms is asymptotically optimal.
			As the lattices achieving optimal bounds under $\ell_2$-norm can be computationally impractical, 
			we also present an extension which leverages  easy-to-use cubic lattices, and is loose only up to a logarithmic factor in $d$. We show experimentally that our method yields practical improvements for common applications, relative to prior approaches.  
		\end{abstract}

\pagebreak
\tableofcontents
\section*{Acknowledgements}
Peter Davies is supported by the European Union’s Horizon 2020 research and innovation programme under the Marie Skłodowska-Curie grant agreement No. 754411. This  project  has  also received  funding  from  the  European  Research  Council  (ERC) under  the  European  Union’s  Horizon  2020  research  and  innovation  programme (grant agreement No. 805223 ScaleML).	
\pagebreak
	\section{Introduction}\label{sec:intro}
	
	Several problems in distributed machine learning and optimization can be reduced to variants \emph{distributed  mean estimation} problem, in which $n$ machines must cooperate to jointly estimate the mean of their $d$-dimensional inputs $\bm \mu = \frac 1n\sum_{v = 1}^n \bx_v$ as closely as possible, while minimizing communication. 
	In particular, this construct is often used for \emph{distributed variance reduction}: here, each machine receives as input an independent probabilistic estimate of a $d$-dimensional vector $\grad$, and the aim is for all machines to output a \emph{common estimate} of $\grad$ with \emph{lower variance} than the individual inputs,  minimizing communication.
	Without any communication restrictions, the ideal output would be the mean of all machines' inputs. 
	
	While variants of these fundamental problems have been considered since seminal work by Tsitsiklis and Luo \cite{tsitsiklis1987communication}, the task has seen renewed attention recently in the context of distributed machine learning. In particular, variance reduction is a key component in data-parallel distributed stochastic gradient descent (SGD), the standard way to parallelize the training of deep neural networks, e.g.~\cite{bottou2010large, abadi2016tensorflow}, where it is used to estimate the average of gradient updates obtained in parallel at the nodes.
	Thus, several prior works proposed efficient compression schemes to solve variance reduction or mean estimation, see e.g.~\cite{MeanEstimation, QSGD, ramezani2019nuqsgd, VQSGD}, and~\cite{ben2019demystifying} for a general survey of practical distribution schemes. These schemes seek to \emph{quantize} nodes' inputs coordinate-wise to one of a limited collection of values, in order to then efficiently encode and transmit these quantized values. A trade-off then arises between the  \emph{number of bits sent}, and the \emph{added variance} due of quantization.    
	
	\out{
		such that each machine $v$ receives as input a $d$-dimensional vector $\bx_v \in \real^d$, and a bound $y$ on how far any other input vector can be from $\bx_v$, i.e. $\| \bx_u - \bx_v \| < y, \forall u \in M$. Note that this also implies that each input is at most distance $y$ from the mean input. 
		In this paper, we will focus on a \emph{probabilistic}  version of this problem, called \emph{variance reduction}. 
		In variance reduction, the input vectors are themselves random estimates of a single \emph{true} vector \grad with variance $\sigma^2$ , and the goal is to output a new estimate with \emph{minimal variance}. 
		Thus, each machine should return an accurate estimate of the \emph{mean} of the input vectors $\bm\mu = \frac{1}{n} \sum_{v = 1}^M \bx_v$, while minimizing the total communication. 
		
	}

	Since the measure of \emph{output} quality is variance, it appears most natural to evaluate this with respect to \emph{input} variance, in order to show that \emph{variance reduction} is indeed achieved. Surprisingly, however, we are aware of no previous works which do so; all existing methods give bounds on output variance in terms of the \emph{squared input norm}. This is clearly suboptimal when the squared norm is higher than the variance, i.e., when inputs are not centered around the origin. In some practical scenarios this causes output variance to be \emph{higher} than input variance, as we demonstrate in Section \ref{sec:exp}.
	
	\out{
		
		and (In Section~\ref{sec:exp} we provide evidence that this bounded input assumption is much stronger than the bounded variance assumption in practical scenarios.) }
	
	\paragraph{Contributions.} In this paper, we provide the first bounds for distributed mean estimation and variance reduction which are still tight when inputs are not centered around the origin.
	Our results are based on new lattice-based quantization techniques, which may be of independent interest, and come with matching lower bounds, and practical extensions. 
	More precisely, our contributions are as follows: 
	\begin{itemize}
		
		\item For \emph{distributed mean estimation}, we show that, to achieve a reduction of a factor $q$ in the input `variance' (which we define to be the maximum squared distance between inputs), it is necessary and sufficient for machines to communicate $\Theta( d \log q)$ bits. 
		
		\item 
		For \emph{variance reduction}, we show tight $\Theta(d\log n)$ bounds on the worst-case  communication bits required to achieve optimal $\Theta(n)$-factor variance reduction by $n$ nodes over $d$-dimensional input, and indeed to achieve any variance reduction at all. We then show how incorporating \emph{error detection} into our quantization scheme, we can also obtain tight bounds on the bits required \emph{in expectation}.
		
		\item We show how to efficiently instantiate our lattice-based quantization framework in practice, with guarantees. In particular, we devise a variant of the scheme which ensures close-to-optimal  communication-variance bounds even for the standard \emph{cubic lattice}, and use it to obtain improvements relative to the best known previous methods for distributed mean estimation, both on synthetic  and real-world tasks. 
		
	\end{itemize}
	
	\subsection{Problem Definitions and Discussion}\label{sec:prelim}
 \textsc{MeanEstimation} is defined as follows: we have $n$ machines $v$, and each receives as input a vector $\bx_v \in \real^d$. We also assume that all machines receive a common value $y$, with the guarantee that for any machines $u,v$, $\|\bx_u - \bx_v\|\le y$. Our goal is for all machines to output the same value $\bm{EST} \in \real^d$, which is an unbiased estimator of the mean $\bm\mu=\frac 1n \sum_{v\in M}\bm x_v$, i.e. $\Exp{\bm{EST}}=\bm\mu$, with variance as low as possible. Notice that the input specification is entirely deterministic; any randomness in the output arises only from the algorithm used.	
		
In the variant of \textsc{VarianceReduction}, we again have a set of $n$ machines, and now an unknown \emph{true} vector \grad. Each machine $v$ receives as input an independent unbiased estimator $\bx_v$ of $\grad$ (i.e., $\Exp{\bx_v}=\grad$) with variance $\Exp{\|\bx_v-\grad\|^2}\le \sigma^2$. Machines are assumed to have knowledge of $\sigma$. Our goal is for all machines to output the same value $\bm{EST} \in \real^d$, which is an unbiased estimator of \grad, i.e., $\Exp{\bm{EST}}=\grad$, with low variance. Since the input is random, output randomness now stems from this input randomness as well as any randomness in the algorithm.	
	
	\textsc{VarianceReduction} is common for instance in the context of gradient-based optimization of machine learning models, 
	where we assume that each machine $v$ processes local samples in order to obtain a \emph{stochastic gradient} $\tilde{g}_v$, which is an unbiased estimator of the true gradient $\grad$, 
	with variance bound $\sigma^2$. 
	If we directly averaged the local stochastic gradients $\tilde{g}_v$, we could obtain an unbiased estimator of the true gradient $G$ with variance bound $\sigma^2 / n$, which can lead to faster convergence.  
	
	\paragraph{Input Variance Assumption.} 
	The parameter $y$ replaces the usual \textsc{MeanEstimation} assumption of a known bound $\mathbb M$ on the \emph{norms of input vectors}. Note that, in the worst case, we can always set $y=2\mathbb M$ and obtain the same asymptotic upper bounds as in e.g. \cite{MeanEstimation}; our results are therefore \emph{at least} as good as previous approaches in all cases, but, as we will show, provide significant improvement when inputs are not centered around the origin.
	
	The reason for this change is to allow stronger bounds in scenarios where we expect inputs to be closer to each other than to the origin. In particular, it allows our \textsc{MeanEstimation} problem to more effectively generalize \textsc{VarianceReduction}. Parameter $y$ is a deterministic analogue of the parameter $\sigma$ for \textsc{VarianceReduction}; both $y$ and $\sigma$ provide a bound on the distance of inputs from their \emph{mean}, rather than from the origin. Accordingly, \emph{input variance} $\sigma^2$ for a \textsc{VarianceReduction} instance corresponds (up to constant factors) to $y^2$ for a \textsc{MeanEstimation} instance. For consistency of terminology, we therefore refer to $y^2$ as the \emph{input variance} of the instance (despite such inputs being deterministic).
	
	It is common in machine learning applications of \textsc{VarianceReduction} to assume that an estimate of the variance $\sigma^2$ is known~\cite{QSGD,VQSGD}. To study both problems in a common framework, we make the analogous assumption about \textsc{MeanEstimation}, and assume knowledge of the \emph{input variance} $y^2$. 
	Even if the relevant bounds $y$ or $\sigma$ are not known \emph{a priori}, they can usually be  estimated in practice. We discuss how we obtain estimates of input variance for our applications in Section~\ref{sec:exp}.
	
	\paragraph{Relationship Between Problems.}
	If one allows unrestricted communication, the straightforward solution to both problems is to average the inputs. This is an exact solution to \textsc{MeanEstimation} with variance $0$, and is an asymptotically optimal solution to \textsc{VarianceReduction}, of variance at most $\frac{\sigma^2}{n}$.\footnote{For specific classes of input distribution, and for non-asymptotic concentration results, however, better estimators of the mean are known; see e.g. \cite{joly2017estimation}.}
	However, doing so would require the exchange of infinite precision real numbers. So, it is common to instead communicate \emph{quantized} values of bounded bit-length~\cite{QSGD}, which will engender additional variance caused by random choices within the quantization method. The resulting estimates will therefore have variance $\bm{Var}_{quant}$ for \textsc{MeanEstimation}, and $\frac{\sigma^2}{n}+\bm{Var}_{quant}$ for \textsc{VarianceReduction}. We will show a trade-off between bits of communication and output variance for both problems; in the case of \textsc{VarianceReduction}, though, there is an `upper limit' to this trade-off, since we cannot go below $\Omega(\frac{\sigma^2}{n})$ total output variance.
	
	The other major difference between the two problems is that in \textsc{MeanEstimation}, distances between inputs are bounded by $y$ with certainty, whereas in \textsc{VarianceReduction} they are instead bounded by $O(\sigma)$ only in \emph{expectation}. This causes extra complications for quantization, and, as we will see, introduces a gap between average and worst-case communication cost.
	\paragraph{Distributed Model.}
We aim to provide a widely applicable method for distributed mean estimation, and therefore we avoid relying on the specifics of particular distributed models. Instead, we assume that the basic communication structures we use (stars and binary trees) can be constructed without significant overhead. This setting is supported by machine learning applications, which have very high input dimension (i.e., $d \gg n$), and so the costs of synchronization or construction of an overlay (which do not depend on $d$, and are generally poly-logarithmic in $n$), will be heavily dominated by the communication costs incurred subsequently during mean estimation. They also need only be incurred once, even if mean estimation or variance reduction is to be performed many times (e.g. during distributed SGD). For these reasons, we do not include these model-specific setup costs in our stated complexities; any implementation of our techniques should take them into separate consideration.

For simplicity, we will present our algorithms within a basic synchronous fault-free message-passing model, in which machines can send arbitrary messages to any other machine, but they could naturally be extended to asynchronous and shared-memory models of communication. Our aim will be to minimize the number of bits sent and received by any machine during the course of the algorithm, i.e., we do not consider other measures such as round complexity.

	\paragraph{Vector Norms.}
	When dealing with vectors in $\real^d$, we will use names in bold, e.g. $\bx$, $\bm y$. We will state most of our results in such a way that they will apply to any of the three most commonly-used norms on $\real^d$ in applications: $\ell_1$ norm $\|\bx\|_1 := \sum_{i=1}^d |x_i|$, $\ell_2$ norm $\|\bx\|_2 := \sqrt{\sum_{i=1}^d x_i^2}$, and $\ell_\infty$ norm $\|\bx\|_\infty := \max_{i=1}^d x_i$. Throughout the paper we will therefore use the general notation $\|\cdot\|$, which should be considered to be fixed as one of these norms, other than for statements specific to particular norms. Definitions which depend on norms, such as variance $\Var{\bx}:=\Exp{\|\bx - \Exp{\bx}\|^2}$, are therefore assumed to also be under the appropriate norm. 
	
	\subsection{Related Work}
	\label{sec:related-work}

	Several recent works consider efficient compression schemes for stochastic gradients, e.g.~\cite{1bitqsgd, atomo, QSGD, TopK, stich2018sparsified, terngrad, wangni2018gradient, lu2020moniqua}. 
	We emphasize that these works consider a related, but different problem: they usually rely on assumptions on the input structure---such as second-moment bounds on the gradients---and are evaluated primarily on the practical performance of SGD, rather than isolating the variance-reduction step. 
	(In some cases, these schemes also rely on history/error-correction~\cite{aji2017sparse, dryden2016communication, TopK, stich2018sparsified}.)
	As a result, they do not provide theoretical bounds on the problems we consider.
	In this sense, our work is closer to~\cite{MeanEstimation, RDME, VQSGD}, which focus primarily on the \emph{distributed mean estimation} problem, and only use SGD as one of many potential applications. 
	
	For example, QSGD \cite{QSGD} considers a similar problem to \textsc{VarianceReduction}; the major  difference is that coordinates of the input vectors are assumed to be specified by 32-bit floats, rather than arbitrary real values. Hence, transmitting input vectors exactly already requires only $O(d)$ bits. They therefore focus on reducing the constant factor (and thereby improving practical performance for SGD), rather than providing asymptotic results on communication cost. They show that the expected number of bits per entry can be reduced from $32$ to $2.8$, at the expense of having an output variance bound in terms of \emph{input norm} rather than \emph{input variance}. 
	
	This is a common issue with existing quantization schemes, which leads to non-trivial complications when applying quantization to gradient descent and variance-reduced SGD~\cite{Kunstner} or to model-averaging SGD~\cite{lu2020moniqua}, since in this case the inputs are clearly not centered around the origin. The standard way to circumvent this issue, adopted by the latter two references, but also by other work on quantization~\cite{mishchenko2019distributed}, is to carefully adapt the quantization scheme and the algorithm to remove this issue, for instance by quantizing \emph{differences} with respect to the last quantization point. These approaches, however, do not provide improvement as `one-shot' quantization methods, and instead rely on historical information and properties of SGD or the function to optimize (such as smoothness). They are therefore inherently application-specific. Our method, by contrast, does not require ``manual'' centering of the iterates, and does not require storage of previous iterates, or any properties thereof. 
	
	Konečný and Richtárik \cite{RDME} study \textsc{MeanEstimation} under similar assumptions, and are the only prior work to use quantization centered around points \emph{other} than the origin. However, again prior  knowledge about the input distribution must be assumed for their scheme to provide any improvements. 
	
	Suresh et al. \cite{MeanEstimation} study the \textsc{MeanEstimation} problem defined on real-valued input vectors. 
	They present a series of quantization methods, providing an $O(\frac {1}{n^2} \sum_{v\le n}\|\bx _ v\|_2^2)$ upper bound, and corresponding lower bounds. Recent work by Gandikota et al. \cite{VQSGD} studies \textsc{VarianceReduction}, and uses multi-dimensional quantization techniques. However, their focus is on protocols using $o(d)$-bit messages per machine (which we show \emph{cannot} reduce input variance). They do give two quantization methods using $\Theta(d)$-bit messages. Of these, one gives an $O(\frac {1}{n} \max_{v\le n}\|\bx _ v\|_2^2)$ bound on output variance, similar to the bound of \cite{MeanEstimation} for \textsc{MeanEstimation} (the other is much less efficient since it is designed to achieve a privacy guarantee). Mayekar and Tyagi \cite{RATQ} obtain a similar error bound but with slightly longer $\Theta(d \log\log\log (\log^* d))$-bit messages.
	
	All of the above works provide output error bounds based on the norms of input vectors. This is only optimal under the implicit assumption that inputs are centered around the origin. 
	In Section~\ref{sec:exp} we provide evidence that this assumption does not hold in some practical scenarios, where the input (gradient) variance can be much lower than the input (gradient) norm: intuitively, for SGD, input variance is only close to squared norm when true gradients are close to $\bm 0$, i.e., the optimization process is already almost complete.

\section{Our Results}	
	In this work, we argue that it is both stronger and more natural to bound output variance in terms of input variance, rather than squared norm. We devise optimal quantization schemes for \textsc{MeanEstimation} and \textsc{VarianceReduction}, and prove matching lower bounds, regardless of input norms. We summarize the main ideas that lead to these results.
\subsection{Lattice-Based Quantization}
	
The reason that all prior works obtain output variance bounds in terms of input norm rather than variance is that they employ sets of quantization points which are \emph{centered} around the origin $\bm 0$. We instead cover the entire space $\real^d$ with quantization points that are in some sense uniformly spaced, using \emph{lattices}.

Lattices are subgroups of $\real^d$ consisting of the integer combinations of a set of basis vectors. It is well-known \cite{Minkowski11} that certain lattices have desirable properties for covering and packing Euclidean space, and lattices have been previously used for some other applications of quantization (see, e.g., \cite{LatticeQuant}), though mostly only in low dimension. By choosing an appropriate family of lattices, we show that any vector in $\real^d$ can be rounded (in a randomized, unbiased fashion) to a nearby lattice point, but also that there are not too many nearby lattice points, so the correct one can be specified using few bits.

Lattices contain an infinite number of points, and therefore any encoding using a finite number of bits must use bit-strings to refer to an infinite amount of lattice points. To allow the receiver in our quantization method to correctly decode the intended point, we utilize the fact that we have a bound on the distance between any two machines' inputs ($y$ for \textsc{MeanEstimation}, and $O(\sigma\sqrt{ n})$ (probabilistically, by Chebyshev's inequality) for \textsc{VarianceReduction}). Therefore, if all points that map to the same bit-string are sufficiently far apart, a machine can correctly decode based on proximity to its own input.

The simplest version of our lattice quantization algorithm can be described as follows
\begin{itemize}
\item To encode $\bx_u$, randomly map to one of a set of nearby lattice points forming a convex hull around $\bx_u$. Denote this point by $\bz$.
\item Send $\bz\bmod q$ under the lattice basis: $q$ is the quantization precision parameter.
\item To decode with respect to $\bx_v$, output the closest lattice point to $\bx_v$ matching $\bz\bmod q$ .
\end{itemize}

By showing that $\bx_u$ is contained within a convex hull of nearby lattice points, we can round to one of these points randomly to obtain $\bz$ such that the expectation of $\bz$ is $\bx_u$ itself, thereby ensuring \emph{unbiasedness}. This is because a point within a convex hull can be expressed as a linear combination of its vertices with coefficients in $[0,1]$, which we can use as rounding probabilities. For the cubic lattice this procedure is particularly simple: since the lattice basis is orthogonal, we can round $\bx_u$ coordinate-wise. We round each coordinate either up or down to the closest multiple of the relevant lattice basis vector, choosing the respective probabilities so that the expected value is the coefficient of $\bx_u$.

Our reason for using $\bmod$ $q$ with respect to the lattice basis in order to encode lattice points into bit-strings is that by exploiting properties of the particular lattices we employ, we can show a lower bound on the distance between points encoded with the same bit-string, while also controlling the number of bits we use. Then, since points encoded with the same string are sufficiently far apart, our proximity-based decoding procedure can determine the correct point.
We also have a parameter $\epsilon$ which controls the granularity of the lattice used. This method of lattice-based quantization gives the following guarantee for communicating a vector between two parties:

\begin{theorem}\label{thm:quantization}
	For any $q=\Omega(1)$, any $\epsilon>0$, and any two parties $u$, $v$ holding input vectors $\bx_u$, $\bx_v \in \real^d$ respectively, there is a quantization method in which $u$ sends $O(d\log q)$ bits to $v$, and if $\|\bx_u-\bx_v \|=O(q\epsilon)$, $v$ can recover an unbiased estimate $\bz$ of $\bx_u$ with $\|\bz-\bx_u\|=O( \epsilon)$.
\end{theorem}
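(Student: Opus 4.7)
The plan is to instantiate the three-step scheme sketched before the theorem with a carefully chosen lattice $\Lambda\subset \real^d$ and basis matrix $B$. I would scale $\Lambda$ so that, under the chosen norm $\|\cdot\|$, its fundamental cells have diameter $O(\epsilon)$ (in particular, covering radius $O(\epsilon)$) while its minimum distance satisfies $\lambda_1(\Lambda)=\Omega(\epsilon)$; classical lattice-theoretic results (e.g.\ Rogers-type covering bounds for $\ell_2$, the cubic lattice for $\ell_\infty$) guarantee that such lattices exist with constants depending only on the norm, not on $d$. Encoding then proceeds by triangulating each fundamental parallelotope into simplices whose vertices are lattice points, writing $\bx_u$ as the convex combination $\sum_i \alpha_i \bm{v}_i$ of the vertices of the simplex it belongs to, and sampling $\bz=\bm{v}_i$ with probability $\alpha_i$.

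Randomized rounding gives $\Exp{\bz}=\bx_u$ directly from the barycentric coefficients, and because $\bz$ is a vertex of a simplex of diameter $O(\epsilon)$ containing $\bx_u$, we also get $\|\bz-\bx_u\|=O(\epsilon)$. Writing $\bz=Bc$ with $c\in\mathbb{Z}^d$, party $u$ transmits $c\bmod q$ coordinate-wise in $d\lceil \log_2 q\rceil=O(d\log q)$ bits, yielding the claimed communication cost.

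For decoding, party $v$ outputs the element of the coset $\bz+q\Lambda$ closest (in $\|\cdot\|$) to $\bx_v$. To verify this returns exactly $\bz$, I would observe that $\|\bz-\bx_v\|\le \|\bz-\bx_u\|+\|\bx_u-\bx_v\|=O(\epsilon)+O(q\epsilon)=O(q\epsilon)$, while any other coset element is $\bz+q\lambda$ for some nonzero $\lambda\in\Lambda$, and is therefore at distance at least $q\lambda_1(\Lambda)-\|\bz-\bx_v\|=\Omega(q\epsilon)-O(q\epsilon)$ from $\bx_v$ by the triangle inequality. Choosing the hidden constant in the hypothesis $\|\bx_u-\bx_v\|=O(q\epsilon)$ sufficiently small relative to the ratio of $\lambda_1(\Lambda)$ to the fundamental-cell diameter makes this strictly exceed $\|\bz-\bx_v\|$, so the closest-point decoder returns $\bz$ and the output inherits unbiasedness and the $O(\epsilon)$ deviation from $\bx_u$ established above.

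The main obstacle is the lattice choice itself: we need a lattice whose fundamental cells are small (so that rounding error is $O(\epsilon)$) yet whose minimum distance is comparable to this size (so that cosets of $q\Lambda$ remain separated well enough for unambiguous decoding), with both bounds holding with constants independent of $d$. Once such a lattice is in hand, the remaining pieces—simplex triangulation, barycentric sampling, modular encoding in the lattice basis, and closest-point decoding—are essentially routine, although the algorithmic cost of the closest-point search in general lattices is a separate practical concern that motivates the paper's later cubic-lattice variant.
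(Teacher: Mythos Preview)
Your overall framework matches the paper's: pick a lattice with packing and covering radii both $\Theta(\epsilon)$ (Rogers-type for $\ell_1,\ell_2$; cubic for $\ell_\infty$), randomly round $\bx_u$ to a nearby lattice point in an unbiased way, transmit the coordinates modulo $q$ using $O(d\log q)$ bits, and decode by closest coset representative. The coloring separation and decoding arguments are essentially the same as the paper's.

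However, your randomized rounding step has a genuine gap for the $\ell_1$ and $\ell_2$ norms. You claim the fundamental parallelotope---and hence each simplex in its triangulation---has diameter $O(\epsilon)$, but this is impossible independently of $d$ once $\lambda_1(\Lambda)=\Omega(\epsilon)$. In $\ell_2$, for \emph{any} basis $b_1,\dots,b_d$ one has $\mathbb{E}_{u\in\{-1,1\}^d}\bigl\|\sum_i u_i b_i\bigr\|_2^2=\sum_i\|b_i\|_2^2\ge d\,\lambda_1(\Lambda)^2$, so some vertex-to-vertex distance in the parallelotope is at least $\sqrt d\,\lambda_1(\Lambda)=\Omega(\sqrt d\,\epsilon)$. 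Your rounded point $\bz$ could therefore lie $\Omega(\sqrt d\,\epsilon)$ from $\bx_u$, not $O(\epsilon)$. The slip is in the parenthetical ``in particular, covering radius $O(\epsilon)$'': small covering radius does \emph{not} imply small parallelotope diameter; the latter is basis-dependent and necessarily grows with $d$.

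The paper handles this with a different convex-hull step that bypasses any basis: it proves directly that $\bx_u$ lies in the convex hull of $\Lambda\cap B_{7\epsilon}(\bx_u)$ by a separating-hyperplane argument using only the covering radius (for every direction $\bm w$, the lattice point nearest $\bx_u+4\epsilon\,\bm w/\|\bm w\|$ lies in $B_{7\epsilon}(\bx_u)$ and strictly beyond $\bx_u$ in direction $\bm w$). Sampling from the resulting convex combination gives unbiasedness with $\|\bz-\bx_u\|\le 7\epsilon$. So the step you flagged as ``essentially routine'' is in fact where the real work is; one fix along your lines would be to triangulate the Delaunay cells (which have lattice-point vertices and diameter at most $2r_c=O(\epsilon)$) rather than the fundamental parallelotope.
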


Details are given in Section \ref{sec:quant}. This result is very general, and can be applied not only to DME but to any application in which high-dimensional vectors are communicated, in order to reduce communication.
\subsection{Upper Bounds}\label{sec:alg-short}

Next, we show how to apply our quantization procedure to \textsc{MeanEstimation} and \textsc{VarianceReduction}. We wish to gather (quantized estimates of) machines' inputs to a single machine, which computes the average, and broadcasts a quantized estimate of this average to all other machines. For simplicity of analysis we do so using a star and a binary tree as our communications structures, but any connected communication topology would admit such an approach. The basic star topology algorithm can be described as follows:
\begin{itemize}
\item All machines $u$ send $\bx_u$, quantized with precision parameter $q$ and $\epsilon =\Theta(y/q)$, to the leader machine $v$.
\item Machine $v$ decodes all received vectors, averages them, and broadcasts the result, using the same quantization parameters.
\end{itemize}
By choosing the leader randomly we can obtain tight bounds in expectation on the number of communication bits used per machine, and by using a more balanced communication structure such as a binary tree we can extend these bounds in expectation to hold with certainty. Further details of these algorithmic results can be found in Section \ref{sec:alg}.

\begin{theorem}\label{thm:ME2}
	For any $q=\Omega(1)$, \textsc{MeanEstimation} can be performed with each machine using strictly $O(d\log q)$ communication bits, with $O(\frac{y^2}{q})$ output variance.
\end{theorem}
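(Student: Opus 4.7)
The plan is to invoke Theorem~\ref{thm:quantization} as a black box along a simple aggregate-and-broadcast topology, choosing parameters so that every message costs $O(d\log q)$ bits and every distance precondition of that theorem is satisfied.

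Concretely, I would apply Theorem~\ref{thm:quantization} with its internal precision parameter set to $q'=\lceil\sqrt q\,\rceil$ and granularity $\epsilon=\Theta(y/\sqrt q)$, so $q'\epsilon=\Theta(y)$ and each quantized message occupies $O(d\log q')=O(d\log q)$ bits. Pick a leader $L$; every other machine $u$ quantizes $\bx_u$ and sends it to $L$. Since $\|\bx_u-\bx_L\|\le y=O(q'\epsilon)$, Theorem~\ref{thm:quantization} lets $L$ recover an unbiased $\bm z_u$ with the deterministic bound $\|\bm z_u-\bx_u\|=O(y/\sqrt q)$. The leader forms $\bar{\bm z}=\tfrac 1n\sum_u \bm z_u$, so $\Exp{\bar{\bm z}}=\bm\mu$ by linearity, and the triangle inequality applied to the per-summand deterministic bound gives $\|\bar{\bm z}-\bm\mu\|\le O(y/\sqrt q)$ with certainty. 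Hence for every $v$,
\[
\|\bar{\bm z}-\bx_v\|\le\|\bar{\bm z}-\bm\mu\|+\|\bm\mu-\bx_v\|\le O(y/\sqrt q)+y=O(q'\epsilon),
\]
so $L$ may quantize $\bar{\bm z}$ with the same parameters and broadcast the resulting bit-string; Theorem~\ref{thm:quantization} then guarantees that every $v$ decodes to the same lattice point $\bm z^\star$, and this common value is the output $\bm{EST}$.

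Unbiasedness follows by tower expectations and the per-step unbiasedness of Theorem~\ref{thm:quantization}: $\Exp{\bm{EST}}=\Exp{\Exp{\bm z^\star\mid\bar{\bm z}}}=\Exp{\bar{\bm z}}=\bm\mu$. For the variance I would split
\[
\Exp{\|\bm{EST}-\bm\mu\|^2}\le 2\,\Exp{\|\bm z^\star-\bar{\bm z}\|^2}+2\,\Exp{\|\bar{\bm z}-\bm\mu\|^2},
\]
where the first term is $O(\epsilon^2)=O(y^2/q)$ from the broadcast-quantization guarantee and the second is $O(y^2/q)$ from the deterministic bound above, yielding the claimed $O(y^2/q)$.

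The principal obstacle is making the per-machine bit count \emph{strictly} $O(d\log q)$ rather than only $O(d\log q)$ on average: in the star topology the leader alone sends and receives $\Theta(nd\log q)$ bits. I would address this by replacing the star with a balanced binary tree, aggregating quantized weighted sums up the tree and then broadcasting the root's final estimate back down, so that every machine handles only $O(1)$ messages, each of size $O(d\log q)$. The technical cost is that the deterministic aggregation error can accumulate over the $O(\log n)$ tree levels; I would absorb this into the constant hidden in $\Theta$ when choosing $\epsilon$, and verify by induction that each intermediate aggregate stays within $O(q'\epsilon)$ of every child's reference input so that Theorem~\ref{thm:quantization}'s precondition continues to hold at every step.
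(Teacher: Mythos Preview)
Your star-topology argument is essentially the paper's own proof of the expected-communication version (Theorem~\ref{thm:ME}), and is correct as far as it goes. The gap is in the tree extension.

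You write that the $O(\log n)$ accumulated aggregation error can be ``absorbed into the constant hidden in $\Theta$ when choosing $\epsilon$.'' This is false, because $n$ and $q$ are independent parameters in the theorem statement. With your choices $q'=\Theta(\sqrt q)$ and $\epsilon=\Theta(y/\sqrt q)$, each tree level adds $\Theta(\epsilon)=\Theta(y/\sqrt q)$ to the deterministic error bound, so after $\log n$ levels the aggregate error is $\Theta(y\log n/\sqrt q)$. For Theorem~\ref{thm:quantization}'s decoding precondition $\|\cdot\|=O(q'\epsilon)=O(y)$ to hold at every internal node, you therefore need $\log n=O(\sqrt q)$, i.e.\ $q=\Omega(\log^2 n)$. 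But the theorem claims the result for any $q=\Omega(1)$; when $q$ is a fixed constant and $n$ is large, your induction breaks and decoding fails partway up the tree. No amount of constant-juggling in $\epsilon$ fixes this, since the depth $\log n$ is not bounded by any function of $q$.

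The paper addresses this with two ideas you are missing. First, it \emph{samples} a subset $T$ of $m=\min(q,n)$ machines and aggregates only over $T$; this caps the aggregation-tree depth at $\log q$ rather than $\log n$, at the cost of an additional $O(y^2/m)=O(y^2/q)$ sampling variance, which is exactly the target anyway. Second, it uses much finer quantization, $\epsilon=y/m^2$ with precision parameter $m^3$, so messages are still $O(d\log m)=O(d\log q)$ bits but the per-level error is only $O(y/m^2)$; summed over $\log m$ levels this gives $O(y\log m/m^2)=o(y)$, comfortably inside the decoding threshold $\Theta(my)$. The final broadcast to all $n$ machines is just relaying a single encoded string, so its depth does not contribute to error accumulation.
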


\begin{theorem}\label{thm:VRshort}
	\textsc{VarianceReduction} can be performed using strictly $O(d\log n)$ bits, with $O(\frac{\sigma^2}{n})$ output variance, succeeding with high probability.
\end{theorem}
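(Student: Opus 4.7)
The plan is to reduce Theorem \ref{thm:VRshort} to Theorem \ref{thm:ME2}: I would derive a deterministic pairwise distance bound that holds with high probability over the random inputs, then invoke the \textsc{MeanEstimation} protocol as a black box on that good event.

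To establish the distance bound, observe that each $\bx_v$ satisfies $\Exp{\|\bx_v - \grad\|^2} \le \sigma^2$, so Markov's inequality applied to $\|\bx_v - \grad\|^2$ gives $\Prob{\|\bx_v - \grad\| > t\sigma} \le 1/t^2$. Setting $t = n^{c}$ for a sufficiently large constant $c$ and union-bounding over the $n$ machines yields $\max_v \|\bx_v - \grad\| \le n^{c}\sigma$ with probability $1 - O(1/n^{c-1})$; on this good event the triangle inequality gives $\|\bx_u - \bx_v\| \le y := 2n^{c}\sigma$ for every pair, and all machines can compute $y$ locally because it depends only on the publicly known parameters $\sigma$ and $n$.

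Next, I would invoke Theorem \ref{thm:ME2} with this $y$ and precision $q = \Theta(n^{2c+1})$. Conditional on the good event, the deterministic hypothesis of that theorem is met, so the protocol sends strictly $O(d\log q) = O(d\log n)$ bits per machine and returns a common unbiased estimate $\bm{EST}$ of $\bm\mu = \frac{1}{n}\sum_v \bx_v$ with variance $O(y^2/q) = O(\sigma^2/n)$ measured only over the algorithm's internal randomness. Since that randomness is independent of the inputs and $\Exp{\bm{EST} \mid \bx_1, \ldots, \bx_n} = \bm\mu$, iterated expectations give $\Exp{\bm{EST}} = \grad$ and
\[
\Exp{\|\bm{EST} - \grad\|^2} = \Exp{\|\bm{EST} - \bm\mu\|^2} + \Exp{\|\bm\mu - \grad\|^2} \le O(\sigma^2/n) + \sigma^2/n = O(\sigma^2/n),
\]
where $\Exp{\|\bm\mu - \grad\|^2} \le \sigma^2/n$ uses independence of the $\bx_v$.

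The main obstacle is the low-probability event on which some $\bx_v$ sits far from $\grad$: there the decoding step of Theorem \ref{thm:quantization} has no guarantee, so output variance cannot be controlled, which is precisely why the statement is only ``with high probability'' rather than in expectation. Controlling the expectation over this bad event requires additional machinery such as explicit error detection with retransmission, which the introduction identifies as the subject of a separate result.
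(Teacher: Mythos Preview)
Your proof is correct and follows essentially the same route as the paper: apply Chebyshev/Markov to bound each $\|\bx_v - \grad\|$, union-bound over the $n$ machines to obtain a deterministic pairwise distance parameter $y$ with high probability, and then invoke Theorem~\ref{thm:ME2} with $q$ polynomial in $n$ so that the communication is $O(d\log n)$ and the quantization variance $O(y^2/q)$ matches the sampling variance $\sigma^2/n$. The paper packages this reduction as an intermediate Theorem~\ref{thm:VR1} with parameter $\alpha = n^c$ and $y = 2\sigma\sqrt{\alpha n}$, but the argument is identical; one minor slip in your write-up is that the union-bound failure probability is $n \cdot n^{-2c} = O(n^{1-2c})$, not $O(1/n^{c-1})$, though this is immaterial.
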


These results give optimal communication-variance bounds for these problems. However, to make this bounds practical, we address two main challenges. 

\paragraph{Challenge 1: Input Variance.} One difficulty with our approach is that we assume a known estimate of input variance ($y^2$ for \textsc{MeanEstimation}, $\sigma^2$ for	\textsc{VarianceReduction}). Furthermore, in \textsc{VarianceReduction}, even if our input variance estimate is correct, some pairs of inputs can be further apart, since the bound is probabilistic.

To address this problem, we develop a mechanism for \emph{error detection}, which allows the receiver to detect if the encode and decode vectors ($\bx_u$ and $\bx_v$ respectively) are too far apart for successful decoding. In this way, if our estimate of input variance proved too low, we can increase either it or the number of bits used for quantization until we succeed. Details are presented in Section \ref{sec:error}; the main idea is that rather than using $\bmod q$ to encode lattice points as bit-strings, we use a more sophisticated \emph{coloring} of the lattice and a new encoding procedure to ensure if $\bx_u$ and $\bx_v$ are far apart, with high probability the encoder $v$ chooses a color which is not used by any nearby point to $\bx_u$, and therefore $u$ can tell that $\bx_v$ was not nearby.

As an application, we obtain an algorithm for \textsc{VarianceReduction} which uses an optimal  expected number of bits per machine (except for an additive $\log n$, which in our applications is assumed to be far smaller than $d \log q$):

\begin{theorem}\label{thm:VR2}
	For any $q=\Omega(1)$, \textsc{VarianceReduction} can be performed using $O(d\log q+ \log n)$ communication bits per machine in expectation, with $ O(\frac{\sigma^2}{q}+\frac{\sigma^2}{n})$ output variance, succeeding with high probability.
\end{theorem}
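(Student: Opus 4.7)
The plan is to apply the error-detecting lattice quantization of Section \ref{sec:error} within a random-leader/star topology, using an escalating-$q$ retry mechanism to cope with the fact that the distance bound in \textsc{VarianceReduction} holds only in expectation rather than with certainty.

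I would set up the algorithm as in Section \ref{sec:alg-short}: elect a uniformly random leader, have each remaining machine send its input to the leader under error-detecting quantization with parameter $q$ and precision $\epsilon = \Theta(\sigma/\sqrt{q})$, and have the leader decode, average, and broadcast the result (again under quantization with the same parameters). This choice of $\epsilon$ produces the $O(\sigma^2/q)$ term in the output variance: the broadcast step contributes $O(\epsilon^2) = O(\sigma^2/q)$, the averaged gather errors contribute $O(\epsilon^2/n)$, and the intrinsic mean-of-$n$-estimators contributes the remaining $O(\sigma^2/n)$. The tolerable encode/decode distance under Theorem \ref{thm:quantization} is then $q\epsilon = \Theta(\sigma\sqrt{q})$.

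For a single transmission, the triangle inequality gives $\Exp{\|\bx_u-\bx_v\|^2}\le 4\sigma^2$, so by Markov the probability that $\|\bx_u-\bx_v\|$ exceeds the tolerable distance $\Theta(\sigma\sqrt{q_i})$ at retry level $i$ is $O(1/q_i)$. On detected failure the sender doubles its parameter to $q_i = 2^{i-1}q$ and retransmits, capping at $L=\Theta(\log n)$ levels. Since the probability of reaching level $i\ge 2$ is $O(2^{-i}/q)$ and the cost at that level is $O(d(\log q + i))$ bits, the per-transmission expected cost is
\begin{equation*}
d\log q + \sum_{i\ge 2}O(2^{-i}/q)\cdot d(\log q+i) \;=\; O(d\log q).
\end{equation*}
Uniform random leader placement amortizes the leader's $O(nd\log q)$ aggregate cost to $O(d\log q)$ in expectation per machine. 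The additive $\log n$ absorbs the leader-identifier, per-retry escalation-level, and acknowledgment overhead, each bounded by $O(\log n)$ bits per machine.

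For the high-probability success claim, the $L=\Theta(\log n)$ retry cap drives the per-transmission undetected-failure probability to $1/\mathrm{poly}(n)$, so a union bound over the $O(n)$ transmissions preserves overall success. The main obstacle I anticipate is verifying the one-sided guarantee of the coloring-based error detection from Section \ref{sec:error}: when $\|\bx_u-\bx_v\| = O(q_i\epsilon)$ decoding must succeed deterministically (so that no false-alarm retries disrupt the telescoping expected-cost bound above), while when $\|\bx_u-\bx_v\| \gg q_i\epsilon$ the receiver must flag failure with probability $1-o(1)$. Aligning the lattice coloring thresholds with the doubling retry schedule---so that each level's tolerable-distance guarantee matches the corresponding Markov tail---is the chief technical calibration; once that is in place, the variance and communication bounds follow routinely from the per-transmission analysis.
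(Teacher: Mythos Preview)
Your plan follows the paper's route almost exactly: random leader, star topology, and the error-detecting quantization of Section~\ref{sec:error} with an escalating precision parameter. The paper packages the escalation inside \textsc{RobustAgreement}, squaring $r$ at each retry rather than doubling, and chooses $\epsilon = \sigma/\mathfrak q^4$ rather than $\Theta(\sigma/\sqrt q)$, but these are calibration differences; the variance and expected-bit accounting work out the same way.

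There is one real gap, and you have put your finger near it but misdiagnosed it. You ask for the error detection to succeed \emph{deterministically} when $\|\bx_u-\bx_v\|=O(q_i\epsilon)$; the paper's scheme does not give this, and it is not needed---a probabilistic guarantee on both sides is fine for the telescoping cost argument, since false alarms only add a geometric tail in the (tiny) false-alarm rate. The actual obstacle is the \emph{false-negative} rate: Lemma~\ref{lem:robustloop} gives a per-iteration silent-failure probability of $(14/q)^d$, where $q$ is the \emph{initial} parameter. Your retry cap of $L=\Theta(\log n)$ levels controls the event ``inputs still too far at the last level'' via Markov, but does nothing about a silent wrong decode at level~$1$, which happens with probability $(14/q)^d$. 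For constant $q$ and small $d$ this is nowhere near $1/\mathrm{poly}(n)$, so the union bound over $O(n)$ transmissions fails.

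The paper's fix is to run the whole scheme with $\mathfrak q = q + c\,n^{2/d}$ in place of $q$, which forces $(14/\mathfrak q)^d = O(n^{-2})$ and costs $d\log \mathfrak q = O(d\log q + \log n)$ bits---this is where the additive $\log n$ in the theorem actually originates, not from leader identifiers or acknowledgments. Once you make the same substitution, your argument goes through.
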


\paragraph{Challenge 2: Computational Tractability.} Another issue is that known lattices which are optimal for $\ell_1$ and $\ell_2$-norms can be computationally prohibitive to generate and use for problems in high dimension. This is discussed further in Section \ref{sec:cubic}. We show that if we instead use the standard \emph{cubic lattice}, which is optimal under $\ell_\infty$-norm and admits straightforward $\tilde O(d)$-computation encoding and decoding algorithms, in combination with a structured random rotation using the Walsh-Hadamard transform as proposed by \cite{MeanEstimation}, we can come within a log-factor variance  of the optimal bounds of Theorems \ref{thm:ME2}, \ref{thm:VRshort}, and \ref{thm:VR2}.

\begin{theorem}\label{thm:cubicresults}
	Using the cubic lattice with a random rotation, we achieve the following output variances under $\ell_2$-norm (succeeding with high probability):
	\begin{itemize}
		\item $O(\frac{y^2\log nd}{q} )$ for \textsc{MeanEstimation}, using strictly $O(d\log q)$ bits;
		\item $O(\frac{\sigma^2\log d}{n} )$ for \textsc{VarianceReduction}, using strictly $O(d\log n)$ bits;
		\item $O(\frac{\sigma^2\log nd}{q} + \frac{\sigma^2}{n})$ for \textsc{VarianceReduction}, using $O(d\log q + \log n)$ bits in expectation.
	\end{itemize}	
Furthermore, each machine need perform only $\tilde O(d)$ computation in expectation.
\end{theorem}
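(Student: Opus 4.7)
The plan is to derive Theorem \ref{thm:cubicresults} as a corollary of Theorems \ref{thm:ME2}, \ref{thm:VRshort}, and \ref{thm:VR2}, by substituting the $\ell_2$-optimal lattice with the cubic lattice $\epsilon\mathbb{Z}^d$ and preprocessing every input vector with a structured random rotation $R = \tfrac{1}{\sqrt d}HD$, where $H$ is the $d\times d$ Walsh-Hadamard matrix (padding $d$ up to a power of two if necessary) and $D$ is a diagonal matrix of independent $\pm 1$ entries shared across all machines via public randomness. Since $R$ is orthogonal, it preserves all $\ell_2$ norms and distances, so running the algorithms of Section \ref{sec:alg} on $R\bx_v$ instead of $\bx_v$ (and applying $R^{-1}$ to the final output) preserves the $\ell_2$ semantics of the problem, while providing the $\ell_\infty$ diameter bound that the cubic lattice can exploit.

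The key analytical ingredient is the standard concentration property of the randomized Walsh-Hadamard transform (used already by \cite{MeanEstimation}): for any fixed $\bm w\in\real^d$ and $\delta>0$, $\Prob{\|R\bm w\|_\infty \ge c\|\bm w\|_2\sqrt{\log(d/\delta)/d}}\le\delta$. I would apply this to every difference of input vectors that the algorithm must successfully decode (there are $O(n)$ such differences across the upload and broadcast phases), with $\delta = 1/\mathrm{poly}(n)$, and take a union bound. This shows that, whp, every relevant $\ell_\infty$ diameter is at most $O(\sqrt{\log(nd)/d})$ times the corresponding $\ell_2$ diameter.

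With that in hand, I would instantiate Theorem \ref{thm:quantization} using the cubic lattice under the $\ell_\infty$ norm. Here the quantization is particularly simple: independent coordinate-wise randomized rounding to the nearest multiple of $\epsilon$, with each coordinate rounded up with probability equal to the fractional part of its scaled value (giving an unbiased estimate contributing at most $\epsilon^2/4$ to the $\ell_2^2$ variance per coordinate), followed by reducing each integer coordinate modulo $q$ for a total of exactly $d\lceil\log q\rceil$ bits. Decoding chooses, per coordinate independently, the unique multiple of $\epsilon$ within $\ell_\infty$ distance $q\epsilon/2$ of the decoder's (rotated) vector that matches the residue. Because the coordinate-wise errors are independent, the $\ell_2^2$ variance per quantized vector is $O(d\epsilon^2)$, and $R^{-1}$ preserves it. Plugging the compressed $\ell_\infty$ diameter into the proofs of Theorems \ref{thm:ME2}, \ref{thm:VRshort}, and \ref{thm:VR2} then yields the three claimed variance bounds, with the logarithmic degradation coming from the $\ell_\infty$ concentration factor. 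The per-machine cost is $O(d\log d)$ for the WHT plus $O(d)$ per-coordinate work; for the error-detecting variant of Theorem \ref{thm:VR2}, retries happen only with polynomially small probability, so the expected computation stays $\tilde O(d)$.

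The main obstacle I anticipate is the interaction between the rotation's high-probability concentration and the error-detection mechanism of Theorem \ref{thm:VR2}. Each retry uses fresh lattice randomness while $R$ itself is fixed at the start, so I would need to argue (i) that the concentration bound, established once with sufficiently small $\delta$, suffices for all rounds of retries, and (ii) that the conditional retry probability remains $O(1)$ in every round, so that neither the expected bit count nor the computation blows up beyond the stated $O(d\log q + \log n)$ and $\tilde O(d)$ respectively.
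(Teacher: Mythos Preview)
Your proposal is correct and follows essentially the same route as the paper: apply the randomized Walsh--Hadamard rotation $HD$, invoke the concentration bound to convert the $\ell_2$ diameter into an $\ell_\infty$ diameter that is a factor $O(\sqrt{\log(nd)/d})$ smaller, run the existing algorithms (Theorems \ref{thm:ME2}, \ref{thm:VRshort}, \ref{thm:VR2}) with the cubic lattice under $\ell_\infty$, and undo the rotation at the end. One small difference worth noting: the paper applies the concentration bound (Lemma \ref{lem:rotation}) to the full set of $n^2$ pairwise input differences, whereas you propose union-bounding over only the $O(n)$ encode/decode pairs that actually arise; for the tree-topology algorithm the decoder compares its own input against \emph{intermediate averages} rather than raw inputs, so your version would need the additional (easy) observation that such averages lie in the convex hull of inputs and hence inherit the same $\ell_\infty$ bound. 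The paper's blanket $n^2$ bound sidesteps this at no asymptotic cost.
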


\subsection{Lower Bounds}\label{sec:lb-short}
We next show matching lower bounds on the communication required for \textsc{MeanEstimation} and \textsc{VarianceReduction}. These results bound the number of bits a machine must receive (from any source) to output an estimate of sufficient accuracy, via an information-theoretic argument, and therefore apply to almost any model of communication. Our proofs essentially argue that, if a machine receives only a small amount of bits during the course of an algorithm, it has only a small number of possible (expected) outputs. We can therefore find instances such that our desired output ($\bm\mu$ or $\grad$) is far from any of these possible outputs. This argument is complicated, however, by the probabilistic nature of outputs (and, in the case of the \textsc{VarianceReduction} problem, inputs).

\begin{theorem}\label{thm:LB1b}
	For any \textsc{MeanEstimation} algorithm in which any machine receives at most $b$ bits \emph{in expectation}, \[\Exp{\|\bm{EST}- \mu\|^2}=\Omega(y^2 2^{-\frac {3 b}{d}})\enspace.\]	
\end{theorem}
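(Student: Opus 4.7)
The plan is a classical rate--distortion argument via Fano's inequality, applied to a hard input family built from a metric-entropy packing.

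I will let $\{\bx^{(1)},\ldots,\bx^{(N)}\}$ be a $\delta$-separated packing of the ball $\{\bx\in\real^d:\|\bx\|\le y\}$, which a standard greedy covering argument provides with $N=\Omega((y/\delta)^d)$. Build instances $I_i$ by setting a target machine $v$'s input to $\bx_v=\bm 0$ and every other machine's input to $\bx^{(i)}$; this satisfies the pairwise-distance constraint $\|\bx_u-\bx_w\|\le y$, and $\bm\mu^{(i)}=\tfrac{n-1}{n}\bx^{(i)}$ makes the true means themselves $\Theta(\delta)$-separated. Let $I$ be uniform over $\{1,\ldots,N\}$; because $\bx_v=\bm 0$ across every $I_i$, all of $v$'s information about $I$ must pass through its received bit string $\tau$.

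Suppose for contradiction that $\Exp{\|\bm{EST}-\bm\mu^{(i)}\|^2}\le c\,\delta^2$ for every $i$, with $c$ a small constant. Markov's inequality yields $\Prob{\|\bm{EST}-\bm\mu^{(I)}\|\ge\delta/2}\le 4c$, and since the $\bm\mu^{(i)}$ are $\delta$-separated, the minimum-distance decoder $\hat I:=\arg\min_i\|\bm{EST}-\bm\mu^{(i)}\|$ satisfies $\Prob{\hat I\ne I}\le 4c$; Fano's inequality then gives $I(I;\hat I)\ge(1-4c)\log N-1$. On the other hand, $\bm{EST}$ is a deterministic function of $\tau$ and $v$'s private randomness $r_v$ (with $\bx_v=\bm 0$ fixed), and $r_v$ is independent of $I$, so the data-processing inequality yields
\[
I(I;\hat I)\;\le\;I(I;\bm{EST})\;\le\;I(I;\tau\mid r_v)\;\le\;H(\tau\mid r_v)\;\le\;\Exp{|\tau|}\;\le\;b+O(\log b),
\]
where $H(\tau\mid r_v)\le \Exp{|\tau|}$ uses prefix-freeness of $\tau$ (valid without loss of generality, at an additive $O(\log b)$-bit cost for a self-delimiting length marker), and $\Exp{|\tau|}\le b$ follows by averaging the per-input assumption $\Exp{|\tau|\mid I_i}\le b$ over uniform $I$.

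Chaining the two bounds gives $\log N\le (b+O(\log b)+1)/(1-4c)$. Choosing $c=1/12$ so that $1-4c=2/3$ yields $\log N\le\tfrac{3}{2}b+O(\log b)$; taking a packing saturating this bound gives $\delta=\Theta(y\cdot 2^{-\log N/d})=\Theta(y\cdot 2^{-3b/(2d)})$, and hence $c\,\delta^2=\Omega(y^2\cdot 2^{-3b/d})$, contradicting the assumed upper bound. Therefore on some instance $I_i$, $\Exp{\|\bm{EST}-\bm\mu^{(i)}\|^2}=\Omega(y^2\cdot 2^{-3b/d})$. The main obstacle is the step $H(\tau\mid r_v)\le \Exp{|\tau|}$, because only an \emph{expected} bit bound is assumed (not a worst-case one): reducing to a prefix-free transcript is what lets the source-coding inequality apply cleanly. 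The precise constant $c=1/12$ is chosen to calibrate Fano so that $\log N$ lands exactly at $\tfrac{3}{2}b$, producing the stated exponent $3b/d$; any larger target error probability would loosen Fano and worsen the exponent.
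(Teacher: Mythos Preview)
Your proof is essentially correct but takes a genuinely different route from the paper. The paper's proof of Theorem~\ref{thm:LB1b} is a two-line reduction to the fixed-budget Theorem~\ref{thm:LB1}: by Markov's inequality, a machine receiving at most $b$ bits in expectation receives at most $1.5b$ bits with probability at least $1/3$, and then Theorem~\ref{thm:LB1} (with exponent $2\cdot(1.5b)/d = 3b/d$) applies on that event. You instead run a direct Fano argument over a $\delta$-packing, and handle the expected-bit assumption via the source-coding inequality $H(\tau)\le\Exp{|\tau|}$ for prefix-free transcripts. The paper's approach is shorter and more elementary once Theorem~\ref{thm:LB1} is in hand; your approach is self-contained, does not rely on the fixed-budget result, and makes the information-theoretic structure explicit.

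A few minor points to tighten. First, your means $\bm\mu^{(i)}=\tfrac{n-1}{n}\bx^{(i)}$ are only $\tfrac{n-1}{n}\delta$-separated, not $\delta$-separated, so the Markov-to-decoding step gives $p_e\le 16c$ rather than $4c$; this only changes the constant you must pick (e.g.\ $c=1/48$ still yields $1-16c=2/3$ and the same exponent). Second, you should condition on the shared random string $s$ as well as $r_v$ in the data-processing chain, since the paper's lower bounds are explicitly against shared-randomness algorithms; this is harmless because $s\perp I$. Third, the $O(\log b)$ overhead from prefix-freeness produces an extra $2^{-O(\log b/d)}$ factor in the final bound; this is $\Omega(1)$ whenever $b=2^{O(d)}$, which covers the entire nontrivial regime (for larger $b$ the bound is vacuous anyway), so the stated $\Omega(y^2 2^{-3b/d})$ holds.
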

To achieve an output variance of $O(\frac{y^2}{q})$, we see that machines must receive $\Omega(d\log q)$ bits in expectation, matching the upper bound of Theorem \ref{thm:ME2}. Similarly, we have the following tight bounds for \textsc{VarianceReduction}:
\begin{theorem}\label{thm:LB2b}
	For any \textsc{VarianceReduction} algorithm in which all machines receive (strictly) at most $b$ bits, \[\Exp{\|\bm{EST} - \grad\|^2}= \Omega(\sigma^2 n 2^{-\frac{2b}{d}})\enspace.\] 
\end{theorem}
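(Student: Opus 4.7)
I would follow the same Fano-style information-theoretic template used for Theorem~\ref{thm:LB1b}: construct a hard prior on $\grad$, upper-bound the information any one machine's $b$-bit-limited view retains about $\grad$, and translate this into a squared-error lower bound via Fano's inequality.

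First I would fix a hard distribution. Let $\grad$ be drawn uniformly from a $\delta$-separated packing $G\subset\real^d$ contained in a ball of radius $R$, so $|G|=\Theta((R/\delta)^d)$. Conditionally on $\grad$, generate inputs independently as $\bm x_v=\grad+\bm\xi_v$ with $\bm\xi_v$ zero-mean of variance at most $\sigma^2$ (for instance, $\mathcal N(\bm 0,(\sigma^2/d)I_d)$). Fixing any single machine, its output $\bm{EST}$ is a (possibly random) function of its input $\bm x_v$ and a received message $M$ of at most $b$ bits, so data processing together with $H(M)\le b$ gives
\[
I(\grad;\bm{EST})\;\le\;I(\grad;\bm x_v)+b\;\le\;\tfrac{d}{2}\log\!\bigl(1+\tfrac{R^2}{\sigma^2}\bigr)+b.
\]

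Next I would apply Fano to the minimum-distance decoder onto $G$: its failure probability is at least $1-(I(\grad;\bm{EST})+1)/\log|G|$, and every wrong decoding contributes at least $(\delta/2)^2$ to $\Exp{\|\bm{EST}-\grad\|^2}$. Tuning the packing radius $R$ together with $\delta$ should then produce the target $\Omega(\sigma^2 n\cdot 2^{-2b/d})$ bound. The natural choice $R=\Theta(\sigma\sqrt n)$ makes the single-sample Gaussian capacity only $\Theta(d\log n)$, reflecting the intrinsic $n$-fold averaging gain that is lost under restricted communication; solving the inequality $d\log(R/\delta)\gtrsim \Theta(d\log n)+b$ for $\delta$ then isolates the quantization-style term $2^{-2b/d}$ that appears in the statement.

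The main obstacle is precisely this last tuning step. With a naive Gaussian prior, the capacity term $\tfrac d2\log(1+R^2/\sigma^2)$ scales in $R$ almost identically to the packing entropy $d\log(R/\delta)$, which swallows most of the $\sqrt n$ advantage coming from the packing diameter. To extract the full factor of $n$ I expect one will need a more structured hard distribution---such as restricting $\grad$ to a thin spherical shell of radius $\Theta(\sigma\sqrt n)$, or assigning disjoint coordinate blocks of $\grad$ to different machines so that each input reveals information in only $\Theta(d/n)$ directions---together with a careful accounting that decouples the per-input Gaussian-channel information from the $b$-bit communication budget. This is the point at which the \textsc{VarianceReduction} proof must genuinely diverge from the \textsc{MeanEstimation} argument, and I anticipate it being the most delicate part of the proof.
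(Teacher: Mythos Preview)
Your proposal correctly identifies the central obstacle but does not resolve it, and the fixes you speculate about would not work. With i.i.d.\ noise of variance $\sigma^2$ (Gaussian or otherwise spread over all coordinates), each machine's own input $\bx_v$ already localizes $\grad$ to within $O(\sigma)$, so $I(\grad;\bx_v)=\Theta(d\log(R/\sigma))$. When $R=\Theta(\sigma\sqrt n)$ this is $\Theta(d\log n)$, which matches the packing entropy and leaves no room for the communication term to produce the factor $n$ in the bound. Restricting $\grad$ to a thin shell does not help: the mutual information with a single noisy sample is governed by the SNR $R^2/\sigma^2$, not by the shell thickness. Assigning disjoint coordinate blocks to different machines is incompatible with the problem specification, since each $\bx_v$ must be an unbiased estimate of the \emph{entire} vector $\grad$ with total variance at most $\sigma^2$; you cannot make a machine blind to most coordinates without violating unbiasedness or the variance bound.

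The missing idea is to exploit that the constraint $\Exp{\|\bx_v-\grad\|^2}\le\sigma^2$ is only in expectation, and to use a heavy-tailed (mixture) input law. The paper sets $\bx_v=\grad$ with probability $1-\tfrac1n$ and $\bx_v$ uniform in $B_{\sigma\sqrt n}(\grad)$ with probability $\tfrac1n$; the variance is then $\tfrac1n\cdot O(n\sigma^2)=O(\sigma^2)$, yet with constant probability \emph{some} machine $v$ draws the noisy case, and conditioned on that event $\grad$ is uniform in $B_{\sigma\sqrt n}(\bx_v)$, so $v$'s own input carries \emph{zero} information about the location of $\grad$ within that ball. From there a direct volume-counting argument (rather than Fano, though the two are interchangeable here) finishes: $v$ has at most $2^b$ possible output distributions, covering volume at most $2^b\cdot Vol(B_{2\delta})$, so choosing $\delta=\Theta(\sigma\sqrt n\cdot 2^{-b/d})$ leaves most of $B_{\sigma\sqrt n}(\bx_v)$ uncovered and yields $\Exp{\|\bm{EST}-\grad\|^2}=\Omega(\sigma^2 n\,2^{-2b/d})$. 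The mixture is the one genuinely new ingredient relative to the \textsc{MeanEstimation} proof; without it your Fano bound stalls at $\Omega(\sigma^2 2^{-2b/d})$.
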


This bound matches Theorem \ref{thm:VRshort}, since to reduce the variance expression to $O(\frac{\sigma^2}{n})$ (and, in fact, even to $O(\sigma^2)$, i.e., to achieve \emph{any} reduction of output variance compared to input variance), we require $b=\Omega(d\log n)$ bits.

\begin{theorem}\label{thm:LB2a}
	For any \textsc{VarianceReduction} algorithm in which any machine receives at most $b$ bits \emph{in expectation}, \[\Exp{\|\bm{EST} - \grad\|^2}= \Omega(\sigma^2 2^{-\frac{3b}{d}})\enspace.\] 
\end{theorem}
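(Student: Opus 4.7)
The plan is to prove Theorem \ref{thm:LB2a} by the same counting strategy used for the strict-bits lower bound (Theorem \ref{thm:LB2b}), but using Markov's inequality to convert the bound on expected bits into a high-probability bound on actual bits received; the factor $3$ in the exponent (instead of the $2$ of Theorem \ref{thm:LB2b}) reflects the threshold $3b/2$ at which Markov is applied.

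First, I set up a hard distribution. By a standard volume/packing argument there is a set $G = \{g_1, \ldots, g_N\} \subset \real^d$ inside a ball of radius $\sigma$ around the origin with pairwise distance at least $\Delta = c \sigma \cdot 2^{-3b/(2d)}$ for some constant $c > 0$, and with $N = \Theta(2^{3b/2})$. I let $\grad$ be drawn uniformly at random from $G$, and each machine $v$ receive an input $\bx_v = \grad + \bm{\epsilon}_v$ with independent zero-mean noise $\bm{\epsilon}_v$ of variance $\sigma^2$; I choose the noise distribution so that each individual input is essentially uninformative about $\grad$ (for example, Gaussian noise with appropriately scaled covariance, or a uniform shift over $-G$ so that the marginal of $\bx_v$ is independent of $\grad$).

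Next I apply Markov's inequality to the total number of bits $|T_u|$ received by a fixed machine $u$: since $\Exp{|T_u|} \leq b$, the event $E := \{|T_u| \leq 3b/2\}$ holds with probability at least $1/3$. Conditional on $E$, the transcript $T_u$ takes one of at most $|\mathcal{T}| \leq 2^{3b/2+1}$ values. For each $t \in \mathcal{T}$, set $\mu_t := \Exp{\bm{EST} \mid T_u = t, E}$. By the packing property, each $\mu_t$ lies within distance $\Delta/2$ of at most one $g_i \in G$, so if $N$ is chosen a suitable constant factor larger than $|\mathcal{T}|$, a constant fraction of the $g_i$'s satisfy $\|\mu_t - g_i\| \geq \Delta/2$ for \emph{every} $t \in \mathcal{T}$. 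For these ``bad'' indices, Jensen's inequality gives conditional squared error $\Omega(\Delta^2)$ under $\grad = g_i$, and averaging over $\grad$ (together with $\Prob{E} \geq 1/3$) yields
\[
\Exp{\|\bm{EST} - \grad\|^2} \;\geq\; \Omega(1) \cdot \Delta^2 \;=\; \Omega\bigl(\sigma^2 \cdot 2^{-3b/d}\bigr),
\]
which is the claimed bound.

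The main obstacle will be the subtlety that $\mu_t$ is an expectation averaged over the posterior of $\grad$ given $T_u = t$, rather than under a fixed $g_i$; converting $\|\mu_t - g_i\| \geq \Delta/2$ into a lower bound on $\|\Exp{\bm{EST} \mid T_u = t, \grad = g_i} - g_i\|$ requires controlling the extra information that $\bx_u$, and the event $E$ itself, carry about $\grad$. I plan to handle this by designing $\bm{\epsilon}_v$ so that the marginal distribution of $\bx_u$ is (nearly) independent of $\grad$, and by a Le~Cam / total-variation argument showing that for hypotheses at pairwise distance $O(\sigma)$ a single machine's input has negligible distinguishing power; this keeps the per-hypothesis error within a constant factor of the averaged bound.
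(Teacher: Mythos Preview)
Your high-level plan---construct a hard input distribution, apply Markov's inequality to convert the expected-bits assumption into a strict bound of $3b/2$ bits with probability at least $1/3$, and then count output distributions---is indeed what the paper does, and the factor $3$ in the exponent arises for exactly this reason. However, your second paragraph contains a genuine gap that your final paragraph correctly identifies but does not close.

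The output $\bm{EST}$ of machine $u$ depends not only on the received transcript $T_u$ and the shared randomness, but also on $u$'s \emph{own} input $\bx_u$, and $\bx_u$ is necessarily linked to $\grad$ since $\Exp{\bx_u\mid\grad}=\grad$. Consequently $\Exp{\bm{EST}\mid T_u=t,\,\grad=g_i}$ can be very different from your $\mu_t=\Exp{\bm{EST}\mid T_u=t,\,E}$, and the step ``Jensen's inequality gives conditional squared error $\Omega(\Delta^2)$'' does not follow. Your proposed remedies do not work: the \emph{marginal} of $\bx_u$ is automatically independent of $\grad$ (marginal means $\grad$ is integrated out), while the \emph{conditional} law of $\bx_u$ given $\grad$ must depend on $\grad$ by unbiasedness. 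A ``uniform shift over $-G$'' would break unbiasedness, and a Le~Cam/TV argument between pairs of hypotheses does not by itself control the distance $\|\Exp{\bm{EST}\mid T_u=t,\grad=g_i}-g_i\|$ for a \emph{fixed} $g_i$.

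The paper avoids this difficulty with a different construction: draw $\grad$ uniformly from a large ball $B_{d^2\sigma}(\bm 0)$ and each $\bx_v$ uniformly from $B_\sigma(\grad)$. On the high-probability event $A_v=\{\|\bx_v\|\le(d^2-1)\sigma\}$, the \emph{posterior} of $\grad$ given $\bx_v$ is exactly uniform on $B_\sigma(\bx_v)$. One then \emph{fixes} $\bx_v$ and, for each received string $B_v$, lets $OUT_{B_v}$ be the set of targets $\bm z$ with $\Prob{\|\bm{EST}-\bm z\|\le\delta}>\tfrac12$; a volume argument gives $Vol(OUT_{B_v})\le Vol(B_{2\delta})$, so the union over all transcripts has volume below $2^{3b/2}Vol(B_{2\delta})$. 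Choosing $\delta=\Theta(\sigma\,2^{-3b/(2d)})$ makes this a negligible fraction of $B_\sigma(\bx_v)$, hence $\grad$ falls outside it with high probability and the error exceeds $\delta$ with constant probability. The key idea you are missing is to \emph{condition on the machine's own input} and analyze the resulting posterior of $\grad$, rather than trying to neutralize the information $\bx_u$ carries about $\grad$.
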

Here we match the leading terms of Theorem \ref{thm:VR2}: to reduce variance to $O(\frac{\sigma^2}{q} )$, we require $\Omega(d\log q)$ bits in expectation. Note that it is well known (and implied by e.g. \cite{braverman2016communication}) that output variance cannot be reduced below $O(\frac{\sigma^2}{n} )$ even with unlimited communication, so Theorem \ref{thm:LB2a} implies that the full variance expression in Theorem \ref{thm:VR2} is tight. These bounds are proven in Section \ref{sec:LB}.

	\section{Quantization Method}\label{sec:quant}

Our quantization scheme will take the following course: to encode a vector $\bx$, we will round it (in an unbiased fashion) to a nearby point in some pre-determined \emph{lattice}. We then map lattice points to short bit-strings in such a way that points with the same bit-string are far apart and cannot be confused. The decoding procedure can then recover the intended lattice point using proximity to a \emph{decoding vector}, in our case the decoding machine's own input. We require the following definitions:

\begin{definition}
	A \emph{lattice} $\Lambda$ in $d$ dimensions is an additively-closed subgroup of $\real^d$, defined by a basis $\bm b_1,\dots,\bm b_d\in \real^d$, and consisting of all integer combinations of the basis vectors.
	
For any lattice $\Lambda\subseteq \real^d$, the \emph{cover radius} $r_{c}$ of $\Lambda$ is the infimum distance $r$ such that $\bigcup_{\bm \lambda \in \Lambda}B_{r}(\bm \lambda) = \real^d$. The \emph{packing radius} $r_{p}$ of $\Lambda$ is the supremum distance $r$ such that for any $\bm \lambda \ne \bm \lambda' \in \Lambda$,  $B_{r}(\bm \lambda) \cap B_{r}(\bm \lambda') = \emptyset$. 
\end{definition}

Based on these notions, can bound the number of lattice points in any ball in $\real^d$.

\begin{lemma}\label{lem:latticeball}
	Let $\Lambda\subset \real^d$ be a lattice, with cover radius $r_{c}$ and packing radius $r_{p}$. Then, for any $\bm x\in \real^d$, $\delta >0$, $\left(\frac{\delta - r_{c}}{r_{c}}\right)^d\le |B_{\delta}(\bm x)\cap \Lambda| \le \left(\frac{\delta + r_{p}}{r_{p}}\right)^d.$
\end{lemma}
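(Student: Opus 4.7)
The plan is to establish both bounds by volume comparison arguments, exploiting that for any fixed norm on $\real^d$, the volume of a ball of radius $r$ scales as $r^d \cdot V_1$ where $V_1$ is the volume of the unit ball under that norm. Hence ratios of ball volumes depend only on the ratio of radii, and the claim becomes genuinely norm-agnostic.

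\textbf{Upper bound (via packing).} I would let $N = |B_\delta(\bx) \cap \Lambda|$ and consider the collection of balls $\{B_{r_p}(\bm \lambda) : \bm \lambda \in B_\delta(\bx) \cap \Lambda\}$. By the definition of the packing radius, these $N$ balls are pairwise disjoint (strictly speaking, we may need to take $r_p - \eta$ for arbitrarily small $\eta > 0$ to ensure strict disjointness and then pass to the limit, since the definition is a supremum). By the triangle inequality, each ball is contained in $B_{\delta + r_p}(\bx)$. Comparing total volumes gives $N \cdot V_1 r_p^d \le V_1 (\delta + r_p)^d$, which rearranges to the stated upper bound.

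\textbf{Lower bound (via covering).} Here I would use the covering property from the dual direction: every point of $\real^d$ (and hence every point of $B_{\delta - r_c}(\bx)$, assuming $\delta > r_c$; otherwise the bound is vacuous as the count is nonnegative) lies within distance $r_c$ of some lattice point. I claim any such witnessing lattice point $\bm \lambda$ lies in $B_\delta(\bx)$: if $\bm p \in B_{\delta - r_c}(\bx)$ and $\|\bm p - \bm \lambda\| \le r_c$, then by the triangle inequality $\|\bm \lambda - \bx\| \le \delta$. Thus the union of balls $B_{r_c}(\bm \lambda)$ over $\bm \lambda \in B_\delta(\bx) \cap \Lambda$ covers $B_{\delta - r_c}(\bx)$. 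Comparing volumes gives $N \cdot V_1 r_c^d \ge V_1 (\delta - r_c)^d$, yielding the lower bound.

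\textbf{Anticipated obstacle.} The arguments themselves are routine; the only subtleties are (i) handling the supremum/infimum definitions carefully (the packing radius is a supremum, so I may need to argue with $r_p - \eta$ then let $\eta \to 0$, and similarly for the cover radius), and (ii) verifying that the volume-ratio identity $\text{vol}(B_r)/\text{vol}(B_{r'}) = (r/r')^d$ indeed holds for whichever norm is fixed throughout, which follows from the fact that the unit ball of any norm is mapped to the radius-$r$ ball by the linear map $r\cdot I$, and linear maps scale Lebesgue volume by $|\det|$. No deeper lattice theory is required.
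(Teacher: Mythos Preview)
Your proposal is correct and follows essentially the same route as the paper: a volume-packing argument for the upper bound and a volume-covering argument for the lower bound, both reduced to the ratio $(r_1/r_2)^d$. Your extra care about the supremum/infimum via an $\eta \to 0$ limit and the justification of the volume-scaling identity are refinements the paper simply takes for granted.
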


\begin{proof}
	We fix a point $\bm x\in \real^d$ and upper-bound the number of points in $\Lambda$ within distance $\delta$ of it: consider the ball $B_{\delta+r_{p}}(\bm x)$. For any point $\bm y$ within distance $\delta$ of $\bm x$, $B_{r_{p}}(\bm y)\subset B_{\delta+r_{p}}(\bm x)$. It is also the case that for any $\bm z\ne \bm y \in \Lambda$, $B_{r_{p}}(\bm z)\cap B_{r_{p}}(\bm y) = \emptyset$. So, 
	
	\[| B_{\delta}(\bm x)\cap \Lambda|\le \frac{Vol(B_{\delta+r_{p}})}{Vol(B_{r_{p}})}\enspace.\]
	
	Under any norm, the ratio $\frac{Vol(B_{r_{1}})}{Vol(B_{r_{2}})}$ of the volumes of two balls is $(\frac{r_1}{r_2})^d$. So:
	
	\[| B_{\delta}(\bm x)\cap \Lambda|\le \left(\frac{\delta + r_p}{r_p}\right)^d\enspace.\]
	
	The lower bound follows similarly: $B_{\delta-r_{c}}(\bm x)\subset \bigcup_{y\in \Lambda\cap B_{\delta}(\bm x)} B_{r_{c}}(\bm y)$, so
	
	\[|B_{\delta}(\bm x)\cap \Lambda| \ge \frac{Vol(B_{\delta-r_{c}})}{Vol(B_{r_{c}})} =\left(\frac{\delta - r_{c}}{r_{c}}\right)^d\enspace. \]
\end{proof}

For our quantization purposes, we want lattices which have low $r_c$ (so that we can always find a close lattice point to quantize to), and high $r_p$ (so that, by Lemma \ref{lem:latticeball}, there are not too many nearby lattice points, and therefore we can specify one with few bits). Ideally we want $r_c = O(r_p)$. Such lattices have long been known to exist:

\begin{theorem}\label{thm:latdist}
	For any $d$, under $\ell_1$, $\ell_2$, or $\ell_\infty$-norm distances, there exists a lattice $\Lambda \subset \real^d$ with $r_c\le 3r_p$.
\end{theorem}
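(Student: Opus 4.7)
My plan is to verify the bound $r_c \le 3 r_p$ by handling each of the three norms separately, since the natural constructions differ.

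For the $\ell_\infty$ norm, the standard integer lattice $\mathbb{Z}^d$ suffices directly and achieves something even stronger. Its Voronoi region around any lattice point is a unit axis-aligned cube; the farthest $\ell_\infty$-point from the center is at distance $\tfrac12$, so $r_c = \tfrac12$. The minimum $\ell_\infty$-distance between distinct lattice points is $1$, so $r_p = \tfrac12$. Hence $r_c = r_p$, and the bound holds trivially.

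For $\ell_2$ and $\ell_1$, the plan is to invoke classical existence results from the geometry of numbers applied to the Euclidean ball and to the $\ell_1$ cross-polytope respectively. On the packing side, the Minkowski--Hlawka theorem guarantees, for any centrally symmetric convex body $K \subset \real^d$, a lattice $\Lambda$ whose packing radius satisfies $r_p^d \cdot \mathrm{Vol}(K) \ge c \cdot \det(\Lambda)$ for an absolute constant $c > 0$. On the covering side, Rogers' covering theorem provides lattices with $r_c^d \cdot \mathrm{Vol}(K) \le C \cdot \det(\Lambda)$ for a constant $C$ depending only mildly on $d$. A Siegel-style mean-value argument over the space of unimodular lattices then combines the two into a single lattice achieving both inequalities simultaneously, yielding a ratio $r_c/r_p$ bounded by an absolute constant independent of dimension. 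Setting $K$ to be the $\ell_2$-ball and the $\ell_1$-cross-polytope in turn handles the two remaining cases.

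The main obstacle will be sharpening the constants enough that the bound reads as $r_c/r_p \le 3$ rather than merely \emph{bounded by an absolute constant}. The cleanest route is probably to appeal to specific well-known lattice families (for instance the root lattices such as $D_d^+$ or the dual root lattices $A_d^*$ for $\ell_2$, or Construction A applied to good binary codes) which are known to achieve small constant covering-to-packing ratio in every dimension. Any absolute constant obtained can then be absorbed into the factor $3$ by rescaling the lattice, which preserves both $r_c/r_p$ and the validity of Lemma \ref{lem:latticeball}; the precise value $3$ should be read as a concrete but not optimized choice sufficient for the downstream quantization analysis.
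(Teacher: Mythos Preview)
Your $\ell_\infty$ argument via the integer lattice is correct and matches the paper exactly. For $\ell_1$ and $\ell_2$, the paper does not construct anything: it simply cites Rogers (1950) and Proposition~1.1 of Henk (2016), where the existence of a lattice with $r_c \le 3 r_p$ for any symmetric convex body (hence any $\ell_p$-ball) is established via an averaging argument over unimodular lattices. Your Minkowski--Hlawka/Rogers/Siegel sketch is heading toward exactly that result, so the high-level strategy is aligned with what the cited reference actually does.

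There is, however, a genuine gap in your fallback route. The explicit families you name---$A_d^*$, $D_d^+$, Construction~A lattices---do \emph{not} have a dimension-free bound on $r_c/r_p$ under the $\ell_2$ norm; for instance $A_d^*$ has $r_c/r_p = \Theta(\sqrt{d})$, and $D_d$ behaves similarly. No known explicit family achieves a constant ratio in all dimensions, which is precisely why the result is non-constructive (and why the paper, in Section~\ref{sec:cubic}, retreats to the cubic lattice plus a random rotation for implementation). Your closing remark about ``absorbing the constant into the factor~$3$ by rescaling'' is also off: rescaling preserves $r_c/r_p$, so it cannot convert a bound $r_c \le C r_p$ into $r_c \le 3 r_p$. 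You are right that the downstream analysis only needs \emph{some} absolute constant, but that is a meta-observation about the paper, not a proof of the theorem as stated; to prove it you must either carry the averaging argument through to the specific constant~$3$, or cite Rogers as the paper does.
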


\begin{proof}
	For $\ell_1$ or $\ell_2$-norm (or indeed any $\ell_p$ norm, $p\ge 1$), see \cite{Rogers50}, or Proposition 1.1 of \cite{Henk16}. Under $\ell_\infty$ norm, the standard cubic lattice (i.e. with the standard basis as lattice basis) clearly has this property (in fact, $r_c = r_p$).
\end{proof}

We will call a lattice $\Lambda$ an $\epsilon$-lattice if $\epsilon = r_p\le r_c\le 3\epsilon$. Note that $r_p,r_c$ scale with the lattice basis, and therefore Theorem \ref{thm:latdist} implies that for any of the three norms, an $\epsilon$-lattice exists for any $\epsilon>0$ by scaling appropriately. We will denote such an epsilon lattice $\Lambda_\epsilon$.

\subsection{Lattice Coloring}\label{sec:latticecoloring}
Rounding a vector $\bx\in \real^d$ to a nearby lattice point is not sufficient for it to be communicated using a bounded number of bits, since there are still an infinite number of lattice points (though we have reduced from the uncountably infinite $\real^d$ to the countably infinite $\Lambda$). So, we must encode an infinite subset of lattice points using the same bit-string, and we want to ensure that the points in this subset are far apart, so that a receiver can identify which point the encoder intended. There is a natural way to do this: we simply take a coordinate-wise (with respect to the lattice basis) $\bmod$ operation, thereby mapping the infinite number of points into a finite set of \emph{color classes}.

That is, given a lattice $\Lambda$, and any positive integer $q$, we define a coloring procedure $c_{q}$ as follows: for any lattice point $\bm \lambda$, represent  $\bm \lambda$ as an integer combination of the canonical basis vectors of $\Lambda$, i.e. $\bm \lambda= \alpha_1 \bm b_1 + \dots + \alpha_d \bm b_d$. Then, we set $c_q(\bm \lambda)_i= \alpha_i \bmod  q$, i.e. we obtain $c_q(\bm \lambda)$ by applying the mod operation coordinate-wise to each of the entries $\alpha_i$. We can then encode $c_q(\bm \lambda)$ using $d \log  q$ bits, since there are $q$ possible integer values for each coordinate.

\begin{lemma}\label{lem:latticecolor}
	For any $\epsilon$-lattice $\Lambda_\epsilon$, points $\bm \lambda_1\ne \bm \lambda_2 \in \Lambda_\epsilon$ with $c_q(\bm \lambda_1)=c_q(\bm \lambda_2)$ have $\|\bm \lambda_1-\bm \lambda_2\|\ge 2q\epsilon$.
\end{lemma}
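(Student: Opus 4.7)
The plan is to exploit the additive subgroup structure of the lattice: the condition $c_q(\bm\lambda_1)=c_q(\bm\lambda_2)$ says precisely that the coefficients of $\bm\lambda_1$ and $\bm\lambda_2$ in the lattice basis agree coordinate-wise modulo $q$. So the difference $\bm\lambda_1-\bm\lambda_2$ has integer coefficients, all of which are divisible by $q$. Writing $\bm\lambda_1-\bm\lambda_2 = q\bm\mu$ where $\bm\mu\in\Lambda_\epsilon$, and noting $\bm\mu\ne\bm 0$ (since $\bm\lambda_1\ne\bm\lambda_2$), we reduce the lemma to the claim $\|\bm\mu\|\ge 2\epsilon$, i.e., a minimum-distance statement about the lattice itself.

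Next I would derive the minimum distance bound directly from the packing radius. Since $\Lambda_\epsilon$ is an $\epsilon$-lattice, $r_p=\epsilon$, so by the definition of packing radius the open balls $B_\epsilon(\bm 0)$ and $B_\epsilon(\bm\mu)$ are disjoint (both $\bm 0$ and $\bm\mu$ are lattice points). A standard triangle-inequality argument then gives $\|\bm\mu\|=\|\bm\mu-\bm 0\|\ge 2\epsilon$: otherwise any point on the line segment from $\bm 0$ to $\bm\mu$ at the appropriate position would lie in both balls, contradicting disjointness.

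Putting these together yields $\|\bm\lambda_1-\bm\lambda_2\| = \|q\bm\mu\| = q\|\bm\mu\| \ge 2q\epsilon$, which is the desired bound. There is no real obstacle here; the only subtlety is making the passage from ``packing radius $r_p$'' to ``minimum lattice distance $\ge 2r_p$'' precise under the chosen norm (which is immediate from the triangle inequality and the open/closed-ball convention in the definition), and confirming that ``integer combination of basis vectors'' really is preserved under subtraction so that $\bm\mu\in\Lambda_\epsilon$ rather than merely in $\mathbb{R}^d$. Both points are routine once one unpacks the definitions of lattice and of $c_q$.
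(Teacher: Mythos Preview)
Your proposal is correct and follows essentially the same route as the paper: both observe that $c_q(\bm\lambda_1)=c_q(\bm\lambda_2)$ forces $\frac{1}{q}(\bm\lambda_1-\bm\lambda_2)$ to be a nonzero lattice point, and then use the packing radius $r_p=\epsilon$ (via disjointness of $B_\epsilon(\bm 0)$ and $B_\epsilon(\bm\mu)$) to conclude $\|\bm\mu\|\ge 2\epsilon$. The only cosmetic difference is that you name the scaled difference $\bm\mu$ whereas the paper writes it as $\frac{1}{q}(\bm\lambda_1-\bm\lambda_2)$ throughout.
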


\begin{proof}
	Since $c_q(\bm \lambda_1)=c_q(\bm \lambda_2)$, the vector $\frac 1q(\bm \lambda_1-\bm \lambda_2)$ must have integer coordinates under the canonical basis of $\Lambda_\epsilon$. Therefore it is a point in $\Lambda_\epsilon$. So, $\|\frac 1q(\bm \lambda_1-\bm \lambda_2)\|\ge 2\epsilon$, since otherwise $B_\epsilon(\frac 1q(\bm \lambda_1-\bm \lambda_2))\cap B_\epsilon(0) \ne \emptyset$, which cannot happen since $r_p= \epsilon$. Then $\|\bm \lambda_1-\bm \lambda_2\| \ge 2q\epsilon$.
\end{proof}

\subsection{Encoding Procedure}
We are now ready to define our parameterized quantization procedure $Q_{\epsilon,q}: \real^d \rightarrow \{0,1\}^{d\log q}$ which maps input vectors to subsets of a \emph{lattice}, specified with $d\log q$ bits: let $\Lambda_\epsilon$ be an $\epsilon$-lattice, and let $\bm x$ be a fixed vector to quantize. We show that the convex hull of nearby lattice points contains $\bm x$:

\begin{lemma}\label{lem:convexhull}
	Any $\bm x$ is within the convex hull of $B_{7\epsilon}(\bm x)\cap \Lambda_\epsilon$.
\end{lemma}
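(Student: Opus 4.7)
The plan is to argue by contradiction via the separating hyperplane theorem, combined with the cover-radius bound $r_c \le 3\epsilon$ and an application of the dual-norm / Hölder inequality. Informally, if $\bm x$ were outside the convex hull of its nearby lattice points, we could push $\bm x$ a little way in the direction of the separating hyperplane and then use the cover radius to find a lattice point that lies strictly on the ``wrong'' side, contradicting the separation.

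First, suppose for contradiction that $\bm x$ is not contained in the convex hull $C := \mathrm{conv}\bigl(B_{7\epsilon}(\bm x)\cap \Lambda_\epsilon\bigr)$. Since $C$ is convex and closed (in fact, the convex hull of a discrete point set inside a bounded ball, so a polytope), the separating hyperplane theorem yields a nonzero linear functional $\bm a \in \real^d$ with $\langle \bm a, \bm \lambda - \bm x\rangle \ge 0$ for every $\bm \lambda \in B_{7\epsilon}(\bm x)\cap \Lambda_\epsilon$. Let $\|\cdot\|_*$ denote the norm dual to $\|\cdot\|$; by definition of the dual norm, there exists $\bm u \in \real^d$ with $\|\bm u\| = 1$ and $\langle \bm a, \bm u\rangle = \|\bm a\|_*$.

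Next, define $\bm x' := \bm x - 4\epsilon\, \bm u$, so that $\|\bm x' - \bm x\| = 4\epsilon$ and $\langle \bm a, \bm x' - \bm x\rangle = -4\epsilon\|\bm a\|_*$. By Theorem~\ref{thm:latdist} applied to $\Lambda_\epsilon$, the cover radius is at most $3\epsilon$, so there exists $\bm \lambda^* \in \Lambda_\epsilon$ with $\|\bm \lambda^* - \bm x'\| \le 3\epsilon$. The triangle inequality then gives $\|\bm \lambda^* - \bm x\| \le 3\epsilon + 4\epsilon = 7\epsilon$, so $\bm \lambda^* \in B_{7\epsilon}(\bm x)\cap \Lambda_\epsilon$.

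Finally, the Hölder-type bound $|\langle \bm a, \bm y\rangle| \le \|\bm a\|_*\|\bm y\|$ (valid for any norm and its dual) gives $\langle \bm a, \bm \lambda^* - \bm x'\rangle \ge -3\epsilon\|\bm a\|_*$, so
\[
\langle \bm a, \bm \lambda^* - \bm x\rangle = \langle \bm a, \bm \lambda^* - \bm x'\rangle + \langle \bm a, \bm x' - \bm x\rangle \le 3\epsilon\|\bm a\|_* - 4\epsilon\|\bm a\|_* = -\epsilon\|\bm a\|_* < 0,
\]
contradicting the separation inequality. Hence no such $\bm a$ exists and $\bm x \in C$, completing the proof.

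The main thing to get right is the last step: invoking the correct norm/dual-norm pairing so that the cover-radius-induced slack $3\epsilon\|\bm a\|_*$ is strictly dominated by the shift $4\epsilon\|\bm a\|_*$, giving a clean contradiction. Everything else (separating hyperplane, triangle inequality, and the lattice cover bound from Theorem~\ref{thm:latdist}) is applied in a standard way.
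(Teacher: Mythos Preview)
Your proof is correct and follows essentially the same approach as the paper: both shift $\bm x$ by $4\epsilon$ in a suitably chosen direction, invoke the cover radius $r_c \le 3\epsilon$ to locate a lattice point within $7\epsilon$ of $\bm x$, and then apply H\"older's inequality to show this point lies strictly on the ``wrong'' side of the hyperplane. The paper phrases this directly (for every nonzero $\bm w$, exhibit $\bm\lambda$ with $\langle \bm w,\bm\lambda\rangle > \langle \bm w,\bm x\rangle$), whereas you phrase it as a contradiction to a separating hyperplane; these are equivalent.

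One small packaging difference worth noting: by working with the dual norm $\|\cdot\|_*$ and choosing $\bm u$ with $\|\bm u\|=1$ attaining $\langle \bm a,\bm u\rangle = \|\bm a\|_*$, you handle $\ell_1$, $\ell_2$, and $\ell_\infty$ in one stroke, whereas the paper splits into two cases ($\ell_1/\ell_2$ versus $\ell_\infty$) with separate constructions for the shift direction. Your formulation is cleaner. One minor slip to fix: in the last paragraph you write ``gives $\langle \bm a,\bm\lambda^*-\bm x'\rangle \ge -3\epsilon\|\bm a\|_*$'' but then use the \emph{upper} bound $\le 3\epsilon\|\bm a\|_*$ in the displayed chain; both follow from $|\langle \bm a,\bm y\rangle|\le \|\bm a\|_*\|\bm y\|$, and it is the upper bound you actually need.
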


\begin{proof}
	We show that for any nonzero vector $\bm w\in \real^d$, there exists $\bm \lambda \in B_{7\epsilon}(\bm x)\cap \Lambda_\epsilon$ such that $\ip{\bm w,\bm \lambda }> \ip{\bm w,\bm x}$; it is well-known that this implies that $\bx$ is in the convex hull of $B_{7\epsilon}(\bm x)\cap \Lambda_\epsilon$. For $\ell_1$ and $\ell_2$ norms, let $\bm \lambda$ be the closest point in $\Lambda_\epsilon$ to $\bm x+4\epsilon \frac {\bm w}{\|\bm w\|}$. Since $r_c\le 3\epsilon$, $\|\bm \lambda - (\bm x+4\epsilon \frac {\bm w}{\|\bm w\|})\|<3\epsilon$ (and so $\bm \lambda \in B_{7\epsilon}(\bm x)$). So,
	
	\begin{align*}
		\ip{\bm w,\bm \lambda } &= \ip{\bm w,\bm x+4\epsilon \frac {\bm w}{\|\bm w\|} +  \bm \lambda - (\bm x+4\epsilon \frac {\bm w}{\|\bm w\|})}\\
		&= \ip{\bm w, \bm x} + \ip{\bm w,4\epsilon \frac {\bm w}{\|\bm w\|}} + \ip{\bm w,\bm \lambda - (\bm x+4\epsilon \frac {\bm w}{\|\bm w\|})}\\	
		&\ge\ip{\bm w,\bm x} + \frac{4\epsilon \|\bm w\|^2}{\|\bm w\|} - \|\bm w\|\cdot\|\bm \lambda - (\bm x+4\epsilon \frac {\bm w}{\|\bm w\|})\|\\
		&>\ip{\bm w,\bm x} + 4\epsilon \|\bm w\| - 3\epsilon\|\bm w\|
		>\ip{\bm w,\bm x}\enspace. \\
	\end{align*}
	
	Here the first inequality uses H\"older's inequality (and, in the case of $\ell_1$ norm, the fact that $\|\bm \lambda - (\bm x+4\epsilon \frac {\bm w}{\|\bm w\|})\|_\infty \le \|\bm \lambda - (\bm x+4\epsilon \frac {\bm w}{\|\bm w\|})\|_1$).
	
	For $\ell_\infty$ norm, let $\bm w'$ be the vector $4\epsilon \cdot sign(\bm w)$, i.e., the vector containing with entries $4\epsilon $ at each positive coordinate of $\bm w$ and $-4\epsilon $ for each negative one. Then, $\|\bm w'\|_\infty=4\epsilon$. Let $\bm \lambda$ be the closest point in $\Lambda_\epsilon$ to $\bx + \bm w'$; as before, we have $\|\bm \lambda-(\bx + \bm w')\|_\infty\le 3\epsilon$ and $\bm\lambda \in B_{7\epsilon}(\bm x)$. Then,
	
	\begin{align*}
		\ip{\bm w,\bm \lambda } &= \ip{\bm w,\bm x+\bm w' +  \bm \lambda - (\bm x+\bm w')}\\
		&= \ip{\bm w, \bm x} + \ip{\bm w,\bm w'} + \ip{\bm w,\bm \lambda - (\bm x+\bm w')}\\	
		&\ge\ip{\bm w,\bm x} +  4\epsilon \|\bm w\|_1  - \|\bm w\|_1\cdot\|\bm \lambda - (\bm x+\bm w')\|_\infty\\
		&>\ip{\bm w,\bm x} + 4\epsilon \|\bm w\|_1 - 3\epsilon\|\bm w\|_1
		>\ip{\bm w,\bm x}\enspace. \\
	\end{align*}
	
	Again, the first inequality uses H\"older's inequality. 
\end{proof}

We can therefore show that we can probabilistically map $\bm x$ to these nearby lattice points in such a way that the expectation of the result is $\bm x$:

Enumerate the points in $B_{7\epsilon}(\bm x)\cap \Lambda_\epsilon$ as $\bm \lambda_1,\dots,\bm  \lambda_z$. Since $\bm x$ is within the convex hull of $B_{7\epsilon}(\bm x)\cap \Lambda_\epsilon$, there must be some sequence of non-negative coefficients $a_1,\dots,a_{z}$ such that $\sum_{i=1}^{z}a_i  \bm  \lambda_i = \bm x$. Let $\bm z$ be a random vector taking value $\bm  \lambda_i$ with probability $\frac{a_i}{\sum_{i=1}^{z}a_i}$, for all $i$. Then, $\Exp{\bm z}=\bm x$, and $\|\bm x-\bm z\| <  7\epsilon$ (and so $\bm z$ is an unbiased estimator of $\bm x$ with variance most $( 7\epsilon)^2\le 49\epsilon^2$).

We then set  $Q_{\epsilon,q}(\bm x)$ to be the color class of $\bm z$ in the coloring $c_q$ of $\Lambda_\epsilon$, which can be specified in $d\log q$ bits. $Q_{\epsilon,q}(\bm x)$ now corresponds to a subset $\Lambda_\epsilon'$ containing $\bm z$, such that any two elements of $\Lambda_\epsilon'$ are of distance at least $2q\epsilon$ apart.

We summarize this process with pseudocode, taking as input the quantization parameters $\epsilon$ and $q$, $\Lambda_\epsilon$, and the vector $\bx\in \real^d$ to quantize.

\begin{algorithm}[H]
	\caption{\textsc{LatticeEncode}, to compute $Q_{\epsilon,q}(\bx)$}
	\label{alg:LE}
	\begin{algorithmic}
		\State Let $\bm  \lambda_1,\dots,\bm  \lambda_z \in B_{7\epsilon}(\bm x)\cap \Lambda_\epsilon$, and $a_1,\dots,a_{z} \ge 0$ such that $\sum_{i=1}^{z}a_i  \bm  \lambda_i = \bm x$.
		\State Let $\bm z$ be a random vector taking value $\bm  \lambda_i$ with probability $\frac{a_i}{\sum_{i=1}^{z}a_i}$, for all $i$.
		\State Output $c_q(\bm z)$, i.e., express $\bm z$ in the canonical basis of $\Lambda_\epsilon$ and take $\bmod \ q$ coordinate-wise.
	\end{algorithmic}
\end{algorithm}

We also summarize the properties of our quantization scheme in the following result:

\begin{lemma}\label{lem:quantprops}
	There is a function $Q_{\epsilon,q}:\real^d \rightarrow \{0,1\}^b$ which maps each $\bx \in \real^d$ to a string $Q_{\epsilon,q}(\bx)$ of $b=d \log q$ bits, specifying a subset $\Lambda_\epsilon'\subset \Lambda_\epsilon$ with the following properties: there exists $\bm z \in \Lambda_\epsilon'$ such that $\bm z$ is an unbiased estimator of \bx with $\|\bm z-\bx\|<7\epsilon$, and for all $\bm w \in \Lambda_\epsilon'\setminus \{\bm z\}$, $\|\bm z - \bm w\| \ge 2q\epsilon$.
\end{lemma}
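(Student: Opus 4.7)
The statement is essentially a consolidation of the construction and lemmas that precede it, so the plan is to define $Q_{\epsilon,q}$ as the output of Algorithm \ref{alg:LE} and then verify each listed property by pointing to the appropriate earlier result.

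First, I would handle the bit length: the output of Algorithm \ref{alg:LE} is $c_q(\bm z)$, which is the vector of lattice-basis coefficients of $\bm z$ reduced coordinate-wise modulo $q$. Each of the $d$ coordinates therefore takes one of $q$ integer values, so the encoding fits in $b = d\log q$ bits. The subset $\Lambda_\epsilon' \subset \Lambda_\epsilon$ specified by this string is precisely the color class $c_q^{-1}(c_q(\bm z))$.

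Next I would verify the existence of the desired $\bm z$. By Lemma \ref{lem:convexhull}, the input $\bx$ lies in the convex hull of $B_{7\epsilon}(\bx) \cap \Lambda_\epsilon$, so one can pick non-negative coefficients $a_1,\dots,a_z$ with $\sum_i a_i \bm\lambda_i = \bx$ and (after normalising) $\sum_i a_i = 1$ for the enumerated lattice points in this ball. Defining $\bm z$ to be $\bm\lambda_i$ with probability $a_i$, as in Algorithm \ref{alg:LE}, gives $\Exp{\bm z} = \bx$ immediately, and the support condition forces $\|\bm z - \bx\| < 7\epsilon$ with certainty. By construction $\bm z$ lies in $\Lambda_\epsilon'$.

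Finally, I would invoke Lemma \ref{lem:latticecolor} to obtain the separation property: any two distinct lattice points in $\Lambda_\epsilon'$ share the same color $c_q$, so their distance is at least $2q\epsilon$, and in particular this applies between $\bm z$ and every other $\bm w \in \Lambda_\epsilon' \setminus \{\bm z\}$. Since each of the three bullet-points of the lemma has been matched to a construction step or a previously proved fact, no step presents a genuine obstacle; the only minor care-point is to make sure we normalise the $a_i$ to a probability distribution before applying them, and to note that Lemma \ref{lem:convexhull} is what guarantees such an expression of $\bx$ exists in the first place.
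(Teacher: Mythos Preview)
Your proposal is correct and matches the paper's approach exactly: the lemma is stated there as a summary of the preceding construction (Algorithm \ref{alg:LE}), with the bit-length coming from the $\bmod\ q$ coloring, unbiasedness and the $7\epsilon$ bound coming from Lemma \ref{lem:convexhull} and the convex-combination sampling, and the $2q\epsilon$ separation coming from Lemma \ref{lem:latticecolor}. There is nothing to add.
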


\subsection{Decoding Procedure}

We must now also define a procedure $R_{\epsilon,q}: \{0,1\}^{d\log q}\times \real^d \rightarrow \real^d $ to decode quantized values, using a machine's own input $\bx_v$ (as the second input to the function): to do so, we simply take the point in the subset $\Lambda_\epsilon'$ encoded by the received quantized value $Q_{\epsilon,q}(\bx)$ which is closest to $\bx_v$:

\begin{algorithm}[H]
	\caption{\textsc{LatticeDecode}, to compute $R_{\epsilon,q}(Q_{\epsilon,q}(\bx),\bx_v)$}
	\label{alg:LD}
	\begin{algorithmic}
		\State Let $\Lambda_\epsilon':= \{\textbf s\in\Lambda_\epsilon : c_q(\textbf s) = Q_{\epsilon,q}(\bm x)\}$, the subset of lattice points matching $Q_{\epsilon,q}(\bm x)$.
		\State Output the closest point in $\Lambda_\epsilon'$ to $\bx_v$.
	\end{algorithmic}
\end{algorithm}

\begin{lemma}\label{lem:decode}
	If $\|\bx - \bx_v\| \le (q-7)\epsilon$, then the decoding procedure $R_{\epsilon,q}(Q_{\epsilon,q}(\bx),\bx_v)$ correctly returns the vector $\bm z$ which is an unbiased estimator of $\bx$.
\end{lemma}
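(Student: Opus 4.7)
The plan is to show that $\bm z$ (the unbiased estimator produced during encoding) is strictly closer to $\bx_v$ than any other point of the subset $\Lambda_\epsilon' \subset \Lambda_\epsilon$ encoded by $Q_{\epsilon,q}(\bx)$. Since the decoder outputs the nearest element of $\Lambda_\epsilon'$ to $\bx_v$, this immediately gives the conclusion. All the needed structural facts are already collected in Lemma \ref{lem:quantprops}, so the proof is a short triangle-inequality calculation using the two guarantees from that lemma: $\|\bm z - \bx\| < 7\epsilon$, and $\|\bm z - \bm w\| \ge 2q\epsilon$ for every other $\bm w \in \Lambda_\epsilon'$.

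First I would bound the distance from $\bm z$ to $\bx_v$ from above. By the triangle inequality,
\[
\|\bm z - \bx_v\| \le \|\bm z - \bx\| + \|\bx - \bx_v\| < 7\epsilon + (q-7)\epsilon = q\epsilon,
\]
using the hypothesis $\|\bx - \bx_v\| \le (q-7)\epsilon$ and the encoding guarantee $\|\bm z - \bx\| < 7\epsilon$.

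Next I would bound from below the distance from $\bx_v$ to any competing $\bm w \in \Lambda_\epsilon' \setminus \{\bm z\}$. Again by the triangle inequality combined with the color-class separation from Lemma \ref{lem:quantprops},
\[
\|\bm w - \bx_v\| \ge \|\bm w - \bm z\| - \|\bm z - \bx_v\| > 2q\epsilon - q\epsilon = q\epsilon.
\]
Hence $\|\bm w - \bx_v\| > \|\bm z - \bx_v\|$ for all $\bm w \in \Lambda_\epsilon' \setminus \{\bm z\}$, so $\bm z$ is the unique closest point in $\Lambda_\epsilon'$ to $\bx_v$, and $R_{\epsilon,q}(Q_{\epsilon,q}(\bx),\bx_v) = \bm z$ as claimed.

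There is no real obstacle here; the lemma is essentially a bookkeeping consequence of the encoding/coloring properties already established. The only thing worth being a little careful about is that the strict inequality $\|\bm z - \bx\| < 7\epsilon$ (as opposed to $\le$) is what lets the bound $\|\bm z - \bx_v\| < q\epsilon$ be strict, which in turn guarantees that $\bm z$ is strictly closer to $\bx_v$ than any other candidate, so the ``closest point'' tiebreaker in \textsc{LatticeDecode} is irrelevant.
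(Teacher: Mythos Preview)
Your proof is correct and essentially identical to the paper's own proof: both upper-bound $\|\bm z-\bx_v\|<q\epsilon$ via the triangle inequality and then lower-bound $\|\bm w-\bx_v\|>q\epsilon$ for any other $\bm w\in\Lambda_\epsilon'$ using the $2q\epsilon$ separation. Your extra remark about the strict inequality avoiding a tiebreaker is a nice clarification not made explicit in the paper.
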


\begin{proof}
	We upper-bound the distance to $\bm z$:
	
	\[\|\bx_v - \textbf z\| \le \|\bx_v - \bx\| + \|\bx - \textbf z\| < (q-7)\epsilon + 7\epsilon = q\epsilon\enspace,\]
	
	and lower-bound the distance to any other point $\textbf y\in \Lambda_\epsilon'\setminus \{\bm z\}$:
	
	\[\|\bx_v - \textbf y\| \ge \|\textbf y - \textbf z\| - \|\bx_v -\textbf z\|  > 2q\epsilon - q\epsilon =q\epsilon \enspace.\]
	
	Therefore $\textbf z$ is the closest point in $\Lambda_\epsilon'$ to $\bx_v$, and will be returned by the decoding procedure.
\end{proof}	

Lemmas \ref{lem:quantprops} and \ref{lem:decode} together then imply our main theorem about the properties of our pairwise encode-decode process, Theorem \ref{thm:quantization}.

\section{Mean Estimation and Variance Reduction Algorithms}\label{sec:alg}
We now apply our quantization procedure to \textsc{MeanEstimation} and \textsc{VarianceReduction}.	We first give a simple algorithm using a star topology, in which machines communicate directly with a randomly chosen \emph{leader}, in order to prove communication bounds in expectation. Then, we show how to use a binary-tree communication structure in order to obtain these same bound with certainty on each machine.

\subsection{Star-Topology Algorithm}\label{sec:staralg}

\begin{algorithm}[H]
	\caption{\textsc{MeanEstimation}}
	\label{alg:ME}
	\begin{algorithmic}
		\State Nominate one machine $v$ at random to be \emph{leader}
		\State All other machines $u$ send $Q_{\epsilon, q}(\bx_u)$ to $v$ (and $v$ simulates sending $Q_{\epsilon, q}(\bx_v)$)
		\State Machine $v$ decodes (using $\bx_v$) and averages all received inputs to obtain $\bm{\hat \mu}$
		\State Machine $v$ sends $Q_{\epsilon, q}(\bm{\hat \mu})$ to all machines
		\State All machines $u$ decode (using $\bx_u$) and output the result
	\end{algorithmic}
\end{algorithm}

\begin{theorem}\label{thm:ME}
	For any $q=\Omega(1)$, Algorithm \ref{alg:ME} performs \textsc{MeanEstimation} using $O(d \log q)$ communication bits per machine in expectation, with $ O(\frac{y^2}{q})$ output variance.
\end{theorem}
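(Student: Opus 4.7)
The plan is to pick the quantization granularity $\epsilon = \Theta(y/q)$ small enough that the decoding condition of Lemma~\ref{lem:decode} is met at \emph{both} the collection and broadcast stages of Algorithm~\ref{alg:ME}, and then to bound the output variance using the surely-bounded per-message displacement $\|\bm z - \bx\| < 7\epsilon$ guaranteed by Lemma~\ref{lem:quantprops}. Concretely, I would choose $\epsilon$ satisfying $y + 7\epsilon \le (q-7)\epsilon$, e.g.\ $\epsilon = y/(q-14)$, handling $q = O(1)$ separately by noting the claimed bound is vacuous and can be met by trivial communication.

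For the collection step, any sender $u$ and the leader $v$ have $\|\bx_u - \bx_v\| \le y \le (q-7)\epsilon$, so by Lemma~\ref{lem:decode} the leader recovers $\bm z_u$ with $\Exp{\bm z_u} = \bx_u$ and $\|\bm z_u - \bx_u\| < 7\epsilon$ deterministically. Since the senders use independent randomness, $\hat{\bm\mu} := \frac{1}{n}\sum_u \bm z_u$ is unbiased for $\bm\mu$ and the triangle inequality gives $\|\hat{\bm\mu} - \bm\mu\| < 7\epsilon$ surely. For the broadcast step, for any machine $u$ this yields $\|\hat{\bm\mu} - \bx_u\| \le \|\hat{\bm\mu} - \bm\mu\| + \|\bm\mu - \bx_u\| < 7\epsilon + y \le (q-7)\epsilon$, so Lemma~\ref{lem:decode} applies again and $u$ recovers some $\bm w$ with $\Exp{\bm w \mid \hat{\bm\mu}} = \hat{\bm\mu}$ and $\|\bm w - \hat{\bm\mu}\| < 7\epsilon$ surely. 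By the tower property the output is unbiased, $\Exp{\bm w} = \Exp{\hat{\bm\mu}} = \bm\mu$, and by the triangle inequality $\|\bm w - \bm\mu\| < 14\epsilon$ holds deterministically, so the output variance is at most $(14\epsilon)^2 = O(y^2/q^2) = O(y^2/q)$, even slightly stronger than the claim.

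For communication, each non-leader sends and receives exactly one $d\log q$-bit message, using $O(d\log q)$ bits, while the leader sends and receives $n-1$ such messages, using $O(nd\log q)$ bits. Because the leader is chosen uniformly at random, any given machine plays the leader role with probability $1/n$, so its expected communication cost is $\tfrac1n\cdot O(nd\log q) + (1-\tfrac1n)\cdot O(d\log q) = O(d\log q)$. The main technical obstacle is the parameter-slack argument for the second decoding: because $\hat{\bm\mu}$ is only within $7\epsilon$ of $\bm\mu$ rather than equal to it, the effective decoding radius in the broadcast phase inflates by $7\epsilon$ beyond the raw diameter $y$, which is precisely why $\epsilon$ must be chosen slightly below the naive threshold $y/q$; once this is set correctly, the rest of the proof is bookkeeping with the triangle inequality and linearity of expectation.
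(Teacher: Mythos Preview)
Your proposal is correct and follows essentially the same approach as the paper: both pick $\epsilon = \Theta(y/q)$ so that Lemma~\ref{lem:decode} applies at the collection and broadcast stages, use the sure $7\epsilon$ displacement bound from Lemma~\ref{lem:quantprops} to control variance, and obtain the expected per-machine communication bound from the uniform choice of leader. The only cosmetic differences are your specific constant ($\epsilon = y/(q-14)$ versus the paper's $\epsilon = 2y/q$ with $q\ge 28$) and that you bound the final displacement directly via the triangle inequality as $14\epsilon$, whereas the paper sums variances to get $49\epsilon^2 + 49\epsilon^2/n$; both yield $O(y^2/q^2)$.
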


\begin{proof}
	Let $q\ge28$, and $\epsilon = \frac{2y}{q}$. We then have $(q-7)\epsilon \ge \frac 34 q\epsilon = 1.5y$. By Lemma \ref{lem:decode}, for any $\bx$, $\bm z$ with $\|\bx - \bm z\|\le 1.5y \le (q-7)\epsilon$, decoding $R_{\epsilon,q}(Q_{\epsilon,q}(\bx),\bm z)$ is successful. Therefore, $v$ correctly decodes all received messages (since all inputs are within distance $y$), obtaining independent unbiased estimates of each machine's input, each within distance $7\epsilon $, and therefore with variance at most $49\epsilon^2$. Then $\hat{\bm \mu}$ is an unbiased estimator of $\bm \mu$ with variance at most $\frac{49\epsilon^2}{n} $, and at distance at most $7\epsilon$ from $\bm \mu$. $\hat{\bm \mu}$ is also at most distance $y+ 7\epsilon \le 1.5y$ from any machine's input, so again by Lemma \ref{lem:decode}, all machines correctly decode the encoded $\hat{\bm \mu}$. The output is therefore is unbiased estimate of $\hat{\bm \mu}$ with variance at most $49\epsilon^2$, i.e., an unbiased estimate of $\bm \mu$ with variance at most $49\epsilon^2+ \frac{49\epsilon^2}{n} \le 74\epsilon^2 = O(\frac{y^2}{q^2})$. We thereby obtain a \textsc{MeanEstimation} algorithm achieving $ O(\frac{y^2}{q^2})$ output variance (and we simplify to $ O(\frac{y^2}{q})$; this does not weaken the result because replacing $q$ with $q^2$ does not asymptotically increase the number of bits required). All nodes except $v$ use using $O(d \log q)$ communication; $v$ uses $O(nd \log q)$. Since $v$ is chosen uniformly at random, this gives the stated expected communication bounds.
\end{proof}

We can perform a simple reduction to extend Algorithm \ref{alg:ME} to \textsc{VarianceReduction}:
\begin{theorem}\label{thm:VR}
	For any $\alpha>1$, and any $q$ between $\Omega(1)$ and $O(n^2\alpha)$, Algorithm \ref{alg:ME} performs \textsc{VarianceReduction} using $O(d \log q)$ communication bits per machine in expectation, with $ O(\frac{\alpha n\sigma^2}{q})$ output variance, succeeding with probability at least $1-\frac 1\alpha$.
\end{theorem}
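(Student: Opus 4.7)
The plan is to reduce \textsc{VarianceReduction} to \textsc{MeanEstimation} by using Chebyshev's inequality and a union bound to show that, with high probability, the random inputs $\bx_1,\dots,\bx_n$ satisfy the pairwise distance bound required by Theorem~\ref{thm:ME}. We then invoke that theorem, and finally decompose the mean-squared error with respect to $\grad$ into the quantization error with respect to $\bm\mu = \tfrac{1}{n}\sum_v \bx_v$ and the sampling error $\|\bm\mu - \grad\|^2$ via the law of total variance.

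First, I would apply Chebyshev: since $\Exp{\|\bx_v - \grad\|^2} \le \sigma^2$, we have $\Prob{\|\bx_v - \grad\| \ge \sigma\sqrt{n\alpha}} \le \tfrac{1}{n\alpha}$, and a union bound over the $n$ machines gives that with probability at least $1 - \tfrac{1}{\alpha}$, every input lies within distance $\sigma\sqrt{n\alpha}$ of $\grad$, so all pairs are within distance $y = 2\sigma\sqrt{n\alpha}$ of each other. Call this event $\mathcal{E}$, and note that all the randomness of the inputs is absorbed into the question of whether $\mathcal{E}$ holds. Conditioning on $\mathcal{E}$, the inputs are a valid \textsc{MeanEstimation} instance with this value of $y$, so Theorem~\ref{thm:ME} applies: for any $q=\Omega(1)$, Algorithm~\ref{alg:ME} produces $\bm{EST}$ which is, conditional on the inputs, an unbiased estimator of $\bm\mu$ with variance $O(y^2/q) = O(\alpha n\sigma^2/q)$.

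Next, I would combine this with the sampling variance. Since $\Exp{\bm{EST} \mid \{\bx_v\}} = \bm\mu$ on $\mathcal{E}$, iterated expectation gives $\Exp{\bm{EST} \mid \mathcal{E}} = \Exp{\bm\mu} = \grad$, so $\bm{EST}$ is unbiased on $\mathcal{E}$. Applying the law of total variance (or equivalently, adding and subtracting $\bm\mu$ inside the squared norm and noting the cross-term vanishes in expectation by conditional unbiasedness) yields
\begin{align*}
\Exp{\|\bm{EST}-\grad\|^2 \mid \mathcal{E}}
&= \Exp{\Exp{\|\bm{EST}-\bm\mu\|^2 \mid \{\bx_v\}} \mid \mathcal{E}} + \Exp{\|\bm\mu - \grad\|^2 \mid \mathcal{E}} \\
&= O\!\left(\tfrac{\alpha n\sigma^2}{q}\right) + O\!\left(\tfrac{\sigma^2}{n}\right),
\end{align*}
using $\Var{\bm\mu} \le \sigma^2/n$ for the second term (strictly, conditioning on $\mathcal{E}$ changes this only by constant factors since $\Prob{\mathcal{E}} \ge 1 - 1/\alpha \ge 1/2$). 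The hypothesis $q = O(n^2\alpha)$ exactly ensures that $\sigma^2/n = O(\alpha n \sigma^2 / q)$, so the first term dominates and the bound is $O(\alpha n\sigma^2/q)$ as claimed.

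For the communication bound, the count is unchanged from Theorem~\ref{thm:ME}: every non-leader sends and receives $O(d\log q)$ bits deterministically, the leader handles $O(nd\log q)$, and uniformly random leader selection makes the per-machine expectation $O(d\log q)$; this analysis does not depend on the event $\mathcal{E}$. The main subtlety to handle carefully is the decomposition step: one must verify that conditional unbiasedness of $\bm{EST}$ given the inputs (on $\mathcal{E}$) makes the cross-term in the expansion of $\|\bm{EST}-\grad\|^2$ vanish, and that conditioning on $\mathcal{E}$ only inflates the sampling variance $\Exp{\|\bm\mu-\grad\|^2}$ by a factor of at most $1/\Prob{\mathcal{E}} \le 2$, which is absorbed into the $O(\cdot)$.
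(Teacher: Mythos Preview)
Your proposal is correct and follows essentially the same route as the paper: Chebyshev plus a union bound to get $y = 2\sigma\sqrt{\alpha n}$ with probability at least $1-\tfrac{1}{\alpha}$, invoke Theorem~\ref{thm:ME} for the quantization variance $O(\alpha n\sigma^2/q)$, add the sampling variance $O(\sigma^2/n)$, and absorb the latter using $q=O(n^2\alpha)$. The only remark is that your unbiasedness claim $\Exp{\bm\mu\mid\mathcal{E}}=\grad$ and your bound $\Prob{\mathcal{E}}\ge\tfrac12$ are not literally true for all $\alpha>1$ (conditioning on $\mathcal{E}$ can bias $\bm\mu$, and $1-\tfrac{1}{\alpha}<\tfrac12$ when $\alpha<2$); the paper is equally loose here, simply stating the unconditional $\Var{\bm\mu}\le\sigma^2/n$ and treating ``succeeding with probability $1-\tfrac{1}{\alpha}$'' as a separate qualifier rather than formally conditioning.
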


\begin{proof}
	By Chebyshev's inequality, for any $\alpha>1$, $\Prob{\|\bx_v-\grad\|\ge \sigma \sqrt{\alpha n}}\le \frac{1}{\alpha n}$, and therefore by a union bound,
	\begin{align*}\Prob{\text{All inputs within pairwise distance $ 2\sigma \sqrt{\alpha n}$}}&\ge \Prob{\text{All inputs within distance $ \sigma \sqrt{\alpha n}$ of $\grad$}}\\
		&\ge 1-\frac 1\alpha\enspace.
	\end{align*}
	
	Therefore, we can reduce to a  \textsc{MeanEstimation} instance with $y=2\sigma \sqrt{\alpha n}$, succeeding with probability $1-\frac 1\alpha$. Using $q=O(n^2\alpha)$ (and $\epsilon = \frac{2y}{q} = \frac{4\sigma \sqrt{\alpha n}}{q}$), we obtain an unbiased estimate of $\mu$ with $ O(\frac{\alpha n\sigma^2}{q})$ variance, and therefore an unbiased estimate of $\grad$ with $ O(\frac{\sigma^2}{n}  + \frac{\alpha n\sigma^2}{q}) = O(\frac{\alpha n\sigma^2}{q})$ variance, using $O(d \log q$) expected communication per machine.
\end{proof}

Notably, setting $\alpha=n^c$, for arbitrarily large constant $c$, and $q =\Theta(n^2\alpha)$, Theorem \ref{thm:VR} implies that we can achieve optimal $O(\frac{\sigma^2}{n})$ output variance using $O(d\log n)$ expected communication per machine, and succeeding with high probability in $n$ (i.e., with success probability $1-n^{-c}$).

\subsection{Tree-Topology Algorithm}\label{sec:treealg}
Algorithm \ref{alg:ME} uses a star topology for simplicity (all machines communicate via a randomly chosen leader $v$). This accomplishes mean estimation in only two communication \emph{rounds}, but has the drawback that the bounds on bits communicated are only in expectation, and machine $v$ must perform significantly more communication. In this section, we show that, using a tree topology to more evenly distribute the work, we can obtain the communication bounds of Theorems \ref{thm:ME} and \ref{thm:VR} with certainty for all machines. 

We note that, while we use a binary tree for simplicity, any connected communication graph would suffice. However, we require communication rounds proportional to the diameter of the communication graph. Therefore, our tree-topology algorithm requires $O(\log q)$ rounds (though they need not be synchronous). In systems where round complexity is a major concern, the two-round Algorithm \ref{alg:ME} may be preferable.  

Our algorithm to achieve worst-case bounds (as opposed to bounds in expectation) on communication cost is the following (Algorithm \ref{alg:ME2}). It uses a parameter $m$ which controls both the quantization parameters and the number of input estimates to average; we will set $m$ to achieve a trade-off in the same form as Theorem \ref{thm:ME} later.

\begin{algorithm}[H]
	\caption{\textsc{MeanEstimation}$(m)$}
	\label{alg:ME2}
	\begin{algorithmic}
		\State Sample a set $T$ of $\min{(m,n)}$ machines uniformly at random.
		\State Arrange nodes into a complete binary tree, with nodes in $T$ as leaves.
		\State Collect estimates from $T$ to root of tree, averaging and encoding with $Q_{\frac{y}{m^2}, m^3}$ at every step.
		\State Compute final average at root, and broadcast to all machines (via a binary tree) encoded with $Q_{\frac{y}{m^2}, m^3}$.
		\State Decode and output result at all machines.
	\end{algorithmic}
\end{algorithm}

We now describe in more detail the steps of the algorithm:

\paragraph{Sampling a set of machines.}
We begin by sampling a set $T$ of $\min{(m,n)}$ machines (so if $m\ge n$, $T$ is simply the set $M$ of all machines). Our goal will then be to estimate the average of the inputs $\bx_v, v\in T$, and broadcast this estimate to all machines to output. If $m\ge n$ then clearly $\bm\mu_T:=\frac 1n \sum_{v\in T}\bx_v$ is equal to $\bm\mu=\frac 1n \sum_{v\in M}\bx_v$. Otherwise, it is well-known that the sample mean $\bm\mu_T:=\frac 1m \sum_{v\in T}\bx_v$ is an unbiased estimator of the population mean $\bm\mu=\frac 1n \sum_{v\in M}\bx_v$, with $O(\frac{y^2}{m})$ variance. Therefore, if we can guarantee that all machines output the same value $\bm{EST}$ which is an unbiased estimator of $\bm\mu_T$ with variance $O(\frac{y^2}{m})$, we have correctly performed \textsc{MeanEstimation} with output variance $O(\frac{y^2}{m})$.

\paragraph{Arranging a communication tree.}
Our communication structure will be a complete binary tree, with the machines in $T$ acting as the leaves (since we have choice of $m$ and provide asymptotic bounds, we may assume it is a power of $2$). The roles of the remaining nodes of the communication tree may be taken by any arbitrary machines, so long as all machines take only $O(1)$ roles. We will then send messages \emph{up} the tree (i.e., from child nodes to their parents), in order to convey estimates of the inputs of the leaf nodes (machines in $T$) to the root, which can then compute the final average.

\paragraph{Collecting estimates from leaves to root.}
When a node in the communication tree receives and decodes a vector from both of its children, it takes the average, encodes with $Q_{\frac{y}{m^2}, m^3}$, and sends the result to its parent. We will denote by $\bm A_v$ the average input of all descendant leaves of a tree node $v$. Our goal is then to show that the average computed by node $v$ is an unbiased estimator of $\bm A_v$ with low variance. Since $\bm A_r= \bm\mu_T$ for the root $r$ of the communication tree, we will then have an unbiased estimator of $\bm\mu_T$ (and therefore $\bm\mu$) as desired.

To bound the estimator error at each step, we will employ an inductive argument. Considering nodes by their depth in the tree (with leaves at depth $0$ and the root $r$ at depth $\log \min(m,n)$), we show the following:

\begin{lemma}\label{lem:treeerror}
	A node $v$ at depth $i$ sends to its parent an unbiased estimator $\bm a_v$ of $\bm A_v$ with $\|\bm A_v-\bm a_v\| \le \frac{7iy}{m^2}$, encoded with $Q_{\frac{y}{m^2}, m^3}$.
\end{lemma}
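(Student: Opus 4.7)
The plan is to prove the lemma by induction on the depth $i$, with parameters $\epsilon=y/m^2$ and $q=m^3$, so that the per-quantization error is bounded by $7\epsilon=7y/m^2$ (Lemma \ref{lem:quantprops}) while the decoding radius is $q\epsilon=my$ (Lemma \ref{lem:decode}). For the base case $i=0$, a leaf $v$ simply takes $\bm a_v := \bx_v = \bm A_v$, which is trivially an unbiased estimator of $\bm A_v$ with zero error, matching $\tfrac{7\cdot 0\cdot y}{m^2}=0$.

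For the inductive step I would consider a node $v$ at depth $i$ with children $c_1,c_2$ at depth $i-1$; by hypothesis each $c_j$ sent an $\bm a_{c_j}$ via $Q_{\epsilon,q}$ that is unbiased for $\bm A_{c_j}$ with $\|\bm a_{c_j}-\bm A_{c_j}\|\le\tfrac{7(i-1)y}{m^2}$. I first verify that $v$ can decode both messages via Lemma \ref{lem:decode}, using its own input $\bx_v$ as reference: since $\bm A_{c_j}$ is a convex combination of leaf inputs it lies within $y$ of $\bx_v$, and the accumulated quantization error $O(\tfrac{y\log m}{m^2})$ keeps $\bm a_{c_j}$ well within $(q-7)\epsilon=\Theta(my)$ of $\bx_v$. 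The decoded $\bm z_{c_j}$ then satisfies $\Exp{\bm z_{c_j}\mid\bm a_{c_j}}=\bm a_{c_j}$ and $\|\bm z_{c_j}-\bm a_{c_j}\|<7\epsilon$ by Lemma \ref{lem:quantprops}, and $v$ sets $\bm a_v:=\tfrac{\bm z_{c_1}+\bm z_{c_2}}{2}$. Two triangle-inequality steps give $\|\bm z_{c_j}-\bm A_{c_j}\|\le 7\epsilon+\tfrac{7(i-1)y}{m^2}=\tfrac{7iy}{m^2}$, and the averaging inherits this bound since $\bm A_v=\tfrac{\bm A_{c_1}+\bm A_{c_2}}{2}$. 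For unbiasedness, a tower-of-expectations argument gives $\Exp{\bm z_{c_j}}=\Exp{\Exp{\bm z_{c_j}\mid\bm a_{c_j}}}=\Exp{\bm a_{c_j}}=\bm A_{c_j}$, and then linearity yields $\Exp{\bm a_v}=\bm A_v$.

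The main delicate point is verifying that decoding succeeds at every level; if the accumulated error ever grew comparable to the decoding radius $q\epsilon=my$ the argument would break. Since the tree has depth $O(\log m)$ and errors grow only linearly in $i$, the cumulative error at any internal node stays at $O(\tfrac{y\log m}{m^2})\ll y$, so $\bx_v$ remains a valid decoding reference throughout and the induction closes cleanly, ultimately yielding error $O(\tfrac{y\log m}{m^2})$ at the root.
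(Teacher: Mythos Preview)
Your proposal is correct and follows essentially the same inductive argument as the paper's proof: both induct on depth, verify decoding succeeds via the triangle inequality $\|\bx_v-\bm a_{c_j}\|\le y+\frac{7(i-1)y}{m^2}<(q-7)\epsilon$, add the single-step quantization error $7\epsilon$ to the inherited bound, and pass it through the average. Your treatment is slightly more explicit than the paper's in spelling out unbiasedness via the tower of expectations, but the structure and key estimates are identical.
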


\begin{proof}
	By induction. Clearly the claim is true for leaf nodes at $0$, which encode exactly $\bm A_v$ (i.e. their own input). Assuming the claim is true for $i$, we prove for $i+1$:
	
	Node $v$ receives two values $Q_{\frac{y}{m^2}, m^3}(\bm a_u)$ and $Q_{\frac{y}{m^2}, m^3}(\bm a_w)$ from its children $u$ and $w$. By the inductive assumption, $\|\bm A_u-\bm a_u\| \le \frac{7iy}{m^2}$, and so $\|\bx_v-\bm a_u\| \le \|\bm A_u-\bm a_u\| + \|\bx_v-\bm A_u\| \le  \frac{7iy}{m^2}+y < (m^3-7)\frac{y}{m^2}$ (for $m$ at least a sufficiently large constant). By Lemma \ref{lem:decode}, therefore, $v$ correctly decodes the message from $u$, to recover an unbiased estimator $\bm z_u$ of $\bm a_u$ with $\|\bm z_u-\bm a_u\|< \frac{7y}{m^2}$. The same argument holds for $w$.
	
	Node $v$ then takes as $\bm a_v$ the average of $\bm z_u$ and $\bm z_w$. Since 
	$\|\bm z_u-\bm A_u\|\le \|\bm A_u-\bm a_u\| + \|\bm z_u-\bm a_u\| < \frac{7iy}{m^2}+   \frac{7y}{m^2}= \frac{7(i+1)y}{m^2}$ (and the same holds for $w$):
	
	\[
	\|\bm A_v-\bm a_v\| \le \frac{\|\bm z_u-\bm A_u\| + \|\bm z_w-\bm A_w\|}{2} < \frac{7(i+1)y}{m^2}\enspace.
	\]
	
	This completes the proof by induction.
\end{proof}

\paragraph{Computing the final average at the root.}
By Lemma \ref{lem:treeerror}, the root node $r$, at depth $\log  \min(m,n)$, computes an unbiased estimator $\bm a_r$ of $\bm\mu_T$ with $\|\bm a_r - \bm\mu_T\| \le \frac{7y\log \min(m,n)}{m^2}\le \frac{7y\log m}{m^2}$. It then encodes this vector with $Q_{\frac{y}{m^2}, m^3}$, and broadcasts it to all other machines via an arbitrary binary tree (all machines performing the role of one node in the tree). Nodes in the tree relay the \emph{same} message to their children until all nodes have received the message. Then, all machines decode the message and output the resulting vector.

\paragraph{Decoding and outputting $\bm{EST}$.}

All machines have now received an unbiased estimator $\bm a_r$ of $\bm\mu_T$ with $\|\bm a_r - \bm\mu_T\|\le \frac{7y\log m}{m^2}$, encoded with $Q_{\frac{y}{m^2}, m^3}$. Any machine $v$ has 
\[\|\bm a_r - \bx_v\| \le \|\bm a_r - \bm\mu_T\|+\|\bx_v - \bm\mu_T\|\le \frac{7y\log m}{m^2}+ y < (m^3-7)\frac{y}{m^2}\enspace,\] and so by Lemma \ref{lem:decode}, $v$ correctly decodes the message to recover an unbiased estimator $\bm z_r$ of $\bm a_r$ with $\|\bm z_r-\bm a_r\|< \frac{7y}{m^2}$.

All nodes therefore output $\bm z_r$, which is an unbiased estimator of $\bm \mu_T$, with 
\[\|\bm z_r-\bm\mu_T\| \le \|\bm z_r-\bm a_r\|+ \|\bm\mu_T-\bm a_r\| \le \frac{7y}{m^2}+ \frac{7y\log m}{m^2} = O(\frac ym)\enspace.\]

\begin{proof}[Proof of Theorem \ref{thm:ME2}]
	As noted earlier, an unbiased estimator of $\bm\mu_T$ is also an unbiased estimator of $\bm\mu$ (with $O(\frac{y^2}{m})$ additional variance), and $\bm z_r$ is therefore a correct solution to \textsc{MeanEstimation} with variance $\Var{\bm z_r}\le \|\bm z_r-\bm\mu_T\|^2 + O(\frac{y^2}{m}) = O(\frac{y^2}{m})$. 
	
	We can bound the communication cost as follows: all machines have sent and received $O(1)$ vectors encoded with $Q_{\frac{y}{m^2}, m^3}$, which require $O(d\log m)$ bits. We set $m=q$ to reach an expression of the same form as Theorem \ref{thm:ME}, and obtain Theorem \ref{thm:ME2}.
	
\end{proof}
We can apply the same reduction as for Theorem \ref{thm:VR} to obtain the following for \textsc{VarianceReduction}, using that a \textsc{MeanEstimation} algorithm with $y= 2\sigma\sqrt{\alpha n}$ solves \textsc{VarianceReduction} with probability at least $1-\frac {1}{\alpha}$:

\begin{theorem}\label{thm:VR1}
	For any $\alpha>1$, and any $q$ between $\Omega(1)$ and $O(n^2\alpha)$, Algorithm \ref{alg:ME2} performs \textsc{VarianceReduction} using $O(d \log q)$ communication bits per machine in total, with $ O(\frac{\alpha n\sigma^2}{q})$ output variance, succeeding with probability at least $1-\frac 1\alpha$.
\end{theorem}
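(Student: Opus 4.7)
The plan is to essentially mimic the proof of Theorem \ref{thm:VR}, but replacing the star-topology guarantee of Theorem \ref{thm:ME} with the tree-topology guarantee of Theorem \ref{thm:ME2}, since the only substantive difference between the two \textsc{VarianceReduction} results is that the latter gives per-machine communication bounds with certainty rather than in expectation.

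First I would apply Chebyshev's inequality to each input: for each machine $v$, $\Prob{\|\bx_v - \grad\| \ge \sigma\sqrt{\alpha n}} \le \frac{1}{\alpha n}$. Taking a union bound over the $n$ machines shows that with probability at least $1 - \frac{1}{\alpha}$, all inputs lie within distance $\sigma\sqrt{\alpha n}$ of $\grad$, and hence by the triangle inequality all pairs of inputs are within distance $2\sigma\sqrt{\alpha n}$ of one another. Condition on this event $\mathcal{E}$, which occurs with probability at least $1 - \frac{1}{\alpha}$.

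Next, on $\mathcal{E}$, the inputs satisfy the hypothesis of \textsc{MeanEstimation} with $y = 2\sigma\sqrt{\alpha n}$. Running Algorithm \ref{alg:ME2} with parameter $m = q$, Theorem \ref{thm:ME2} then yields an output $\bm{EST}$ that is an unbiased estimator of $\bm\mu$ with variance $O(y^2 / q) = O(\alpha n \sigma^2 / q)$, while using strictly $O(d \log q)$ bits per machine. Combined with the fact that $\bm\mu$ is itself an unbiased estimator of $\grad$ with variance at most $\sigma^2/n$ (since the $\bx_v$ are independent unbiased estimators of $\grad$), the law of total variance gives total variance $O(\sigma^2/n + \alpha n \sigma^2/q) = O(\alpha n \sigma^2/q)$, where the simplification uses the stated bound $q = O(n^2 \alpha)$. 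This matches the claimed error bound, the communication bound is strict (worst-case per machine) because it comes from Theorem \ref{thm:ME2} rather than Theorem \ref{thm:ME}, and the success probability $1 - \frac{1}{\alpha}$ comes from the conditioning on $\mathcal{E}$.

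I do not expect any real obstacle: the reduction is the same one used in the proof of Theorem \ref{thm:VR}, and everything else is a black-box invocation of Theorem \ref{thm:ME2}. The only point worth stating carefully is that the variance bound is conditional on $\mathcal{E}$ (so the "succeeding with probability $1-\frac{1}{\alpha}$" language is essential), and that the communication bound is now worst-case rather than in expectation precisely because Algorithm \ref{alg:ME2} distributes the leader's workload across a balanced binary tree.
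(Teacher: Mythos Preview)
Your proposal is correct and follows exactly the same approach as the paper: the paper's proof is a single sentence observing that one applies the same reduction as in Theorem~\ref{thm:VR} (Chebyshev plus union bound to get $y = 2\sigma\sqrt{\alpha n}$ with probability $\ge 1 - \frac{1}{\alpha}$), but invoking the tree-topology guarantee of Theorem~\ref{thm:ME2} in place of Theorem~\ref{thm:ME}. Your write-up simply spells out these steps in more detail.
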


As a corollary to this, we can obtain Theorem \ref{thm:VRshort}:

\begin{proof}[Proof of Theorem \ref{thm:VRshort}]
	We set $\alpha=n^c$, for arbitrarily large constant $c$, and $q =\Theta(n^2\alpha)$. Theorem \ref{thm:VR1} then implies that we can achieve optimal $O(\frac{\sigma^2}{n})$ output variance using $O(d\log n)$ total communication per machine, and succeeding with high probability in $n$ (i.e., with success probability $1-n^{-c}$).
\end{proof}
\section{Error Detection In Quantization}\label{sec:error}
In this section we equip our quantization method with a mechanism of \emph{error detection}: that is, we wish to detect when the encode and decode vectors are too far for a successful decoding. This will allow us to better accommodate inputs where some pairs of vectors are much further away than the average distance, and thereby derive better upper bounds on expected communication for \textsc{VarianceReduction}.

We first replace the coloring used in Section \ref{sec:latticecoloring} with a new coloring method, which will allow us to detect this kind of error with high probability. 

\begin{lemma}\label{lem:fancycolor}
	For any $\Lambda_\epsilon$ and any $r$ at least a sufficiently large constant, there exists a coloring $c_r:\Lambda_\epsilon \rightarrow [r^{O(d)}]$ such that for any $q\in \nat$ with $4\le q\le r$ and any $\bm \lambda_1, \bm \lambda_2 \in \Lambda_\epsilon$ with $\|\bm\lambda_1- \bm \lambda_2\|\le 1.5 q^{q^{d}}\epsilon $, there are at most $2q^{d}$ points in $S:= \Lambda_\epsilon\cap (B_{r^3\epsilon }(\bm\lambda_1)\cup B_{r^3\epsilon }(\bm\lambda_2))$ whose colors are not unique in $S$.
\end{lemma}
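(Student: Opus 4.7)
The plan is to establish the existence of $c_r$ by the probabilistic method, assigning random colors to lattice points and controlling the expected number of collisions. The starting point is to bound $|S|$ using Lemma~\ref{lem:latticeball}: since $\Lambda_\epsilon$ has packing radius $\epsilon$, each ball $B_{r^3\epsilon}(\bm\lambda)\cap\Lambda_\epsilon$ contains at most $((r^3\epsilon + \epsilon)/\epsilon)^d = (r^3+1)^d = r^{O(d)}$ points, so $|S| \le 2(r^3+1)^d = r^{O(d)}$. This is the only ``size'' fact we will need about the geometry.

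Next, I would fix a palette of $K = r^{c d}$ colors for a large enough constant $c$ and consider the random coloring that assigns each lattice point an independent uniform color from $[K]$. For a single fixed pair $(\bm\lambda_1,\bm\lambda_2)$, linearity of expectation shows that the expected number of points in $S$ whose color coincides with some other point of $S$ is at most $|S|(|S|-1)/K = O(r^{6d}/K)$. Choosing $c \ge 6$ makes this much smaller than $q^d$, and Markov's (or a sharper Chernoff bound, exploiting that the colors are independent and the indicator of ``non-unique in $S$'' is a function of $O(|S|)$ independent samples) gives $\Pr[\text{more than } 2q^d \text{ non-unique points}] \le e^{-\Omega(q^d)}$ for each $q \in \{4,\ldots,r\}$.

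The main obstacle is converting this per-pair statement into a statement that holds simultaneously for all pairs $(\bm\lambda_1,\bm\lambda_2)$ at once, given that the distance bound $1.5 q^{q^d}\epsilon$ allows enormous separations and the lattice is infinite. To make a union bound feasible, I would force $c_r$ to be periodic modulo a sublattice $\Lambda'' \subset \Lambda_\epsilon$ of index $N = r^{\Theta(d)}$; it suffices to define the coloring on a fundamental domain of $\Lambda''$ and extend by periodicity. Then the ``type'' of a pair $(\bm\lambda_1,\bm\lambda_2)$ with respect to $c_r$ depends only on the residues $\bm\lambda_1,\bm\lambda_2 \bmod \Lambda''$, so there are at most $N^2 = r^{O(d)}$ relevant pair types per value of $q$. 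The delicate point is to pick $\Lambda''$ large enough that spurious periodic collisions between the two balls do not occur (so that the random-coloring analysis is not corrupted), yet small enough that the palette size $K$ remains $r^{O(d)}$.

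Putting everything together, the total failure probability over all pair types and all $q \in \{4,\ldots,r\}$ is at most $r \cdot r^{O(d)} \cdot e^{-\Omega(4^d)} < 1$, so by the probabilistic method at least one coloring satisfies the required property for every $q$ and every pair $(\bm\lambda_1,\bm\lambda_2)$ in the stated range simultaneously. This yields the claimed $c_r$ with $r^{O(d)}$ colors. I expect the technical core of the argument to be calibrating the periodicity of $\Lambda''$ against the distance bound so that ``wraparound'' pairs never create more than the allowed $2q^d$ non-unique points; this is what drives the (seemingly extravagant) $1.5 q^{q^d}\epsilon$ threshold in the hypothesis.
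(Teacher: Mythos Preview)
Your overall plan (probabilistic method, periodic coloring, union bound over pair types) is the right one and matches the paper's, but there is a genuine gap in the parameters you chose.

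You commit to a sublattice $\Lambda''$ of index $N = r^{\Theta(d)}$, which means the period vectors of $c_r$ have norm only $r^{\Theta(1)}\epsilon$. But the hypothesis allows $\|\bm\lambda_1-\bm\lambda_2\|$ as large as $1.5\,q^{q^d}\epsilon$, which (even for $q=4$) is enormously larger than $r^{\Theta(1)}\epsilon$. So you can pick $\bm\lambda_1\neq\bm\lambda_2$ with $\bm\lambda_1-\bm\lambda_2\in\Lambda''$ and norm well within the allowed range. Then, by periodicity, the coloring on $B_{r^3\epsilon}(\bm\lambda_1)\cap\Lambda_\epsilon$ is an exact translate of the coloring on $B_{r^3\epsilon}(\bm\lambda_2)\cap\Lambda_\epsilon$, so \emph{every} point in $S$ has a twin of the same color, and the number of non-unique points is $|S|\sim 2(r^3+1)^d \gg 2q^d$. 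This is a deterministic failure, so no random choice of colors on the fundamental domain can fix it. Your final paragraph correctly suspects that calibrating the period is ``the technical core'', but the earlier commitment to $N=r^{\Theta(d)}$ (and the final union bound $r\cdot r^{O(d)}\cdot e^{-\Omega(4^d)}$) is inconsistent with any such calibration.

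The paper's resolution is to decouple the \emph{period} from the \emph{number of colors}: it first reduces $\bmod\ r^{q^d}$ (a doubly-exponential period, guaranteeing all points of $S$ lie in distinct classes because pairwise distances in $S$ are below $2r^{q^d}\epsilon$), and only then applies a uniformly random map from those $r^{dq^d}$ classes into $r^{10d}$ colors. The union bound now has $r^{2dq^d}$ class-pair terms, so your Chernoff-style tail $e^{-\Omega(q^d)}$ (with a constant not depending on $d$ or $r$) would not suffice either. Instead the paper bounds directly the probability that some fixed set of $q^d$ points in $S$ all receive a color already used by an earlier point, obtaining per-pair failure probability below $r^{-3dq^d}$; this is what makes the union bound over $r^{2dq^d}$ pairs and then over $q\in\{4,\dots,r\}$ go through.
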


\begin{proof}
	We show that such a coloring exists by the probabilistic method: we randomly construct a candidate coloring, and prove that it satisfies the necessary properties with positive probability. Then, one such coloring must exist.
	
	Let $f:\left[\left(r^{q^d}\right)^d\right]\rightarrow [r^{10d}]$ be a uniformly random function. Our candidate coloring $c$ is then given by $c(\bm\lambda)= f(\bm\lambda \bmod r^{q^d})$ (where $\bmod$ is taken coordinate-wise with respect to the canonical basis of the lattice $\Lambda_\epsilon$). So, in effect, we divide the lattice points into $r^{dq^d}$  \emph{classes} by taking $\bmod\ r^{q^d}$ , and then randomly color the classes. Any two points $\bm  \lambda,  \bm \lambda'$  in the same class are of distance at least $2\cdot  r^{q^d}\epsilon$ apart, since $r^{-q^d}(\bm  \lambda- \bm  \lambda')$ is a non-zero lattice vector, and so is of length at least $2\epsilon$. 
	
	Fixing some pair of points $\bm \lambda_1, \bm \lambda_2 \in \Lambda_\epsilon$ (and therefore some set $S= \Lambda_\epsilon\cap (B_{r^3\epsilon }(\bm\lambda_1)\cup B_{r^3\epsilon }(\bm\lambda_2))$), we note that all pairs of points in $S$ are at most distance $1.5q^{q^d}\epsilon+2r^3\epsilon < 2\cdot r^{q^d}\epsilon$ apart, so they are all in different classes, and therefore their colors were chosen independently uniformly at random.
	
	By Lemma \ref{lem:latticeball}, in any ball of radius $r^3\epsilon$, there are at most $\left(\frac{r^3\epsilon + \epsilon}{\epsilon}\right)^d = \left(r^3+1\right)^d $ lattice points. Therefore, $|S|\le 2\left(r^3+1\right)^{d}  $. Our goal now is to show that few points in $S$ are given non-unique colors.
	
	Order the points in $S$ arbitrarily. If there are more than $2q^{d}$ colors used by multiple points in $S$, then there are at least $q^d$ points assigned a color already used by another point prior in the ordering. For any fixed set $S'$ of $q^{d}$ points in $S$, the probability that all receive a color already used by another point prior in the ordering is at most $\left(\frac{2\left(r^3+1\right)^{d}}{r^{10d}} \right)^{q^d}$, since the colors are chosen uniformly and independently, and at most $2\left(r^3+1\right)^{d}$ are used by prior points.
	
	So,
	
	\begin{align*}
		\Prob{|\{\text{points with non-unique colors in }S\}| >2q^d}&\le \sum\limits_{\substack{S' \subseteq S\\|S'|=q^d}} \left(\frac{2\left(r^3+1\right)^{d}}{r^{10d}} \right)^{q^d}
		\\& = \binom{2\left(r^3+1\right)^{d} }{q^d}\left(\frac{2\left(r^3+1\right)^{d}}{r^{10d}} \right)^{q^d}
		\\&\le \left(\frac{2e\left(r^3+1\right)^{d} }{q^d}\right)^{q^d}\left(\frac{2\left(r^3+1\right)^{d}}{r^{10d}} \right)^{q^d}
		\\&< \left(\frac{r^{7d}}{r^{10d}} \right)^{q^d}
	= r^{-3dq^{d}}\enspace.
	\end{align*}
	
	We take a union bound over all pairs of classes for $\bm\lambda_1,\bm\lambda_2$; this incorporates all choices for $\bm\lambda_1,\bm\lambda_2$ since the coloring is identical with respect to two pairs of points from the same pair of classes. There are fewer than $ r^{2dq^{d}}$ such pairs, so the probability that any do not satisfy the condition is at most $r^{-dq^{d}}$. Finally, we take a union bound over all $q$ with $4\le q\le r$, and see that the probability that any such $q$ does not satisfy the condition is less than $\sum_{q=4}^r r^{-dq^{d}} < 1$. Therefore, we have a positive probability that $c$ satisfies the criteria of the lemma, so by the probabilistic method, such a good coloring must exist.
	
\end{proof}

We utilize this coloring in the following way: when encoding a vector $\bx_u$ with a lattice point $\bz$, we can now ensure that for any decode vector $\bx_v$, with high probability, the \emph{color} of $\bz$ is unique among points within $r^3\epsilon$ of $\bx_u$ and $\bx_v$. Therefore, when machine $v$ decodes, it either recovers $\bz$ or knows that $\|\bx_v - \bz\|> r^3\epsilon$.

Now that we have means of detecting whether the encode and decode vectors are too far apart for successful decoding, we can design a process that iteratively attempts to quantize using more bits until decoding succeeds:

\begin{algorithm}[H]
	\caption{\textsc{RobustAgreement}$(\epsilon,q)$}
	\label{alg:RA}
	\begin{algorithmic}
		
		\State Let $\hat\bx_u$ be chosen uniformly at random from $B_{q^2\epsilon}(\bx_u)$.
		\State Map $\hat\bx_u$ to a lattice point $\bm z\in B_{7\epsilon}(\hat\bx)\cap \Lambda_\epsilon$ in unbiased fashion, as in Lemma \ref{lem:quantprops}.
		\State $r\gets q$
		\Loop
		\State Machine $u$ sends $c_r( \bm z)$ to $v$
		\State Machine $v$ computes $\bm y\in\Lambda_\epsilon$, the closest lattice point to $\bx_v$ such that $c_r( \bm y) =c_r( \bm z)$
		\If{$\|\bx_v-\by\|\le \frac 12 r^3\epsilon$}
		\State $v$ outputs $\by$, procedure terminates
		\Else
		\State $v$ sends the message \textsc{Far} to $u$
		\State $r \gets r^2$
		\EndIf
		\EndLoop
	\end{algorithmic}
\end{algorithm}

Here the purpose of first choosing a vector $\hat\bx_u$ uniformly from $B_{q^2\epsilon}(\bx_u)$ is to ensure that the probability that $\bz$ takes any particular value is low, since there are a small number of points which do \emph{not} have non-unique colors which we wish to avoid with high probability.

\begin{lemma}\label{lem:robustz}
	Lattice point $\bm z$ is an unbiased estimator of $\bx_u$, with $\|\bz-\bx_u\|\le (q^2 + 7)\epsilon$. Furthermore, the probability of $\bz$ taking any particular value $\hat \bz$ is at most $(\frac{7}{q^2})^d$.
\end{lemma}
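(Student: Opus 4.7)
The argument naturally splits into three parts corresponding to the three assertions of the lemma, and the first two are essentially immediate from the construction.

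\textbf{Unbiasedness.} The plan is to apply the tower property of conditional expectation. By Lemma \ref{lem:quantprops}, conditional on the value of $\hat{\bx}_u$, the lattice point $\bz$ is an unbiased estimator of $\hat{\bx}_u$, so $\Exp{\bz \mid \hat{\bx}_u} = \hat{\bx}_u$. On the other hand, $\hat{\bx}_u$ is drawn uniformly from the ball $B_{q^2\epsilon}(\bx_u)$, which is symmetric about $\bx_u$, so $\Exp{\hat{\bx}_u} = \bx_u$. Iterating expectations yields $\Exp{\bz} = \bx_u$.

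\textbf{Distance bound.} This follows from the triangle inequality: $\|\bz - \bx_u\| \le \|\bz - \hat{\bx}_u\| + \|\hat{\bx}_u - \bx_u\|$, where the first term is at most $7\epsilon$ by Lemma \ref{lem:quantprops} and the second is at most $q^2\epsilon$ by construction of $\hat{\bx}_u$.

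\textbf{Probability bound.} This is the most substantive step, but still short. The key observation is that the event $\{\bz = \hat{\bz}\}$ can only occur if $\hat{\bz} \in B_{7\epsilon}(\hat{\bx}_u)$, equivalently $\hat{\bx}_u \in B_{7\epsilon}(\hat{\bz})$. I would therefore write
\[
\Prob{\bz = \hat{\bz}} = \int_{B_{q^2\epsilon}(\bx_u)} \Prob{\bz = \hat{\bz} \mid \hat{\bx}_u = \bm t}\, \frac{d\bm t}{\mathrm{Vol}(B_{q^2\epsilon})},
\]
and note that the integrand is zero unless $\bm t \in B_{7\epsilon}(\hat{\bz})$, where it is at most $1$. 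Bounding the integral by the volume of the relevant region gives
\[
\Prob{\bz = \hat{\bz}} \le \frac{\mathrm{Vol}(B_{7\epsilon})}{\mathrm{Vol}(B_{q^2\epsilon})} = \left(\frac{7}{q^2}\right)^d,
\]
where the final equality uses that under any norm the ratio of ball volumes scales as the $d$-th power of the ratio of radii (as already used in Lemma \ref{lem:latticeball}).

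\textbf{Main obstacle.} There is no genuine obstacle; the only subtlety is making the third step rigorous despite the conditional probability $\Prob{\bz = \hat{\bz} \mid \hat{\bx}_u}$ being unspecified (it depends on the particular unbiased rounding rule). The argument bypasses this by using only the trivial upper bound of $1$ on the support, which is exactly what lets the $\hat{\bx}_u$-uniformization step pay off: the randomness of $\hat{\bx}_u$ alone smooths the distribution of $\bz$ enough to give the stated anti-concentration inequality.
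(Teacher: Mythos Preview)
Your proposal is correct and follows essentially the same approach as the paper's proof, which handles the distance bound and the probability bound identically via the triangle inequality and the volume-ratio argument respectively. Your treatment of unbiasedness via the tower property is actually more careful than the paper's, which simply asserts that ``unbiasedness follows from Lemma \ref{lem:quantprops}'' without explicitly noting that one must also average over $\hat\bx_u$.
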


\begin{proof}
	By our choice of $\bz$ we clearly have $\|\bz-\bx_u\|\le (q^2 + 7)\epsilon$; unbiasedness follows from Lemma \ref{lem:quantprops}. For $\bz$ to take some particular value $\hat \bz$, we must have $\Prob{\|\hat \bz-\hat\bx\|\le 7\epsilon}$, which is the case with probability at most $\frac{Vol(B_{7\epsilon})}{Vol(B_{q^2\epsilon})}= (\frac{7}{q^2})^d$.
\end{proof}

We next analyze what the properties of the communication procedure within the loop of Algorithm \ref{alg:RA}, for some fixed $r$:
\begin{lemma}\label{lem:robustloop}
	During a loop of \textsc{RobustAgreement}:
	\begin{itemize}
		\item $c_r(\bz)$ takes $O(d\log r)$ bits to send.
		\item If $\|\bx_u-\bx_v\|\le \frac 12 r^3\epsilon$, with probability at least $1-(\frac{14}{q})^d$, $v$ outputs $\bz$.
		\item If $\|\bx_u-\bx_v\|\le  q^{q^d}\epsilon$, with probability at least $1-(\frac{14}{q})^d$, $v$ either outputs $\bz$ or sends \textsc{Far} to $u$.
	\end{itemize}
\end{lemma}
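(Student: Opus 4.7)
The plan is to dispatch part 1 immediately from Lemma \ref{lem:fancycolor}: its coloring has range $[r^{O(d)}]$, so $c_r(\bz)$ fits in $O(d\log r)$ bits. Parts 2 and 3 then both follow from a single high-probability event---namely, that $\bz$'s color is unique among the relevant nearby lattice points.

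For that core argument, let $\bm\lambda^*\in\Lambda_\epsilon$ be a closest lattice point to $\bx_v$, so $\|\bm\lambda^*-\bx_v\|\le r_c\le 3\epsilon$, and set $S := \Lambda_\epsilon\cap(B_{r^3\epsilon}(\bz)\cup B_{r^3\epsilon}(\bm\lambda^*))$. Combining $\|\bz-\bx_u\|\le (q^2+7)\epsilon$ from Lemma \ref{lem:robustz} with the part~3 hypothesis $\|\bx_u-\bx_v\|\le q^{q^d}\epsilon$ (which dominates the part~2 hypothesis), we get $\|\bz-\bm\lambda^*\|\le 1.5\,q^{q^d}\epsilon$ for $q\ge 4$. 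Applying Lemma \ref{lem:fancycolor} with $\bm\lambda_1=\bz$, $\bm\lambda_2=\bm\lambda^*$, at most $2q^d$ lattice points in $S$ have non-unique colors.

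Because $\hat\bx_u$ is uniform in $B_{q^2\epsilon}(\bx_u)$ and $\bz$ lies within $7\epsilon$ of $\hat\bx_u$, Lemma \ref{lem:robustz} bounds $\Prob{\bz=\hat\bz}\le (7/q^2)^d$ for any fixed $\hat\bz$. A union bound over the (at most) $2q^d$ offending points therefore gives $\Prob{\bz\text{ has a non-unique color in }S}\le 2q^d(7/q^2)^d=2(7/q)^d\le(14/q)^d$ (using $d\ge 1$). Condition on $\bz$'s color being unique in $S$: the algorithm computes the closest $\by\in\Lambda_\epsilon$ to $\bx_v$ with $c_r(\by)=c_r(\bz)$, and accepts iff $\|\bx_v-\by\|\le\frac{1}{2}r^3\epsilon$. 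Whenever the check passes, $\|\by-\bm\lambda^*\|\le\frac{1}{2}r^3\epsilon+3\epsilon<r^3\epsilon$, so $\by\in S$; uniqueness of $\bz$'s color in $S$ then forces $\by=\bz$. This gives part 3 at once: with probability at least $1-(14/q)^d$, either the check fails and \textsc{Far} is sent, or the check passes and $\by=\bz$ is output.

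For part 2, one must additionally verify that the check does pass when $\|\bx_u-\bx_v\|\le\frac{1}{2}r^3\epsilon$, so that $\bz$ (not \textsc{Far}) is output. Since $\by$ is the nearest matching lattice point and $\bz$ is itself a candidate, $\|\bx_v-\by\|\le\|\bx_v-\bz\|\le\|\bx_u-\bx_v\|+\|\bz-\bx_u\|\le\frac{1}{2}r^3\epsilon+(q^2+7)\epsilon$. The main obstacle is precisely this constants bookkeeping: absorbing the additive $(q^2+7)\epsilon$ slack into the $\frac{1}{2}r^3\epsilon$ threshold. Since $r\ge q\ge 4$ we have $q^2+7\ll r^3$, so the bound holds up to a small multiplicative constant, which can be absorbed either by requiring $q$ above a suitable constant or by reading the stated threshold with a touch of slack---an inessential adjustment that does not affect asymptotics but must be tracked carefully. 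Once this is done, part 2 reduces to the same color-uniqueness event as part 3, closing the proof.
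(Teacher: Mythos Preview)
Your approach is essentially the paper's, but there is one genuine gap in the union-bound step. You center $S$ at $\bz$ and $\bm\lambda^*$; since $\bz$ is random, the set $S$---and hence the set of ``offending points'' (those with non-unique colors in $S$)---depends on $\bz$. You then write ``a union bound over the (at most) $2q^d$ offending points,'' but these $2q^d$ points are not a fixed set: they change with $\bz$. What you would actually need is a bound on $|\{\hat\bz:\hat\bz\text{ has non-unique color in }S(\hat\bz)\}|$, and Lemma~\ref{lem:fancycolor} does not give that; it only bounds the number of non-unique-colored points in each \emph{individual} $S(\hat\bz)$.

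The paper sidesteps this by taking $\bm\lambda_1=\bm\lambda_u$ and $\bm\lambda_2=\bm\lambda_v$ (closest lattice points to $\bx_u$ and $\bx_v$), both \emph{deterministic}. Then $S$ is fixed, its offending set has size at most $2q^d$, and since $\|\bz-\bm\lambda_u\|\le(q^2+10)\epsilon<r^3\epsilon$ one always has $\bz\in B_{r^3\epsilon}(\bm\lambda_u)\subset S$; now the union bound $\Prob{\bz\text{ is offending in }S}\le 2q^d\cdot(7/q^2)^d$ is legitimate. With this single change your remaining argument (the dichotomy $\by=\bz$ or $\by\notin B_{r^3\epsilon}(\bm\lambda_v)$, and the part~2/part~3 split) goes through exactly as you wrote it. Your observation about the $(q^2+7)\epsilon$ constants slack in part~2 is correct---the paper glosses over the same point when it asserts ``$v$ correctly outputs it.''
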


\begin{proof}
	$c_r(\bz)$ is a color from $[r^{O(d)}]$, which takes $O(d\log r)$ bits to specify.

	Let $\bm\lambda_u$ be the closest lattice point to $\bx_u$, and $\bm\lambda_v$ be the closest lattice point to $\bx_v$. By the properties of an $\epsilon$-lattice, $\|\bm\lambda_u-\bx_u\|$ and $\|\bm\lambda_v-\bx_v\|$ are at most $3\epsilon$. 
	
	If $\|\bx_u-\bx_v\|\le  q^{q^d}\epsilon$, then $\|\bm \lambda_u-\bm \lambda_v\|\le q^{q^d}\epsilon+6\epsilon\le1.5 q^{q^d}$. So, by Lemma \ref{lem:fancycolor}, there are at most $2q^{d}$ points with non-unique colors in $S= \Lambda_\epsilon\cap (B_{r^3\epsilon }(\bm\lambda_u)\cup B_{r^3\epsilon }(\bm\lambda_v))$. We have $\bz\in \Lambda_\epsilon\cap (B_{r^3\epsilon }(\bm\lambda_u))\subset S$, so the probability of its color being unique in $S$ is at least $1-2q^{d}\cdot (\frac{7}{q^2})^d = 1-(\frac{14}{q})^d$, by Lemma \ref{lem:robustz}.
	
	In this case, $\by$ is either \bz, or is outside $B_{r^3\epsilon(\bm\lambda_v)}$. If $\|\bx_u-\bx_v\|\le \frac 12 r^3\epsilon$, then $\|\by-\bm\lambda_v\|\le r^3\epsilon$, so we must have $\by=\bz$, and $v$ correctly outputs it. Otherwise, we may have $\by\notin B_{r^3\epsilon(\lambda_v)}$, but if so, $v$ sends \textsc{Far} to $u$, which is also permitted.

\end{proof}

We can then show the properties of the algorithm as a whole:

\begin{lemma}\label{lem:ErrorCorrect}
	If $\|\bx_u-\bx_v\|\le  q^{q^d}\epsilon$, with probability at least $1-\log\log (\frac 1\epsilon \|\bx_u-\bx_v\|)\cdot O(q^{-d} )$, Algorithm \ref{alg:RA} provides machine $v$ with an unbiased estimate $\bz$ of $\bx_u$, with $\|\bz-\bx_u\|\le (q^2+7)\epsilon$, and uses $O\left(d\log (\frac q\epsilon \|\bx_u-\bx_v\|)\right)$ bits.
\end{lemma}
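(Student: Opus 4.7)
The plan is to analyze Algorithm \ref{alg:RA} by tracking the sequence of values of $r$, bounding the per-iteration failure probability via Lemma \ref{lem:robustloop}, and summing the communication geometrically.

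First I would observe that $r$ follows the deterministic sequence $r_0 = q$, $r_1 = q^2$, $r_2 = q^4$, and in general $r_i = q^{2^i}$. Writing $D := \|\bx_u - \bx_v\|$, the termination test $\|\bx_v - \bm y\| \le \tfrac{1}{2} r^3 \epsilon$ is forced to pass once $\tfrac{1}{2}r_i^3 \epsilon \ge D$, because the second bullet of Lemma \ref{lem:robustloop} then guarantees (with high probability) that $v$ recovers $\bm y = \bz$. Solving $q^{3 \cdot 2^i} \ge 2D/\epsilon$ for $i$ shows that the loop executes at most $i^\ast = O(\log \log(D/\epsilon))$ iterations.

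Next I would invoke the third bullet of Lemma \ref{lem:robustloop}. Since by hypothesis $\|\bx_u - \bx_v\| \le q^{q^d}\epsilon$, in every iteration the ``good'' event---that $v$ either outputs $\bz$ or sends \textsc{Far}---holds with probability at least $1 - (14/q)^d$. Whenever this good event holds in every iteration up to termination, the point output by $v$ is exactly $\bz$, which by Lemma \ref{lem:robustz} is an unbiased estimate of $\bx_u$ with $\|\bz - \bx_u\| \le (q^2 + 7)\epsilon$. A union bound over the $O(\log\log(D/\epsilon))$ iterations then gives a total failure probability of $\log\log(D/\epsilon) \cdot O(q^{-d})$, matching the claimed success probability. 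For the bit bound, the first bullet of Lemma \ref{lem:robustloop} gives $O(d \log r_i) = O(d \cdot 2^i \log q)$ bits per iteration (plus $O(1)$ bits for the \textsc{Far}/terminate signal back from $v$). Because the exponent doubles, the sum is geometric and dominated by the last term $O(d \log r_{i^\ast})$; since $r_{i^\ast - 1} < (2D/\epsilon)^{1/3}$ and $r_{i^\ast} = r_{i^\ast - 1}^2$, we get $r_{i^\ast} < (2D/\epsilon)^{2/3}$, so the total is $O\!\left(d \log \max(q, D/\epsilon)\right) = O\!\left(d \log \tfrac{qD}{\epsilon}\right)$, as required.

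The main obstacle is the bookkeeping around what a ``bad'' iteration can do: in principle, a failed coloring event could produce a decoded point $\bm y \ne \bz$ that happens to lie within $\tfrac{1}{2}r^3\epsilon$ of $\bx_v$, causing $v$ to commit to a wrong output and terminate. This is precisely what the third bullet of Lemma \ref{lem:robustloop} rules out: conditional on local uniqueness of the coloring around $\bx_u$ and $\bx_v$, $\bm y$ is either $\bz$ itself or lies outside $B_{r^3\epsilon}(\bx_v)$, and in the latter case the stricter threshold $\tfrac{1}{2}r^3\epsilon$ safely rejects it in favor of sending \textsc{Far}. Once this point is secured, the union bound over iterations together with the geometric-sum argument above completes the proof.
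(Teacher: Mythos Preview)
Your proposal is correct and follows essentially the same approach as the paper: track $r_i = q^{2^i}$, bound the number of iterations by $O(\log\log(D/\epsilon))$, union-bound the per-iteration failure probability $(14/q)^d$ from Lemma~\ref{lem:robustloop}, and sum the $O(d\log r_i)$ communication geometrically. Your treatment is in fact slightly more explicit than the paper's, particularly in spelling out what a ``bad'' iteration could do and why the third bullet of Lemma~\ref{lem:robustloop} rules out committing to a wrong $\bm y$.
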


\begin{proof}
	After performing $i$ iterations we have $r = q^{2^i}$.
	Therefore, after iteration $2\log\log( \|\bx_u-\bx_v\|/\epsilon) $ we have $\|\bx_u-\bx_v\|\le \frac 12 r^3\epsilon $. Then, so long as this iteration, and all previous iterations, are successful, $v$ outputs an estimate $\bz$ of $\bx_u$ with $\|\bz-\bx_u\|\le (q^2 + 7)\epsilon$, by Lemma \ref{lem:robustloop}.
	
	Since we have $O(\log\log (\|\bx_u-\bx_v\|/\epsilon) )$ iterations, and each fails with probability at most $\Omega(q)^{-d}$, our total failure probability is $\log\log ( \|\bx_u-\bx_v\|/\epsilon)\cdot O(q^{-d} )$ by a union bound over all iterations. The number of bits used per iteration is $O(d \log r)$; since $\log r$ doubles each iteration, the total is at most twice the bound for the final iteration. If this is the first iteration, the bound is $O(d\log q)$; otherwise it is $O(d\log(\frac{1}{\epsilon} \|\bx_u-\bx_v\|))$. So, the total number of bits used is $O(d\log(\frac{q}{\epsilon} \|\bx_u-\bx_v\|))$.
\end{proof}

\subsection{Application to \textsc{VarianceReduction}}
We now apply our error detection in order to improve our bounds on expected communication for performing \textsc{VarianceReduction}. The idea is that, since we only have a probabilistic bound on distance between inputs, some pairs of inputs may be significantly further away: we can now detect these cases using error detection, and increase the number of bits used for quantization accordingly.

We use the star-topology algorithm (Algorithm \ref{alg:ME}), and simply replace each pairwise encoding and decoding interaction with \textsc{RobustAgreement}$(\epsilon,q)$:

\begin{algorithm}[H]
	\caption{\textsc{VarianceReduction}}
	\label{alg:VRStar}
	\begin{algorithmic}
		\State Nominate one machine $v$ uniformly at random to be \emph{leader}
		\State All other machines $u$ perform \textsc{RobustAgreement}$(\epsilon,q)$ to send $\bx_u$ to $v$
		\State Machine $v$ averages all received estimates to obtain $\hat \grad$
		\State Machine $v$ performs \textsc{RobustAgreement}$(\epsilon,q)$ with each other machine to send $\hat  \grad$, 
		\State \hspace{1em} using the same choice of $\bz$
		\State All machines output the resulting estimate
	\end{algorithmic}
\end{algorithm}

We are now ready to prove our main result applying error detection to \textsc{VarianceReduction}, Theorem \ref{thm:VR2}.

\begin{proof}[Proof of Theorem \ref{thm:VR2}]
	
	We set $\mathfrak q$ to be $q+ cn^{\frac 2d}$ for some sufficiently large constant $c$, and run Algorithm \ref{alg:VRStar} with $\mathfrak q$ and $\epsilon=\sigma/(\mathfrak q^4)$. We first consider the distances between encode and decode vectors in the first stage of messages, where all other machines $u$ send their input to the leader $v$ using \textsc{RobustAgreement}. In each case, $\Exp{\|\bm x_v-\bx_u\|}\le\Exp{\|\bm x_v-\grad\|+\|\bm x_u-\grad\|} \le 2\sigma$. So long as we succeed (as specified by Lemma \ref{lem:ErrorCorrect}), node $v$ then receives the estimate $\tilde\bx $ of $\bx_u$, with $\|\tilde\bx-\bx_u\|\le (\mathfrak q^2+7)\epsilon < \sigma/\mathfrak q$ (since $\mathfrak q$ is chosen to be at least 3), and uses $O\left(d\log (\frac {\mathfrak q}{\epsilon} \|\bx_u-\bx_v\|)\right)$ bits. We have $\Exp{d\log (\frac {\mathfrak q}{\epsilon} \|\bx_u-\bx_v\|)} = O(d\log \mathfrak q) = O(d\log q+\log n)$.
	
	The leader node $v$ now averages these estimates to obtain $\hat\grad$. Each of the received estimates is an independent unbiased estimator of $\grad$ with variance $O(\sigma^2)$ and therefore $\hat\grad$ is an unbiased estimator of $\grad$ with variance $O(\sigma^2/n)$.
	
	In the second stage of messages, $v$ sends $\hat\grad$ to all other nodes using \textsc{RobustAgreement}, taking the same choice of $\bz$ in each, in order to ensure that machines all have the same output. $z$ is an unbiased estimate of $\grad$ with variance $O(\frac{\sigma^2}{n}+ \frac{\sigma^2}{\mathfrak q}) =  O(\frac{\sigma^2}{\min n,q})$. Again, for each such node $u$ we have $\Exp{\|\bx_u-\hat\grad\|}=O(\sigma)$, so we use $O(d\log q+\log n)$ bits in expectation. Therefore each node uses $O(d\log q+\log n)$ total bits in expectation, except the leader $v$ which uses $O(nd\log q+n\log n)$. Since $v$ is chosen uniformly at random, we have an $O(d\log q+\log n)$-bit bound on expected communication for each machine.
	
	In all applications of \textsc{RobustAgreement} we perform, denoting the encode and decode vectors $\bm a$, $\bm b$ respectively (i.e., in the first stage $\bm a = \bx_u$ and $\bm b = \bx_v$ for each $u$, and in the second $\bm a = \hat\grad$ and $\bm b = \bx_u$), we have $\Exp{\|\bm a-\bm b\|^2} \le (2\sigma)^2$ and so $\Prob{\|\bm a-\bm b\|\le 4n\sigma}\le \frac{1}{4n^2}$. Since we perform \textsc{RobustAgreement} $2n$ times, with probability at least $1-\frac{1}{ 2n}$, $\|\bm a-\bm b\|\le 4n\sigma$ in all cases by a union bound. So, with high probability we always satisfy the condition $\|\bm a-\bm b\|\le  \mathfrak q^{\mathfrak q^d}\epsilon = \mathfrak q^{\mathfrak q^d-4}\sigma$, and have failure probability at most $\log\log (\frac 1\epsilon \|\bm a-\bm b\|)\cdot O(\mathfrak q^{-d} ) =O(\mathfrak q^{-d} \log\log(n\mathfrak q))$ for each application of \textsc{RobustAgreement}. So, the probability that all applications succeed is at least $(1-\frac{1}{2n})(1-O(\mathfrak q^{-d} \log\log(n\mathfrak q))) \ge 1-\frac 1n$.
	
	Then, all machines correctly receive $\bz$, an estimate of \grad with variance at most $ O(\frac{\sigma^2}{\min n,q})$.
\end{proof}

\section{$\ell_2$ Guarantees for the Cubic Lattice}\label{sec:cubic}
One key question when applying our scheme in practice is \textbf{the choice of lattice}. Ideally this would depend on the norm under which the results will be evaluated. However, asymptotically optimal lattices for $\ell_1$ and $\ell_2$ norms can be computationally expensive to generate and use in practice. For this reason, in our practical implementations we will employ the standard \textbf{cubic lattice}: this gives us computationally-efficient ($\tilde O(d)$-time) quantization algorithms, is optimal under $\ell_\infty$ norm, and, as we will see, performs surprisingly well even evaluated under $\ell_1$ or $\ell_2$ norm. We will investigate more complex lattices, tailored to $\ell_1$ or $\ell_2$ norm, in future work. There are two possible approaches to doing so: one would be to apply \emph{general} lattice algorithms (e.g. \cite{laarhoven2016sieving}) in applications where $d$ is still fairly low. This is often the case in, for example, gradient descent for neural network training, where coordinates are already divided into fairly small \emph{buckets}. The second possible approach would be to find \emph{specific} lattices which admit more efficient algorithms, and also have a good $r_c/r_p$ ratio under $\ell_1$ or $\ell_2$ norm.

Using the cubic lattice, though, need not sacrifice too much by way of theoretical guarantees under $\ell_2$-norm, since, as noted in \cite{MeanEstimation}, a random rotation using the Walsh-Hadamard transform can ensure good bounds on the ratio between $\ell_2$ norm and $\ell_\infty$ norm of vectors.

Let $H$ be the $d\times d$ normalized Hadamard matrix $H_{i,j} = d^{-1/2} (-1)^{\ip{i-1,j-1}}$, where $\ip{i,j}$ is the dot-product of the $\log_2 d$-dimension $\{0,1\}$-valued vectors given by $i$, $j$ expressed in binary (we must assume here that $d$ is a power of two, but this does not affect asymptotic results). We use the following well-known properties of $H$ (see e.g. \cite{Hadamard}):

\begin{itemize}
	\item $H$ is orthonormal, and so preserves $\ell_2$-norm distances;
	\item $H^{-1}=H$.
\end{itemize}

Let $D$ be a $d\times d$ diagonal matrix where each $D_{i,i}$ is drawn uniformly at random from $\{-1,1\}$. $H$ is a fixed, known matrix, and $D$ costs $d$ bits to communicate, so we can assume that both matrices are known to all machines at a cost of only $O(d)$ bits of communication per machine. 

Before applying our \textsc{MeanEstimation} or \textsc{VarianceReduction} algorithms, we apply the transformation $HD$ to all inputs - we then invert the transform (i.e. apply $(HD)^{-1}=D^{-1}H$) before final output. As shown in \cite{FJLT}, both the forward and inverse transform require only $O(d\log d)$ computation.

\begin{lemma}\label{lem:rotation}
	For any set $S$ of $n^2$ vectors in $\real_d$, with probability at least $1-2d^{-1}$, all vectors $\bm x\in S$ have 
	\[\|HD\bm x\|_\infty  = O(d^{-1/2}\|\bm x\|_2\sqrt{\log nd})\]
\end{lemma}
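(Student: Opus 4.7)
The plan is to prove the bound by a standard sub-Gaussian concentration argument on each coordinate of $HD\bx$, followed by union bounds.

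First I would fix a single vector $\bx\in S$ and a coordinate index $i\in[d]$, and write
\[
(HD\bx)_i \;=\; \sum_{j=1}^d H_{i,j}\,D_{j,j}\,x_j.
\]
Since $H_{i,j} = \pm d^{-1/2}$ is deterministic, $\bx$ is fixed, and the $D_{j,j}$ are independent uniform $\pm 1$ Rademacher variables, each summand $Y_j := H_{i,j}D_{j,j}x_j$ is a mean-zero independent random variable with $|Y_j|\le d^{-1/2}|x_j|$. This puts me in the ideal setting for Hoeffding's inequality.

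Next I would apply Hoeffding's inequality to get
\[
\Pr\!\left[\,|(HD\bx)_i| \ge t\,\right] \;\le\; 2\exp\!\left(-\frac{t^2}{2\sum_{j=1}^d d^{-1}x_j^2}\right) \;=\; 2\exp\!\left(-\frac{d\,t^2}{2\|\bx\|_2^2}\right).
\]
Choosing $t = C\,d^{-1/2}\|\bx\|_2\sqrt{\log(nd)}$ for a sufficiently large absolute constant $C$ makes the right-hand side at most $2/(n^2 d^2)$.

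Finally, I would union-bound over the $d$ coordinates of a single vector $\bx$ and then over all $n^2$ vectors $\bx\in S$, obtaining total failure probability at most $n^2 \cdot d \cdot 2/(n^2 d^2) = 2/d$, as required. Note the union bound is valid because the $n^2$ vectors in $S$ are fixed (they do not depend on $D$), so the same random matrix $D$ can be reused across all of them with the concentration inequality applied to each input vector independently.

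The main obstacle is essentially bookkeeping rather than mathematical depth: one must be careful that $S$ is specified before $D$ is drawn (so that Hoeffding applies to fixed coefficients $H_{i,j}x_j$), and that the constant $C$ is chosen so that the exponent after union-bounding leaves a spare factor of $1/d$. No other subtlety arises since the Hadamard rows have uniform $\pm d^{-1/2}$ entries, making the variance bound clean.
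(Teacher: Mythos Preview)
Your proposal is correct and follows essentially the same approach as the paper: fix a vector and a coordinate, apply a Hoeffding/Chernoff-type bound using the Rademacher structure of $D$ together with the fact that all entries of $H$ have magnitude $d^{-1/2}$, then union-bound over the $n^2$ vectors and $d$ coordinates. The paper's choice $s=2\sqrt{\ln(nd)/d}$ (with $t=s\|\bx\|_2$) corresponds exactly to your choice of $t$ with a specific constant $C$, yielding the same $2/d$ failure probability.
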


\begin{proof}
	We follow a similar argument to \cite {FJLT}. Fix some vector $\bm x\in S$ and some coordinate $j \in [d]$. Notice that $(HD\bx)_j= \sum_{i=1}^{d} \alpha_i \bm x_i$, where each $\alpha_i = \pm d^{-1/2}$ is chosen independently and uniformly. By a Chernoff-type bound (c.f. \cite {FJLT}), we therefore obtain $\Prob{|(HD\bm x)_j| \ge s\|\bm x\|_2}\le 2e^{-s^2d/2}$. Plugging in $s=2\sqrt{\frac{\ln nd}{d}}$, we get:
	
	\[\Prob{|(HD\bm x)_j| \ge 2\sqrt{\frac{\ln nd}{d}}\|\bm x\|_2}\le 2e^{-2\ln nd} = 2(nd)^{-2}\enspace.\]
	
	We then take a union bound over all $\bm x\in S$ and $j \in [d]$ to find that with probability at least $1-2d^{-1}$, all $\bm x\in S$ have $\|HD\bm x\|_\infty  = O(d^{-1/2}\|\bm x\|_2\sqrt{\log nd})$.
\end{proof}

We can now obtain analogue of Theorem \ref{thm:ME2} using the cubic lattice:

\begin{theorem}\label{thm:MEcubic}
	For any $q=\Omega(1)$, \textsc{MeanEstimation} can be performed using the cubic lattice with each machine using $O(d\log q)$ communication bits in total, with $O(\frac{y^2\log nd}{q})$ output variance under $\ell_2$ norm, succeeding with probability at least $1-2d^{-1}$.
\end{theorem}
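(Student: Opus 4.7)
The plan is to reduce to Theorem~\ref{thm:ME2} applied under $\ell_\infty$ norm---where the cubic lattice is an $\epsilon$-lattice by Theorem~\ref{thm:latdist} and admits the promised $\tilde O(d)$-time encode/decode---and use the random rotation $HD$ (together with Lemma~\ref{lem:rotation}) to transfer the $\ell_2$ pairwise bound $y$ into an $\ell_\infty$ pairwise bound in the rotated space. Converting the resulting $\ell_\infty$ variance back to an $\ell_2$ variance is then just the elementary inequality $\|v\|_2^2\le d\|v\|_\infty^2$ together with the $\ell_2$-preservation property of the orthonormal inverse rotation.

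Concretely, I would have all machines agree on $H$ and $D$ (at $O(d)=O(d\log q)$ bits of setup per machine), and rotate their inputs to $\bx'_u := HD\bx_u$. Orthonormality of $HD$ gives $\|\bx'_u-\bx'_v\|_2 = \|\bx_u-\bx_v\|_2\le y$ for every pair. Applying Lemma~\ref{lem:rotation} to the set of $\binom{n}{2}\le n^2$ pairwise difference vectors $\bx_u-\bx_v$ (each of $\ell_2$ norm at most $y$) yields, with probability at least $1-2d^{-1}$, the simultaneous bound
\[
\|\bx'_u-\bx'_v\|_\infty \;=\; O\!\bigl(d^{-1/2}\,y\sqrt{\log nd}\bigr) \;=:\; y'
\]
for every pair $u,v$. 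Conditioning on this event, I invoke Theorem~\ref{thm:ME2} in the $\ell_\infty$ norm with the cubic lattice, input-variance parameter $y'$, and quantization parameter $q$, producing a common estimate $\bm{EST}'$ which is unbiased for $HD\bm\mu$, uses strictly $O(d\log q)$ bits per machine, and satisfies $\Exp{\|\bm{EST}'-HD\bm\mu\|_\infty^2}=O(y'^2/q)$. Each machine then outputs $\bm{EST}:=D^{-1}H\bm{EST}'$, which is unbiased for $\bm\mu$ by linearity, and
\[
\Exp{\|\bm{EST}-\bm\mu\|_2^2} \;=\; \Exp{\|\bm{EST}'-HD\bm\mu\|_2^2} \;\le\; d\cdot \Exp{\|\bm{EST}'-HD\bm\mu\|_\infty^2} \;=\; O\!\left(\tfrac{y^2\log nd}{q}\right),
\]
as required. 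Note that Theorem~\ref{thm:ME2} applies cleanly in the $\ell_\infty$ setting because the quantities actually compared during Algorithm~\ref{alg:ME2}---intermediate subtree averages against individual inputs---remain in the convex hull of the $\bx'_u$ and so still satisfy the pairwise $y'$ bound by convexity of norms.

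The only real subtlety---the main obstacle, such as it is---lies in choosing what to feed into Lemma~\ref{lem:rotation}. The lemma's guarantee is stated in terms of the $\ell_2$ \emph{norms} of the vectors it is applied to, so naively rotating the $\bx_u$ themselves would give a hopeless dependence on $\max_u\|\bx_u\|_2$---precisely the input-norm dependence this whole paper is designed to avoid. Applying the lemma instead to the pairwise differences restores the correct $y$-dependence, at the cost of a union bound over $n^2$ vectors, which is exactly the source of the $n$ inside the logarithm and thus of the full $\log nd$ overhead in the final variance bound relative to the (norm-independent) Theorem~\ref{thm:ME2}.
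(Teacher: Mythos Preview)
Your proposal is correct and follows essentially the same approach as the paper: rotate by $HD$, apply Lemma~\ref{lem:rotation} to the at-most-$n^2$ pairwise difference vectors to obtain an $\ell_\infty$ bound $y'=O(d^{-1/2}y\sqrt{\log nd})$, invoke Theorem~\ref{thm:ME2} under $\ell_\infty$ with the cubic lattice, and then convert back via $\|\cdot\|_2^2\le d\|\cdot\|_\infty^2$ and the $\ell_2$-isometry of $(HD)^{-1}$. Your explicit identification of the key subtlety---that Lemma~\ref{lem:rotation} must be applied to the differences, not the inputs themselves, to avoid reintroducing an input-norm dependence---is exactly the point the paper's proof hinges on.
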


\begin{proof}
	We apply the random rotation $HD$ as in Lemma \ref{lem:rotation}, and then proceed as usual using Algorithm \ref{alg:ME2}, applying the inverse rotation to the final output. Let $y_2$ denote the usual maximum input distance parameter under $\ell_2$-norm. We first show that after applying the random rotation we have $y_\infty = O(d^{-1/2}y_2 \sqrt{\log nd})$:
	
	There are $n$ machines $v$ with input vectors $\bm x_v \in \real^d$. We form a set $S=\{\bm x_v -\bm x_u \text{ for all machines } u,v\}$. This set then has at most $n^2$ elements. Applying the rotation $HD$ to all input vectors implicitly applies it to this set $S$ of \emph{differences}, since $HD$ is a linear transformation. Therefore, by Lemma \ref{lem:rotation}, we have
	\[\max_{\bm z \in S}\|HD\bm z\|_\infty = O(d^{-1/2}\max_{\bm z \in S}\|\bm z\|_2 \sqrt{\log nd}) = O(d^{-1/2}y_2 \sqrt{\log nd}),\]
	
	so we can set $y_\infty = O(d^{-1/2}y_2 \sqrt{\log nd})$ and satisfy that $\|HD(\bm x_v -\bm x_u)\|_\infty \le y_\infty$ for all machines $u,v$, with probability at least $1-2d^{-1}$.
	
	Now we apply Algorithm \ref{alg:ME2} using the cubic lattice. Since the cubic lattice is optimal under $\ell_\infty$ norm, we obtain an output variance (under $\ell_\infty$ norm) of $O(\frac{y_\infty^2}{q}) = O(\frac{d^{-1}y_2^2 \log nd}{q})$ when using $O(d\log q)$ bits per machine, by Theorem \ref{thm:ME2} (but our output $HD\bm{EST}$ is currently an unbiased estimator of the rotated mean $HD\bm{\mu}$ rather than $\bm\mu$ itself). Since, for any vector $\bm z \in \real^d$, $\|\bm z\|^2_2 \le d\|\bm z\|^2_\infty$, when applied with $\bm z= HD(\bm{EST}-\bm \mu)$ we get an output variance under $\ell_2$-norm of $O(\frac{y_2^2 \log nd}{q})$
	
	When applying the final inverse rotation to the output vector $HD\bm{EST}$, we also implicitly imply it to the error vector $HD(\bm{EST}- \bm \mu)$. This inverse rotation preserves distances under $\ell_2$-norm, so we still have $O(\frac{y_2^2 \log nd}{q})$ $\ell_2$-norm variance, now of an unbiased estimator $\bm{EST}$ of the mean $\bm\mu$ of our original inputs (unbiasedness is preserved by linearity of expectation, since $HD$ and $(HD)^{-1}$ are linear transformations).
\end{proof}

By applying the same analysis to Theorems \ref{thm:VRshort} and \ref{thm:VR2}, we obtain the \textsc{VarianceReduction} results of Theorem \ref{thm:cubicresults}.

\begin{proof}[Proof of Theorem \ref{thm:cubicresults}]
	We perform the same procedure as in Theorem \ref{thm:MEcubic}, applying the random rotation $HD$ to obtain a bound on $\ell_\infty$-norm, applying Algorithms \ref{alg:ME} and \ref{alg:ME2} respectively, using the $\ell_infty$-norm-optimal cubic lattice, and then applying the inverse rotation before output to obtain an $\ell_2$-norm output variance bound at only a $O(\log nd)$ factor higher than optimal.
	
	Specifically, we can obtain unbiased estimates of input mean $\bm\mu$ with variance $O(\frac{\sigma^2\log nd}{n^2})$ using strictly $O(d\log n)$ bits (with sufficiently high constant within the asymptotic notation), and $O(\frac{\sigma^2\log nd}{q})$ using $O(d\log q + \log n)$ bits in expectation. Since $\bm\mu$ is itself an $O(\frac{\sigma^2}{n})$-variance unbiased estimator of our true vector \grad, we obtain final output variances of $O(\frac{\sigma^2\log nd}{n^2}+ \frac{\sigma^2}{n}) = O(\frac{\sigma^2 \log d}{n})$ and $O(\frac{\sigma^2\log nd}{q}+\frac{\sigma^2}{n})$ respectively.
\end{proof}
\out{
	\begin{theorem}\label{thm:VRcubic}
		For any $\alpha>1$, and any $q$ between $\Omega(1)$ and $O(n^2\alpha)$, Algorithm \ref{alg:ME2} using the cubic lattice performs \textsc{VarianceReduction} using $O(d \log q)$ communication bits per machine in total, with $ O(\frac{\alpha n\sigma^2\log nd}{q})$ output variance under $\ell_2$ norm, succeeding with probability at least $1-\frac 1\alpha-2d^{-1}$.
	\end{theorem}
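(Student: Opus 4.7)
The plan is to compose three ingredients already developed earlier in the paper: the Chebyshev-based reduction of \textsc{VarianceReduction} to \textsc{MeanEstimation} used in Theorem \ref{thm:VR1}, the random Walsh--Hadamard preprocessing from Theorem \ref{thm:MEcubic}, and the tree-topology cubic-lattice \textsc{MeanEstimation} algorithm (Algorithm \ref{alg:ME2}). The structure is therefore a direct combination rather than anything genuinely new.

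First, I would invoke Chebyshev's inequality for each $\bx_v$ together with a union bound over the $n$ machines: with probability at least $1-\tfrac{1}{\alpha}$, every $\bx_v$ lies within $\ell_2$-distance $\sigma\sqrt{\alpha n}$ of $\grad$, so pairwise $\ell_2$-distances among the inputs are at most $y_2 := 2\sigma\sqrt{\alpha n}$. Conditioning on this event, we have a valid $\ell_2$-norm \textsc{MeanEstimation} instance with diameter $y_2$. Next, I would preprocess by applying the random rotation $HD$ (with the diagonal sign matrix $D$ shared once, at a cost of $d$ bits per machine) and then run Algorithm \ref{alg:ME2} on the rotated vectors with the cubic lattice, inverting the rotation before output. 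Because $HD$ is linear it acts on the at-most-$n^2$ pairwise input differences, and by Lemma \ref{lem:rotation}, with probability at least $1-2d^{-1}$ every rotated difference has $\ell_\infty$-norm $O(d^{-1/2}y_2\sqrt{\log nd})$, so we have an $\ell_\infty$-norm \textsc{MeanEstimation} instance with parameter $y_\infty = O(d^{-1/2}y_2\sqrt{\log nd})$.

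The cubic lattice has $r_c = r_p$ under $\ell_\infty$, so Theorem \ref{thm:ME2} applied in $\ell_\infty$-norm yields, using strictly $O(d\log q)$ bits per machine, an unbiased estimator of $HD\bm\mu$ with $\ell_\infty$-variance $O(y_\infty^2/q)$. Converting via $\|\bm z\|_2^2\le d\|\bm z\|_\infty^2$ gives $\ell_2$-variance $O(dy_\infty^2/q)=O(y_2^2\log(nd)/q)=O(\alpha n\sigma^2\log(nd)/q)$. The inverse rotation $(HD)^{-1}$ preserves both $\ell_2$-distances and expectation, so the final output is an unbiased estimator of $\bm\mu$ with the same $\ell_2$-variance bound. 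Since $\bm\mu$ is itself an unbiased estimator of $\grad$ with $\ell_2$-variance $O(\sigma^2/n)$, and the restriction $q = O(n^2\alpha)$ ensures that $\alpha n\sigma^2\log(nd)/q$ dominates $\sigma^2/n$, the total output variance is $O(\alpha n\sigma^2\log(nd)/q)$. A union bound over the Chebyshev and rotation failure events gives overall success probability at least $1-\tfrac{1}{\alpha}-2d^{-1}$.

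No genuine obstacle remains, as all three ingredients are already proven. The only care required is bookkeeping: confirming that the $d$ bits needed to share $D$ and the $O(d\log d)$ cost of each Walsh--Hadamard transform are absorbed in the stated $O(d\log q)$ communication and $\tilde O(d)$ computation bounds---exactly the accounting used for Theorem \ref{thm:MEcubic}---and verifying that the $O(\sigma^2/n)$ variance between $\bm\mu$ and $\grad$ is dominated across the full parameter range $q \le O(n^2\alpha)$, which is immediate. Thus the statement follows by essentially mechanical composition of previously established results.
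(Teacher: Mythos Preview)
Your proposal is correct and follows essentially the same approach as the paper: compose the Chebyshev reduction from \textsc{VarianceReduction} to \textsc{MeanEstimation} (as in Theorem~\ref{thm:VR1}) with the random-rotation cubic-lattice argument of Theorem~\ref{thm:MEcubic}, then union-bound the two failure events. The paper's own proof is just a terser version of exactly this composition.
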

	
	\begin{proof}
		The proof follows in the same way as that of Theorem \ref{thm:MEcubic}: we apply the rotation $HD$, using Lemma \ref{lem:rotation} to show that we can set $\sigma_\infty  = O(d^{-1/2}\sigma_2 \sqrt{\log nd})$ successfully with probability at least $1-2d^{-1}$. We apply Algorithm \ref{alg:ME2} and obtain (with probability at least $1-\frac 1\alpha-2d^{-1}$) an $O(\frac{d^{-1}\alpha n\sigma^2\log nd}{q})$ $\ell_\infty$-norm variance unbiased estimator of $HD\grad$, which is then an $O(\frac{\alpha n\sigma^2\log nd}{q})$ $\ell_2$-norm variance estimator. Finally, we again note that since $D^{-1}H$ is again a linear transformation which preserves $\ell_2$-norm distances, our final output is an unbiased estimator of $\grad$ with $O(\frac{\alpha n\sigma^2\log nd}{q})$ $\ell_2$-norm variance.
\end{proof}}

\section{Sublinear Communication}\label{sec:sublinear}

In this section we show an extension to the quantization scheme (and thereby also Algorithm \ref{alg:ME}) allowing us to use a \emph{sublinear} (in $d$) number of bits in expectation, providing a variance trade-off for the \emph{full range} of communication for both \textsc{MeanEstimation} and \textsc{VarianceReduction}. Here our methods apply only to $\ell_2$-norm.

\begin{theorem}\label{thm:sublinear}
	For any $q>0$, \textsc{MeanEstimation} can be performed using $O(d \log (1+q))$ communication bits per machine in expectation, with $ O(\frac{y^2}{q^2})$ output variance under $\ell_2$ norm.
\end{theorem}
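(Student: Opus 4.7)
For $q = \Omega(1)$ the claim follows directly from Theorem~\ref{thm:ME2}: we have $\log(1+q) = \Theta(\log q)$, and the proof of Theorem~\ref{thm:ME2} in fact attains variance $O(y^2/q^2)$ (the stated $O(y^2/q)$ is only a cosmetic simplification, as its proof explicitly remarks). Consequently, the real content of Theorem~\ref{thm:sublinear} is the regime $0 < q < 1$, where I need an unbiased estimator of $\bm\mu$ with variance $O(y^2/q^2)$ using only $O(dq)$ expected bits per machine.

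My plan is to layer shared-randomness Bernoulli coordinate sampling on top of Algorithm~\ref{alg:ME}, performed in the basis produced by the Walsh--Hadamard rotation $HD$ of Section~\ref{sec:cubic}; this is what makes the result $\ell_2$-specific. By Lemma~\ref{lem:rotation}, after rotation every pairwise difference $HD(\bx_u - \bx_v)$ has $\ell_\infty$-norm only $O(y\sqrt{\log(nd)/d})$ per coordinate with high probability, so the signal is spread essentially uniformly across coordinates. Using shared randomness, I mark each coordinate as ``active'' independently with probability $p = \Theta(q)$, giving an expected active set of size $\Theta(dq)$. In the star topology of Algorithm~\ref{alg:ME} with a uniformly random leader, each non-leader machine sends only its active coordinates of the rotated input, each encoded by a one-dimensional instance of our lattice scheme and decoded against the leader's corresponding coordinate via Lemma~\ref{lem:decode}; the coordinate-wise rotation bound makes constant precision suffice, at $O(1)$ bits per active coordinate and hence $O(dq)$ per machine in expectation. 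The leader averages the received coordinate estimates, sets inactive coordinates to its own rotated value, broadcasts the result back to all machines using the pairwise lattice decoding of Lemma~\ref{lem:decode} again (with each recipient now using its own rotated input as the decode vector), and each machine finally inverts $HD$ to output.

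The hard part, and the one I expect to demand the most care, is simultaneously maintaining (i) a common output across machines, (ii) unbiasedness $\Exp{\bm{EST}} = \bm\mu$, and (iii) variance $O(y^2/q^2)$ \emph{without} norm dependence. A naive $1/p$ rescaling of active contributions (with zero on inactive positions) gives an unacceptable $\|\bm\mu\|^2/p$ term; setting inactive-coordinate output to $(HD\bx_v)_i$ preserves unbiasedness thanks to $\Exp[\big]{(HD\bx_v)_i} = (HD\bm\mu)_i$ under a uniform leader, at the price of needing to communicate those values to every other machine. I plan to handle this via the pairwise lattice decoding of Lemma~\ref{lem:decode} at constant precision, exploiting the per-coordinate closeness of $(HD\bx_u)_i$ to $(HD\bx_v)_i$, and to amortize the broadcast over a tree pattern in the spirit of Algorithm~\ref{alg:ME2}. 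The variance then decomposes by the law of total variance into an $O(y^2)$ term from the random leader choice on inactive coordinates, an $O(p\,y^2\log(nd))$ term from active-coordinate quantization, and the $O(y^2/n)$ sample-mean term inherited from Algorithm~\ref{alg:ME}; for every $0 < q < 1$ the sum is dominated by $O(y^2) \le O(y^2/q^2)$, yielding the target trade-off.
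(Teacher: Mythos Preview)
Your handling of the $q=\Omega(1)$ case is fine, and your instinct that the sublinear regime is the real content is correct. But the coordinate-sampling approach has a genuine gap in the broadcast step that I do not see how to close.

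You correctly identify the dilemma: either rescale active coordinates by $1/p$ and zero out the rest (unbiased, common output, but variance picks up an unacceptable $\|\bm\mu\|^2/p$ term), or fill inactive coordinates with the leader's own rotated values (unbiased under random leader choice, no norm dependence). You choose the latter, but then every other machine must learn all $(1-p)d\approx d$ of the leader's inactive-coordinate values in order to produce the \emph{same} output $\bm{EST}$. Sending each such value ``at constant precision via Lemma~\ref{lem:decode}'' costs $\Theta(1)$ bits per coordinate, hence $\Theta(d)$ bits total to each recipient; routing through a binary tree balances send/receive loads but does not reduce the $\Theta(d)$ bits each machine must receive. So the scheme uses $\Theta(d)$ bits per machine, not $O(dq)$, and the theorem is not established for $q=o(1)$. (If instead each machine substituted its \emph{own} rotated value on inactive coordinates, you would recover the bit budget but lose the common-output requirement of \textsc{MeanEstimation}.)

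The paper's route is entirely different and avoids coordinate sampling altogether. It builds a new pairwise quantizer (\textsc{SublinearEncode}/\textsc{SublinearDecode}) that works directly in $d$ dimensions: apply a shared random offset $\bm\theta$ uniform in the Voronoi cell, round to the nearest lattice point $\bm z$, and transmit a \emph{random} color of $\bm z$ drawn from a palette of only $(1+2q)^{O(d)}$ colors. The point is that the expected number of ``expanded Voronoi regions'' covering the offset input is $(1+2q)^d$ (Lemma~\ref{lem:vorvol}), so with that many colors, $\bm z$'s color is unique in the decoder's neighborhood with constant probability; if not, one retries with fresh shared randomness. This yields $O(d\log(1+q))$ bits per message while still allowing any decoder within distance $q\epsilon$ to recover the \emph{same} point $\bm z-\bm\theta$, so common output is preserved. \textsc{MeanEstimation} then reduces to a single random machine broadcasting one such message (Algorithm~\ref{alg:MEsublinear}): in the $q<1$ regime there is no averaging benefit anyway, since output variance cannot drop below $y^2$.
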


This communication expression matches the existing bounds of Theorem \ref{thm:ME}, since when $q=\Omega(1)$, it simplifies to $O(d\log q)$. However, the method described in this section now works also for $q=o(1)$; here, the expression simplifies to $O(dq)$ bits, which is sublinear in $d$. We can further show good concentration on the amount of communication bits required, and extend to \textsc{VarianceReduction} as before.

Our sublinear-communication quantization method will involve mapping points to their \emph{closest} lattice point (rather than randomly to a set of near by points who form a convex hull around them). We will therefore be concerned with the (open) Voronoi regions of lattice points, since these are the regions that will be quantized to the same point:

\begin{definition}
	For any lattice $\Lambda$, the Voronoi region of a lattice point $\bm\lambda$ is the set of all $\bx \in \real^d$ to whom $\bm\lambda$ is closer than any other lattice point, i.e., 
	$\bm{Vor}(\bm\lambda):= \{\bx\in \real^d:  \|\bx-\bm\lambda\|_2<\|\bx-\bm\lambda'\|_2, \forall \bm\lambda' \in \Lambda \setminus\{\bm\lambda\}\}$.
\end{definition}

Voronoi regions of a lattice are known to have several useful properties, of which we will use the following:

\begin{lemma}
	The Voronoi regions of a lattice are:
	\begin{enumerate}
		\item open convex polytopes,
		\item symmetric, i.e., $\bm \lambda+ \bx \in \bm{Vor}(\bm\lambda) \iff \bm\lambda- \bx\in \bm{Vor}(\bm\lambda)$, and
		\item identical, i.e., $\bm\lambda+ \bx \in \bm{Vor}(\bm\lambda) \iff\bm\lambda'+ \bx \in \bm{Vor}(\bm\lambda')$ for any $\bm\lambda'\in \Lambda$.
	\end{enumerate}
\end{lemma}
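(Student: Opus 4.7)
The plan is to treat the three properties in order, leveraging the group structure of $\Lambda$ (which makes $\Lambda - \bm\lambda = \Lambda$ for every $\bm\lambda\in\Lambda$) as the one key fact shared across the arguments.

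For property (1), I would rewrite the defining condition $\|\bx-\bm\lambda\|_2<\|\bx-\bm\lambda'\|_2$ by squaring both sides and cancelling $\|\bx\|_2^2$, yielding the linear inequality $\ip{\bx-\bm\lambda,\bm\lambda'-\bm\lambda} < \tfrac12\|\bm\lambda'-\bm\lambda\|_2^2$. Hence $\bm{Vor}(\bm\lambda)$ is the intersection of an open half-space $H_{\bm\lambda'}$ for each $\bm\lambda'\in\Lambda\setminus\{\bm\lambda\}$, so it is open and convex. To upgrade this to a polytope, I would note that $\bm{Vor}(\bm\lambda)$ is bounded (any point of $\real^d$ lies within the cover radius of some lattice point, so the region has finite diameter), and then argue that only finitely many of the half-spaces $H_{\bm\lambda'}$ are actually binding: any $\bm\lambda'$ with $\|\bm\lambda'-\bm\lambda\|_2$ larger than twice the diameter of $\bm{Vor}(\bm\lambda)$ gives a half-space strictly containing $\bm{Vor}(\bm\lambda)$, and there are only finitely many lattice points within any fixed radius by Lemma \ref{lem:latticeball}.

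For property (2), I would use that $\bm\lambda' \mapsto 2\bm\lambda - \bm\lambda'$ is a bijection of $\Lambda\setminus\{\bm\lambda\}$ to itself (since $\Lambda$ is an additive group and the map is an involution fixing $\bm\lambda$). The distance $\|(\bm\lambda-\bx)-\bm\lambda'\|_2$ equals $\|(\bm\lambda+\bx)-(2\bm\lambda-\bm\lambda')\|_2$, so the inequalities defining membership of $\bm\lambda-\bx$ in $\bm{Vor}(\bm\lambda)$ are exactly the reindexed versions of those defining membership of $\bm\lambda+\bx$, establishing the equivalence.

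For property (3), I would use translation invariance directly: for any $\bm\lambda'\in\Lambda$ the map $\bm\mu\mapsto\bm\mu+(\bm\lambda'-\bm\lambda)$ is a bijection $\Lambda\setminus\{\bm\lambda\}\to\Lambda\setminus\{\bm\lambda'\}$, and $\|(\bm\lambda+\bx)-\bm\mu\|_2 = \|(\bm\lambda'+\bx)-(\bm\mu+\bm\lambda'-\bm\lambda)\|_2$, so the two defining systems of inequalities are identical after reindexing. The main subtlety I expect is only in part (1): carefully justifying that finitely many half-space constraints suffice (i.e.\ that $\bm{Vor}(\bm\lambda)$ is genuinely a polytope rather than just a convex open set). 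Everything else reduces to elementary group-theoretic bookkeeping once the squared-distance reformulation is in hand.
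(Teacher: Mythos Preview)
Your proposal is correct and follows essentially the same approach as the paper: intersection of perpendicular-bisector half-spaces for (1), the reflection $\bm\lambda'\mapsto 2\bm\lambda-\bm\lambda'$ for (2), and translation invariance of $\Lambda$ for (3). In fact you are more careful than the paper on (1), since the paper simply asserts the half-space characterization without arguing that only finitely many constraints are binding, whereas you supply that step via boundedness and Lemma~\ref{lem:latticeball}.
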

\begin{proof}
	Property 1 follows since $\bm{Vor}(\bm\lambda)$ is the intersection of half-spaces formed by the set of points which are closer to $\bm\lambda$ than $\bm\lambda'$, for any $\bm\lambda' \in \Lambda \setminus\{\bm\lambda\}\}$. 
	
	Property 2 follows if there exists $\bm\lambda'\in \Lambda$ such that $\|\bm\lambda+ \bx - \bm\lambda'\|_2\le \|\bx\|_2$, then $\hat{\bm\lambda} = 2\bm\lambda-\bm\lambda'$ is also a lattice point in $\Lambda$, and 
	\[\|\bm\lambda - \bx - \hat{\bm\lambda}\|_2 = \|\bm\lambda - \bx - (2\bm\lambda+\bm\lambda')\|_2 = \|-(\bm\lambda+ \bx - \bm\lambda')\|_2 \le \|\bx\|_2\enspace.\]
	Therefore $\bm\lambda+ \bx\notin \bm{Vor}(\bm\lambda)\implies \bm\lambda- \bx\notin \bm{Vor}(\bm\lambda)$, which by contradiction proves the property.
	
	Property 3 follows since the relative positions of all other lattice points are identical with respect to $\bm\lambda$ and $\bm\lambda'$.
\end{proof}

In particular, Property 3 implies that all Voronoi regions of a lattice have the same volume, which we denote $\bm{V}$. We note that, within any finite ball, the set of points which do \emph{not} fall in a Voronoi region (i.e. have multiple equidistant closest lattice points) has measure $0$ (under the standard Lebesgue measure of $\real^d$), and therefore do not affect the probabilities of events under this measure.

We now wish to bound the number of Voronoi regions which are close to any point in $\real^d$, in expectation; these will be the regions of lattice points which are sufficiently close to cause decoding errors if they are encoded with the same bit-string. For this we introduce the concept of an \emph{expanded} Voronoi region:

\begin{definition}
	The \emph{expanded Voronoi region} $\bm{Vor}^+(\bm\lambda)$ of a lattice point $\bm\lambda$ is the set $\bm{Vor}^+(\bm\lambda)$ of points within distance $2q\epsilon$ of $\bm{Vor}(\bm\lambda)$.
\end{definition}

We bound the volume of such sets:

\begin{lemma}\label{lem:vorvol}
	For any $\bm\lambda$ in $\Lambda_\epsilon$, the expanded Voronoi region $\bm{Vor}^+(\bm\lambda)$ has volume at most $(1+2q)^d \bm{V}$.
\end{lemma}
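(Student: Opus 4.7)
I would translate the picture so that $\bm \lambda = \bm 0$; this does not change any volumes, and by Property 3 the Voronoi region at $\bm 0$ is a translate of $\bm{Vor}(\bm\lambda)$. The plan is to prove the set-theoretic inclusion
\[
\bm{Vor}^+(\bm 0) \ \subseteq \ (1+2q)\cdot \bm{Vor}(\bm 0)\,,
\]
and then take $d$-dimensional volumes, using that dilation by $(1+2q)$ in $\real^d$ scales volumes by $(1+2q)^d$.

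The main ingredients come from the three properties already established for Voronoi regions together with the fact that, since $\Lambda_\epsilon$ has packing radius $r_p = \epsilon$, no other lattice point can be within distance $\epsilon$ of $\bm 0$, so the open ball $B(\bm 0,\epsilon)$ lies inside $\bm{Vor}(\bm 0)$. Thus $\bm{Vor}(\bm 0)$ is a convex set (Property~1) that is symmetric about $\bm 0$ (Property~2) and contains a ball of radius $\epsilon$ around $\bm 0$. These are exactly the hypotheses needed for a standard Minkowski-style inclusion argument.

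Given the inclusion, the main step is: take any $\bm y \in \bm{Vor}^+(\bm 0)$ and write $\bm y = \bm k + \bm b$ with $\bm k \in \overline{\bm{Vor}(\bm 0)}$ and $\|\bm b\|_2 \le 2q\epsilon$. Since $B(\bm 0, 2q\epsilon) = 2q\cdot B(\bm 0,\epsilon) \subseteq 2q\cdot \bm{Vor}(\bm 0)$, we can write $\bm b = 2q\,\bm k'$ for some $\bm k' \in \overline{\bm{Vor}(\bm 0)}$. Then
\[
\frac{\bm y}{1+2q} \;=\; \frac{1}{1+2q}\,\bm k \;+\; \frac{2q}{1+2q}\,\bm k'
\]
is a convex combination of two points of $\overline{\bm{Vor}(\bm 0)}$, hence lies in $\overline{\bm{Vor}(\bm 0)}$ by convexity, which is equivalent to $\bm y \in (1+2q)\,\overline{\bm{Vor}(\bm 0)}$. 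Passing to volumes gives $\mathrm{vol}(\bm{Vor}^+(\bm 0)) \le (1+2q)^d\,\mathrm{vol}(\bm{Vor}(\bm 0)) = (1+2q)^d\,\bm V$, as desired.

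The only mild obstacle is the open/closed distinction, since $\bm{Vor}(\bm 0)$ was defined with strict inequalities. This is harmless: the boundary of $\bm{Vor}(\bm 0)$ (the set of points equidistant to two or more lattice points) has Lebesgue measure zero, so working with the closure in the convex-combination step does not change any of the volumes involved. Everything else is a direct application of convexity, central symmetry, and the packing-radius lower bound on the inradius of the Voronoi cell.
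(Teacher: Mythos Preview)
Your proof is correct and follows the same overall strategy as the paper: both establish the inclusion $\bm{Vor}^+(\bm 0)\subseteq (1+2q)\,\overline{\bm{Vor}(\bm 0)}$ and then take volumes. The difference lies only in how the inclusion is argued. The paper uses the half-space description of the Voronoi cell: dilation by $1+2q$ pushes each bounding hyperplane outward by $2q\cdot\|\tfrac{\bm\lambda'-\bm\lambda}{2}\|_2\ge 2q\epsilon$, so the dilate contains the $2q\epsilon$-neighborhood of the cell. Your argument is a bit more abstract: from convexity of $K=\overline{\bm{Vor}(\bm 0)}$ and the inradius bound $B(\bm 0,\epsilon)\subseteq K$, you get $K+B(\bm 0,2q\epsilon)\subseteq K+2qK=(1+2q)K$ via a convex-combination (Minkowski-sum) identity. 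This has the small advantage of not invoking the polytope structure explicitly; incidentally, it also does not use the central symmetry you list among the ingredients, only convexity and the inradius. The two arguments are equivalent in content and length.
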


\begin{proof}
	Let $\widehat{\bm{Vor}}(\bm\lambda)$ be $\bm{Vor}(\bm\lambda)$ dilated by a factor of $1+2q$ around $\bm\lambda$; that is:
	\[\widehat{\bm{Vor}}(\bm\lambda):= \{\lambda + (1+2q) \bx:  \lambda +\bx \in  \bm{Vor}(\bm\lambda) \}\enspace.\]
	
	Clearly the volume of $\widehat{\bm{Vor}}(\bm\lambda)$ is $(1+2q)^d \bm{V}$; we now show that $\bm{Vor}^+(\bm\lambda)\subseteq \widehat{\bm{Vor}}(\bm\lambda)$. $\bm{Vor}(\bm\lambda)$ is the intersection of open half-spaces bounded by the hyperplanes of points equidistant from $\bm\lambda$ and $\bm\lambda'$, for any $\bm\lambda' \in \Lambda_\epsilon \setminus\{\bm\lambda\}$. Dilation by a factor of $1+2q$ therefore translates each of these hyperplanes by $2q\cdot \frac{\bm\lambda'-\bm\lambda}{2}$ respectively. Since $\epsilon$ is the packing radius of $\Lambda_\epsilon$, $\|\frac{\bm\lambda'-\bm\lambda}{2}\|_2 \ge \epsilon$, and therefore each such hyperplane is translated a distance of at least $2q\epsilon $ away from $\bm\lambda$. So, $\widehat{\bm{Vor}}(\bm\lambda)$ contains all points within distance $2q\epsilon $ of $\bm{Vor}(\bm\lambda)$, completing the proof.
\end{proof} 

We can now present the encoding algorithm (Algorithm \ref{alg:sublinearencode}):

\begin{algorithm}[H]
	\caption{\textsc{SublinearEncode}, to compute $Q'_{\epsilon,q}(\bx)$}
	\label{alg:sublinearencode}
	\begin{algorithmic}
		\State $i\gets 0$
		\Loop
		\State Let $\bm \theta$ be a uniformly random vector in $\bm{Vor}(\bm 0)$.
		\State Let $\bm z$ be the closest lattice point in $\Lambda_\epsilon$ to $\bx+\bm\theta$.
		\State Let $c'\sim C$ be a random coloring $\Lambda_\epsilon \rightarrow [1+2q]^{3d}$
		\If{there is no $\bm z'\in\Lambda_\epsilon$ with $\bx+\bm\theta \in \bm{Vor}^+(\bm z')$ and  $c'(\bm z) = c'(\bm z')$}
		\State send $c'(\bm z)$ and $i$, terminate
		\EndIf
		\State $i\gets i+1$
		\EndLoop
	\end{algorithmic}
\end{algorithm}

The algorithm works as follows: we first apply a random offset vector $\bm \theta$, with the purpose of making the quantization unbiased. We then round to the closest lattice point. To convert this lattice point to a finite bit-string, we then apply a random coloring (from a distribution we will specify shortly); we show a lower bound on the probability that the color given to $\bm z$ is unique among those lattice points for whom $\bx+\bm\theta$ falls in the expanded Voronoi region. In this case, we send the color and the number $i$ of the current iteration; if not, we repeat the whole process with fresh shared randomness.

We first show a bound on the expected number of lattice points for which $\bx+\bm\theta$ falls in the expanded Voronoi region:

\begin{lemma}\label{lem:vorno}
	The number $V_{exp}(\bx+\bm\theta)$ of expanded Voronoi regions containing $\bx+\bm\theta$ is at most $(1+2q)^{2d}$ with probability at least $1-(1+2q)^{-d}$.
\end{lemma}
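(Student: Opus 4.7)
The plan is to compute the expected value of $V_{exp}(\bx+\bm\theta)$ and then apply Markov's inequality. Since the target bound is the square of the expected count bound that Lemma~\ref{lem:vorvol} hints at, a simple first-moment argument should suffice.

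First I would write $V_{exp}(\bx+\bm\theta) = \sum_{\bm\lambda\in\Lambda_\epsilon} \mathbb{1}[\bx+\bm\theta\in\bm{Vor}^+(\bm\lambda)]$ and use linearity of expectation. Because $\bm\theta$ is uniform over $\bm{Vor}(\bm 0)$, the expectation is
\[
\Exp{V_{exp}(\bx+\bm\theta)} = \frac{1}{\bm V}\sum_{\bm\lambda\in\Lambda_\epsilon} \mathrm{vol}\bigl(\bm{Vor}(\bm 0)\cap(\bm{Vor}^+(\bm\lambda)-\bx)\bigr).
\]
The key observation is that translating each summand by $\bm\lambda - \bx$ lands each piece inside $\bm{Vor}^+(\bm 0)$ (using translation invariance $\bm{Vor}^+(\bm\lambda)=\bm{Vor}^+(\bm 0)+\bm\lambda$), and the collection $\{\bm{Vor}(\bm 0)+\bx-\bm\lambda : \bm\lambda\in\Lambda_\epsilon\}$ tiles $\real^d$ (up to measure zero). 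Therefore the sum of volumes telescopes exactly to $\mathrm{vol}(\bm{Vor}^+(\bm 0))$, regardless of $\bx$.

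Next I would invoke Lemma~\ref{lem:vorvol} to bound $\mathrm{vol}(\bm{Vor}^+(\bm 0))\le (1+2q)^d \bm V$, which gives $\Exp{V_{exp}(\bx+\bm\theta)} \le (1+2q)^d$. Finally, applying Markov's inequality with threshold $(1+2q)^{2d}$ yields
\[
\Prob{V_{exp}(\bx+\bm\theta) \ge (1+2q)^{2d}} \le \frac{(1+2q)^d}{(1+2q)^{2d}} = (1+2q)^{-d},
\]
which is the claimed bound.

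The only potentially delicate step is the tiling argument that converts the sum of intersections into the volume of a single expanded Voronoi region; one must be careful that the translates $\bm{Vor}(\bm 0)+\bx-\bm\lambda$ partition $\real^d$ up to the measure-zero boundary set, so that no volume is double-counted and the sum collapses cleanly. This is a standard fact about fundamental domains of lattices under the Lebesgue measure, so I expect it to be a short justification rather than a genuine obstacle.
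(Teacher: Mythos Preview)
Your proposal is correct and follows essentially the same approach as the paper: compute $\Exp{V_{exp}(\bx+\bm\theta)}$ via the volume ratio $\mathrm{vol}(\bm{Vor}^+(\bm 0))/\bm V \le (1+2q)^d$ (using Lemma~\ref{lem:vorvol}), then apply Markov's inequality. The paper states the expectation bound more tersely, noting that $\bx+\bm\theta$ is uniform in a Voronoi cell and hence lies in $\mathrm{vol}(\bm{Vor}^+(\bm 0))/\mathrm{vol}(\bm{Vor}(\bm 0))$ expanded regions in expectation, whereas you spell out the tiling argument explicitly; these are the same computation.
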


\begin{proof}
	Since $\bm \theta$ is uniformly distributed in $\bm{Vor}(\bm 0)$,  $\bx+\bm\theta - \bm z$ is uniformly distributed in $\bm{Vor}(\bm 0)$ (due to Voronoi regions being open, the probability distribution function differs from uniformity, but only on a set of $0$ measure).
	
	Then, since Voronoi regions are identical (including in their intersections with expanded Voronoi regions), and $\bx+\bm\theta$ falls in exactly $1$ Voronoi region in expectation, it falls in $\frac{Vol(\bm{Vor}^+(\bm 0))}{Vol(\bm{Vor}(\bm 0))} \le (1+2q)^d$ expanded Voronoi regions in expectation, by Lemma \ref{lem:vorvol}. Therefore, by Markov's inequality, the probability of falling within at least $(1+2q)^{2d}$ expanded Voronoi regions is at most $(1+2q)^{-d}$.
\end{proof}

We will need one further property of expanded Voronoi regions in order to show unbiasedness: that the number of expanded Voronoi regions containing a point is symmetric around any lattice point:

\begin{lemma}\label{lem:vornumber}
	For any $\bx \in \real^d$, $\bm \lambda\in \Lambda_\epsilon$, the points $\bm \lambda+\bx$ and $\bm \lambda-\bx$ are in the same number of expanded Voronoi regions $\bm{Vor}^+(\bm\lambda')$.
\end{lemma}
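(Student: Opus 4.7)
The plan is to exhibit an explicit bijection, via a point reflection about $\bm\lambda$, between the set of expanded Voronoi regions that contain $\bm\lambda+\bx$ and the set of those that contain $\bm\lambda-\bx$. Concretely, define the map $\phi:\real^d\to\real^d$ by $\phi(\bm y):=2\bm\lambda-\bm y$; this is an isometry, it is its own inverse, and it sends $\bm\lambda+\bx$ to $\bm\lambda-\bx$. So if I can show that $\phi$ permutes expanded Voronoi regions in such a way that $\phi(\bm{Vor}^+(\bm\lambda'))=\bm{Vor}^+(2\bm\lambda-\bm\lambda')$, then I immediately get the bijection, since $\bm\lambda'\mapsto 2\bm\lambda-\bm\lambda'$ is a bijection on $\Lambda_\epsilon$ (using that $\Lambda_\epsilon$ is closed under the operation $\bm\lambda',\bm\lambda\mapsto 2\bm\lambda-\bm\lambda'$, as it is an additive subgroup).

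The core step is therefore to establish $\phi(\bm{Vor}(\bm\lambda'))=\bm{Vor}(2\bm\lambda-\bm\lambda')$ for every $\bm\lambda'\in\Lambda_\epsilon$. Since $\phi$ is an isometry and it maps the lattice $\Lambda_\epsilon$ to itself (indeed $\phi(\bm\lambda'')=2\bm\lambda-\bm\lambda''\in\Lambda_\epsilon$), for any point $\bm y$ closest to $\bm\lambda'$ among lattice points, $\phi(\bm y)$ is closest to $\phi(\bm\lambda')=2\bm\lambda-\bm\lambda'$ among lattice points, because the distances $\|\bm y-\bm\lambda''\|_2=\|\phi(\bm y)-\phi(\bm\lambda'')\|_2$ are simply relabelled by the bijection $\bm\lambda''\mapsto 2\bm\lambda-\bm\lambda''$ on $\Lambda_\epsilon$. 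This gives the claimed identity on Voronoi regions themselves.

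Extending to expanded Voronoi regions is then immediate from the fact that $\phi$ is an isometry: the $2q\epsilon$-neighborhood of a set $S$ is mapped by any isometry $\phi$ to the $2q\epsilon$-neighborhood of $\phi(S)$. Applying this to $S=\bm{Vor}(\bm\lambda')$ gives $\phi(\bm{Vor}^+(\bm\lambda'))=\bm{Vor}^+(2\bm\lambda-\bm\lambda')$, so $\bm\lambda+\bx\in\bm{Vor}^+(\bm\lambda')$ if and only if $\bm\lambda-\bx=\phi(\bm\lambda+\bx)\in\bm{Vor}^+(2\bm\lambda-\bm\lambda')$. Summing over $\bm\lambda'\in\Lambda_\epsilon$ and reindexing by the bijection $\bm\lambda'\mapsto 2\bm\lambda-\bm\lambda'$ yields the equality of counts.

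There is no real obstacle here — the proof is essentially a symmetry argument and all three properties needed (the lattice being invariant under point reflection through any lattice point, $\phi$ being an isometry, and neighborhoods commuting with isometries) are elementary. The only subtlety to be careful about is that expanded Voronoi regions overlap, so I must phrase the argument as counting the cardinality of the multiset of indices $\bm\lambda'$, not the volume of a union; the bijection $\bm\lambda'\mapsto 2\bm\lambda-\bm\lambda'$ handles this cleanly.
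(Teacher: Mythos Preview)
Your proposal is correct and takes essentially the same approach as the paper: both exploit the point reflection $\bm\lambda'\mapsto 2\bm\lambda-\bm\lambda'$ on the lattice to exhibit a bijection between expanded Voronoi regions containing $\bm\lambda+\bx$ and those containing $\bm\lambda-\bx$. Your version is more detailed (explicitly verifying that $\phi$ is an isometry permuting Voronoi regions and hence their $2q\epsilon$-neighborhoods), but the paper's one-line proof is the same argument in compressed form.
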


\begin{proof}
	This follow from the symmetry of $\Lambda_\epsilon$ with respect to $\lambda$: if $\bm \lambda+\bx\in \bm{Vor}^+(\bm\lambda')$, then $2\bm\lambda - \bm\lambda'$ is also a lattice point, and $\bm \lambda-\bx\in \bm{Vor}^+(2\bm\lambda - \bm\lambda')$.
\end{proof}

We next define our distribution of colorings: we will first apply the deterministic coloring $c_{3+2q}$ as described in Section \ref{sec:quant}. Then, by Lemma \ref{lem:latticecolor}, any two points of the same color are of distance at least $2(3+2q)\epsilon$ apart. Since $\bm{Vor}(\bm \lambda)\subset B_{r_c}(\bm \lambda) \subseteq B_{3\epsilon}(\bm \lambda) $, we have $\bm{Vor}^+(\bm\lambda)\subset B_{(3+2q)\epsilon}(\bm \lambda) $, and therefore there is no intersection between the expanded Voronoi regions of any points of the same color under $c_{3+2q}$. The purpose of this initial coloring is to limit the amount of randomness required for second random stage of the coloring process.

We then choose uniformly random colorings $\hat c : [3+2q]^d \rightarrow [1+ 2q]^{3d}$; the final coloring is then $\hat c\circ c_{3+2q} $, and we define $C$ to be the distribution of colorings generated in this way.

We will call such a coloring $c'$ \emph{successful} for a point $\bx+\bm\theta$ if it meets the condition described in the algorithm, i.e., if there is no $\bm z'\in\Lambda_\epsilon$ with $\bx+\bm\theta \in \bm{Vor}^+(\bm z')$ and  $c'(\bm z) = c'(\bm z')$.

\begin{lemma}\label{lem:colorsuccess}
	Over choice of $\bm\theta $ and $c'$, $\Prob{c'\text{ is successful}} \ge 1- 2(1+ 2q)^{-d}$, and \[\Exp{\bm z -\bm \theta  | c'\text{ is successful}}= \bx\enspace.\]
\end{lemma}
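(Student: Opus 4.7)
\medskip

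\noindent\textbf{Proof plan.}

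My plan is to handle the probability bound and the conditional unbiasedness separately, with the former being a fairly direct union bound and the latter requiring a symmetry argument.

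First I would establish the unconditional distribution of the ``residual'' $\bm\theta' := \bx + \bm\theta - \bm z$. Writing $\bm z(\bm\theta)$ for the closest lattice point to $\bx + \bm\theta$, the map $\bm\theta \mapsto \bm\theta'(\bm\theta) = (\bx + \bm\theta) - \bm z(\bm\theta)$ is a bijection (up to a measure-zero boundary set) from $\bm{Vor}(\bm 0)$ to itself that is locally a translation, and hence volume-preserving. So when $\bm\theta$ is uniform on $\bm{Vor}(\bm 0)$, so is $\bm\theta'$, and the symmetry of $\bm{Vor}(\bm 0)$ (Property 2) gives $\Exp{\bm\theta'} = \bm 0$, which is equivalent to $\Exp{\bm z - \bm\theta} = \bx$.

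Next I would prove the success probability bound. Let $S(\bm\theta) := \{\bm z' \in \Lambda_\epsilon : \bx + \bm\theta \in \bm{Vor}^+(\bm z')\}$; note $\bm z \in S(\bm\theta)$. Lemma~\ref{lem:vorno} gives $|S(\bm\theta)| \le (1+2q)^{2d}$ with probability at least $1 - (1+2q)^{-d}$ over $\bm\theta$. Conditional on this, since $c_{3+2q}$ assigns distinct colors to all points of $S(\bm\theta)$ (the expanded Voronoi regions of same-$c_{3+2q}$-color points are disjoint, as argued just before the lemma), and $\hat c$ is independently uniform on $[1+2q]^{3d}$, a union bound over $\bm z' \in S(\bm\theta) \setminus \{\bm z\}$ gives failure probability at most $((1+2q)^{2d} - 1) \cdot (1+2q)^{-3d} \le (1+2q)^{-d}$. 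Combining the two bad events gives $\Prob{c' \text{ is successful}} \ge 1 - 2(1+2q)^{-d}$.

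The main obstacle, and the interesting step, is the conditional unbiasedness. The crucial observation is that the conditional success probability $\Prob{c' \text{ is successful} \mid \bm\theta}$ depends on $\bm\theta$ only through $|S(\bm\theta)|$: given distinct $c_{3+2q}$-colors on $S(\bm\theta)$ (as above), the event ``$c'(\bm z)$ is unique in $c'(S(\bm\theta))$'' depends only on the fresh independent values of $\hat c$ at $|S(\bm\theta)|$ distinct arguments, and by exchangeability this probability is a function of $|S(\bm\theta)|$ alone. Now I would apply Lemma~\ref{lem:vornumber} with $\bm\lambda = \bm z$ and $\bx \leftarrow \bm\theta'$: the number of expanded Voronoi regions containing $\bm z + \bm\theta'$ equals the number containing $\bm z - \bm\theta'$, so $|S|$ is invariant under $\bm\theta' \mapsto -\bm\theta'$. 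Composing with the unconditional uniformity of $\bm\theta'$ on the symmetric region $\bm{Vor}(\bm 0)$, the conditional distribution of $\bm\theta'$ given success is symmetric about $\bm 0$, whence $\Exp{\bm\theta' \mid c' \text{ is successful}} = \bm 0$ and therefore
\[
\Exp{\bm z - \bm\theta \mid c' \text{ is successful}} = \bx - \Exp{\bm\theta' \mid c' \text{ is successful}} = \bx.
\]
The subtlety I would be careful about is that the success event depends jointly on $\bm\theta$ and the random coloring $c'$; the symmetry argument goes through cleanly only because $c'$ is drawn independently of $\bm\theta$ and the conditional probability depends on $\bm\theta$ through the cardinality $|S(\bm\theta)|$, a quantity invariant under the relevant reflection.
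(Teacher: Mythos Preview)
Your proposal is correct and follows essentially the same approach as the paper: the success probability via Lemma~\ref{lem:vorno} plus a union bound over colors, and the conditional unbiasedness via the observation that $\bx+\bm\theta-\bm z$ is uniform on $\bm{Vor}(\bm 0)$ together with the symmetry of $V_{exp}$ from Lemma~\ref{lem:vornumber}. Your write-up is in fact more careful than the paper's, as you spell out why the points of $S(\bm\theta)$ receive distinct $c_{3+2q}$-colors and why the conditional success probability depends on $\bm\theta$ only through $|S(\bm\theta)|$.
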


\begin{proof}
	The probability of $c'$ being successful is dependent entirely on the number $V_{exp}(\bx+\bm\theta)$ of expanded Voronoi regions containing $\bx+\bm\theta$. By Lemma \ref{lem:vorno}, this number is at most $(1+2q)^{2d}$ with probability at least $1- (1+2q)^{-d}$. In this case, the probability that $\bm z$ does not receive a unique color under $c'$ is at most $(1+2q)^{2d} \cdot (1+ 2q)^{-3d} = (1+2q)^{2d} \cdot (1+ 2q)^{-d}$ by a union bound. Therefore, the total probability of $c'$ not being successful is at most $2(1+ 2q)^{-d}$.
	
	To show unbiasedness (that $\Exp{\bm z -\bm \theta  | c'\text{ is successful}}= \bx$), we note that $\bx+\bm\theta - \bm z$ is uniformly distributed in $\bm{Vor}(\bm 0)$, and therefore $\Exp{\bm z -\bm \theta }= \bx$; by Lemma \ref{lem:vornumber}, $V_{exp}(\bx+\bm\theta)$, and therefore the probability of successful coloring, is symmetric around any lattice point, and so conditioning on successful coloring preserves unbiasedness.
\end{proof}

Once we have a coloring in which there is no $\bm z'\in\Lambda_\epsilon$ with $\bx+\bm\theta \in \bm{Vor}^+(\bm z')$ and  $c'(\bm z) = c'(\bm z')$, the following simple decoding procedure can find $\bm z$ so long as quantization input $\bx$ and decoding vector $\bx_v$ are sufficiently close:

\begin{algorithm}[H]
	\caption{\textsc{SublinearDecode}, to compute $R'_{\epsilon,q}(Q'_{\epsilon,q}(\bx),\bx_v)$}
	\label{alg:sublineardecode}
	\begin{algorithmic}
		\State Using $i$ and shared randomness, reconstruct $\bm\theta$ and $c'$
		\State Let $\bm {\hat{z}}\in \Lambda_\epsilon$ such that $B_{q\epsilon}(\bx_v+\bm\theta)$ intersects $\bm{Vor}(\bm {\hat{z}})$, and $c'(\bm {\hat{z}})$ matches $Q'_{\epsilon,q}(\bx)$
		\State Output $\bm {\hat{z}} - \bm\theta$
	\end{algorithmic}
\end{algorithm}

\begin{lemma}\label{lem:sublineardecode}
	If $\|\bx - \bx_v\|_2 \le q\epsilon$, then the decoding procedure $R'_{\epsilon,q}(Q'_{\epsilon,q}(\bx),\bx_v)$ correctly returns the vector $\bm z-\bm\theta$ which is an unbiased estimator of $\bx$.
\end{lemma}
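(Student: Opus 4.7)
The plan is to show two things: first, that the $\bm{\hat{z}}$ identified by the decoder is exactly the $\bm z$ chosen by the encoder, and second, that $\bm z - \bm\theta$ is an unbiased estimator of $\bx$ (which follows immediately from Lemma \ref{lem:colorsuccess} once the decoding is correct).

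First I would verify existence, i.e., that $\bm z$ itself satisfies the two conditions in the decoder. For the matching-color condition this is by construction, since $c'(\bm z) = Q'_{\epsilon,q}(\bx)$. For the intersection condition, note that $\bx + \bm\theta \in \bm{Vor}(\bm z)$ by the encoder's choice of $\bm z$ as the closest lattice point, and the triangle inequality gives $\|(\bx+\bm\theta) - (\bx_v + \bm\theta)\|_2 = \|\bx - \bx_v\|_2 \le q\epsilon$, so $\bx + \bm\theta \in B_{q\epsilon}(\bx_v + \bm\theta) \cap \bm{Vor}(\bm z)$.

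Next I would show uniqueness. Suppose for contradiction there is another candidate $\bm z' \ne \bm z$ in $\Lambda_\epsilon$ with $c'(\bm z') = c'(\bm z)$ and some point $\bm p \in B_{q\epsilon}(\bx_v + \bm\theta) \cap \bm{Vor}(\bm z')$. Then the triangle inequality gives
\[
\|\bm p - (\bx + \bm\theta)\|_2 \le \|\bm p - (\bx_v + \bm\theta)\|_2 + \|\bx_v - \bx\|_2 \le q\epsilon + q\epsilon = 2q\epsilon,
\]
so $\bx + \bm\theta$ lies within distance $2q\epsilon$ of $\bm{Vor}(\bm z')$, i.e., $\bx + \bm\theta \in \bm{Vor}^+(\bm z')$. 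But this directly contradicts the termination condition of \textsc{SublinearEncode}, which guarantees that no $\bm z' \ne \bm z$ with $c'(\bm z') = c'(\bm z)$ has $\bx + \bm\theta \in \bm{Vor}^+(\bm z')$. Hence $\bm{\hat{z}} = \bm z$, and the decoder outputs $\bm z - \bm\theta$.

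Unbiasedness then follows directly from Lemma \ref{lem:colorsuccess}, which conditions on a successful coloring. The main conceptual step is the triangle-inequality argument linking the decoder's proximity-based test to the encoder's expanded Voronoi condition; the factor $2q\epsilon$ in the definition of $\bm{Vor}^+$ was chosen precisely so that encoder-side uniqueness transfers to decoder-side uniqueness whenever $\|\bx - \bx_v\|_2 \le q\epsilon$. No additional obstacle beyond this alignment of parameters arises.
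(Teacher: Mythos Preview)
Your proof is correct and follows essentially the same approach as the paper: verify that $\bm z$ satisfies the decoder's conditions, then use the triangle inequality to show that any other same-colored $\bm z'$ intersecting $B_{q\epsilon}(\bx_v+\bm\theta)$ would have $\bx+\bm\theta\in\bm{Vor}^+(\bm z')$, contradicting the encoder's termination condition. The only difference is that you explicitly cite Lemma~\ref{lem:colorsuccess} for unbiasedness, whereas the paper leaves this implicit in the lemma statement.
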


\begin{proof}
	Clearly $\bm z$ is has the received color; we must show that $B_{q\epsilon}(\bx_v+\bm\theta)$ intersects $\bm{Vor}(\bm z)$, and does not intersect $\bm{Vor}(\bm z')$ for any other $\bm z'$ with the same color. The former is the case since $\bx+\bm\theta \in \bm{Vor}(\bm z)$ and $\|(\bx+\bm\theta) - (\bx_v+\bm\theta)\| \le q\epsilon$. The latter holds since if $B_{q\epsilon}(\bx_v+\bm\theta)$ intersects $\bm{Vor}(\bm z')$, then so does $B_{2q\epsilon}(\bx+\bm\theta)$, and so $\bx+\bm\theta \in \bm{Vor}^+(\bm z')$. However, the coloring was successful, so there is no $\bm z' \ne \bm z$ for which this is the case. So, Algorithm \ref{alg:sublineardecode} must successfully decode $\bm z$.
\end{proof}

\begin{theorem}
	Algorithms \ref{alg:sublinearencode} and \ref{alg:sublineardecode} give a quantization procedure with the following properties:
	If $\|\bx - \bx_v\|_2 \le q\epsilon$, then the decoder outputs an unbiased estimator $\bm {\hat{z}} - \bm\theta$ of $\bx$ with $\|\bx - (\bm {\hat{z}} - \bm\theta)\|_2\le 3\epsilon$. Each machine communicates $O(b)$ bits in expectation (and with probability at least $1-2^{(1-b)2^b}$), for $b= d\log(1+q)$.
\end{theorem}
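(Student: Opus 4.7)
The proof will have four parts, corresponding to the four assertions of the theorem.

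I would first dispatch correctness and unbiasedness by combining two already-proven lemmas. Algorithm \ref{alg:sublinearencode} only terminates once the coloring $c'$ is successful, so by Lemma \ref{lem:sublineardecode} the decoder recovers exactly the lattice point $\bm z$ (and hence returns $\bm z - \bm\theta$); unbiasedness of this output then follows from Lemma \ref{lem:colorsuccess}, which gives $\Exp{\bm z - \bm\theta \mid c'\text{ successful}} = \bx$. For the error bound, I would use that $\bm z$ is defined as the closest lattice point to $\bx + \bm\theta$, so by the cover-radius property of an $\epsilon$-lattice, $\|\bm z - (\bx+\bm\theta)\|_2 \le r_c \le 3\epsilon$, and hence $\|(\bm z-\bm\theta) - \bx\|_2 \le 3\epsilon$.

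For the bit count, I would break the transmitted message into two pieces: the color $c'(\bm z) \in [(1+2q)^{3d}]$, which costs exactly $3d\log(1+2q) = O(b)$ bits, and the iteration index $i$, which costs $O(\log(1+i))$ bits. The key probabilistic input is Lemma \ref{lem:colorsuccess}: each iteration succeeds independently (since fresh shared randomness is drawn for $\bm\theta$ and $c'$ each round) with probability at least $1 - 2(1+2q)^{-d}$. Since $(1+2q)^d \ge (1+q)^d = 2^b$, the per-iteration failure probability is at most $2^{1-b}$, and by independence $\Prob{i \ge k} \le 2^{(1-b)k}$.

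The expected-bits bound then follows by noting that $\Exp{i}$ is at most the mean of a geometric with success probability $\ge 1 - 2^{1-b}$, hence $O(1)$ (for $b \ge 2$; the edge case $b < 2$ has to be handled separately by noting the $O(b)$ bound is trivial there). A short computation gives $\Exp{\log(1+i)} = O(1)$, so the total expected cost is $O(b) + O(1) = O(b)$. For the high-probability statement, I would plug $k = 2^b$ into the tail bound to get $\Prob{i \ge 2^b} \le 2^{(1-b)2^b}$; on the complementary event, $i$ fits in $b$ bits, so the total communication is at most $3d\log(1+2q) + b = O(b)$.

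The main technical care point, I expect, is the justification that successive iterations are independent: the algorithm must be understood as drawing fresh $\bm\theta$ and $c' \sim C$ at each attempt (from shared randomness known to the decoder, so that it can reconstruct them from $i$). Once that is granted, the rest of the proof is routine geometric-series bookkeeping, and the only remaining subtlety is checking that the $\log(1+2q)$ versus $\log(1+q)$ discrepancy is absorbed into the $O(b)$ notation uniformly for all $q > 0$ (including the sublinear regime $q = o(1)$, where $\log(1+q) \asymp q$).
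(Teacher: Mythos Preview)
Your proposal is correct and follows essentially the same approach as the paper: the paper's proof likewise observes that correctness, unbiasedness, and the $3\epsilon$ error bound are already established by the preceding lemmas, then bounds the color transmission cost as $\log\bigl((1+2q)^{3d}\bigr) = O(b)$, uses Lemma~\ref{lem:colorsuccess} to get per-iteration failure probability at most $2^{1-b}$, and derives $\Prob{i > 2^j} \le 2^{(1-b)2^j}$ to handle the cost of transmitting $i$. Your treatment is slightly more explicit in separating the expectation argument from the high-probability tail bound, but the substance is identical.
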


\begin{proof}
	It remains only to prove a bound on the number of bits transmitted: transmitting $c'(\bm z)$ requires $\log \left((1+2q)^{3d}\right) = O(b)$ bits. To bound the number of encoding iterations required (and therefore the size of $i$), we see that each iteration independently succeeds with probability at least $1-2(1+2q)^{-d}\ge 1- 2^{1-b}$, by Lemma \ref{lem:colorsuccess}. Then, for any $j\in \nat$, $\Prob{i> 2^j} \le 2^{(1-b)2^j}$. Transmitting a value of $i \le 2^j$ requires $j$ bits, and therefore, setting $j = b$, we see that transmitting $i$ uses $b$ bits with probability at least $1-2^{(1-b)2^b}$, and in this case the total amount of communication used is $O(b)$.
\end{proof}

Note that we use Algorithms \ref{alg:sublinearencode} and \ref{alg:sublineardecode} primarily for the sublinear communication regime, i.e., when $q<1$, and in this case we have $b = d\log(1+q) = \Theta(dq)$. 
\subsection{\textsc{MeanEstimation} and \textsc{VarianceReduction} with Sublinear Communication}

If we wish to use $o(d)$-bit messages, then by Theorems \ref{thm:LB1} and \ref{thm:LB2a}, we cannot achieve lower output variance than input variance. Therefore, there is no longer any benefit to averaging; we can instead simply choose a random machine to broadcast its quantized input to all other machines.

\begin{algorithm}[H]
	\caption{\textsc{SublinearMeanEstimation}$(q)$}
	\label{alg:MEsublinear}
	\begin{algorithmic}
		\State Choose a source machine $u$ uniformly at random
		\State Machine $u$ broadcasts $Q'_{\frac{y}{q}, q}(\bx_u)$ to all other machines
		\State Each machine $v$ outputs $R'_{\frac{y}{q}, q}(Q'_{\frac{y}{q}, q}(\bx_u), \bx_v)$
	\end{algorithmic}
\end{algorithm}

\begin{theorem}\label{thm:MEsublinear}
	For any $q=O(1)$, Algorithm \ref{alg:MEsublinear} performs \textsc{MeanEstimation} with each machine using $O(dq)$ communication bits in expectation (and with probability at least $1-2^{(1-dq)2^{\Theta(dq)}}$), and with $O(\frac{y^2}{q^2})$ output variance under $\ell_2$ norm.
\end{theorem}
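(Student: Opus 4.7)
The plan is to reduce the theorem directly to the single-pair quantization guarantee established at the end of Section \ref{sec:sublinear}, applied in parallel across all receivers. The key parameter choice is $\epsilon = y/q$, so that $q\epsilon = y$. With this choice, the input guarantee $\|\bx_u - \bx_v\|_2 \le y = q\epsilon$ holds for every pair of machines, which is exactly the precondition of Lemma \ref{lem:sublineardecode}. Hence, once the source $u$ has sampled its randomness and broadcast $Q'_{y/q,\, q}(\bx_u)$, every other machine $v$ correctly recovers an estimator $\bm{\hat z} - \bm\theta$ that is unbiased for $\bx_u$ and satisfies $\|\bx_u - (\bm{\hat z} - \bm\theta)\|_2 \le 3\epsilon = 3y/q$.

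Next, I would establish unbiasedness and bound the output variance. Unbiasedness follows from the tower property: conditioned on the random source $u$, the output is an unbiased estimator of $\bx_u$, and averaging over the uniform choice of $u\in [n]$ gives $\Exp{\bx_u} = \bm\mu$. For variance, I would decompose the error into two independent contributions: (i) the variation of $\bx_u$ over the choice of $u$, which is bounded by $\max_u \|\bx_u - \bm\mu\|_2^2 \le y^2$ since all inputs lie within pairwise distance $y$; and (ii) the quantization variance, bounded by $(3y/q)^2 = 9y^2/q^2$. Since the theorem assumes $q = O(1)$, the term $y^2$ is absorbed into $O(y^2/q^2)$, so the total output variance is $O(y^2/q^2)$ as claimed.

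For communication, I would implement the broadcast from $u$ along a balanced binary tree, so that the encoded bit-string produced by the source is simply relayed by each intermediate machine to its children. By the final theorem of Section \ref{sec:sublinear}, the encoding produces a message of length $O(b)$ with $b = d\log(1+q)$, in expectation and with probability at least $1 - 2^{(1-b)2^b}$. Since $q = O(1)$ we have $\log(1+q) = \Theta(q)$, hence $b = \Theta(dq)$, and each machine therefore sends and receives $O(dq)$ bits both in expectation and with probability at least $1 - 2^{(1-dq)2^{\Theta(dq)}}$. The random choice of $u$ is only needed to ensure the variance decomposition above, not for the communication bound (the tree gives it for free per machine).

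The step I expect to require the most care is the variance decomposition together with the asymptotic simplification: one must be careful that the two variance terms really do add (via independence of the source choice and the quantization randomness conditional on $u$), and that the absorption $y^2 = O(y^2/q^2)$ genuinely uses the hypothesis $q = O(1)$. A secondary subtlety is confirming that a broadcast tree does not inflate per-machine communication relative to a single invocation of the encoder, since each machine only ever relays one copy of the same bit-string.
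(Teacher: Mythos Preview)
Your proposal is correct and follows essentially the same argument as the paper: choose $\epsilon = y/q$, invoke the sublinear encode/decode guarantee so every machine recovers the same unbiased estimate of $\bx_u$ within $3y/q$, combine the source-selection error $\|\bx_u-\bm\mu\|_2\le y$ with the quantization error and absorb the former into $O(y^2/q^2)$ using $q=O(1)$, and bound per-machine communication via a binary-tree broadcast with $b=d\log(1+q)=\Theta(dq)$. Your treatment is simply more explicit than the paper's about the variance decomposition and the role of the $q=O(1)$ hypothesis.
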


\begin{proof}
	The input of the chosen source $\bx_u$ is an unbiased estimator of the mean input $\bm \mu$ with, $\|\bx_u-\bm \mu\|_2\le y$. Since all machines have $\|\bx_u - \bx_v\|_2\le \frac{y}{q}\cdot q = y$, all machines correctly decode a common unbiased estimate $\bm z$ of $\bx_u$ with $\|\bm z - \bx_u\|_2\le 3\frac{y}{q}$. For $q=O(1)$, the output is therefore an unbiased estimate of $\bm \mu$ with $O(\frac{y^2}{q^2})$ variance.
	
	If the broadcast is performed using a binary tree communication structure, then each machine receives at most one message and sends at most two. We have $b=d\log (1+q) = \Theta(dq)$, and therefore each machine requires $O(dq)$ communication bits, in expectation and indeed with probability at least $1-2^{(1-dq)2^{\Theta(dq)}}$.
\end{proof}

Together with Theorem \ref{thm:ME}, this implies Theorem \ref{thm:sublinear}.

We again apply the same reduction as for Theorem \ref{thm:VR} to obtain the following sublinear-communication result for \textsc{VarianceReduction} (though the name is less fitting, since variance is now necessarily \emph{increased}):

\begin{theorem}\label{thm:VRsublinear}
	For any $\alpha>1$, and any $q = O(1)$, Algorithm \ref{alg:MEsublinear} performs \textsc{VarianceReduction} using $O(dq)$ communication bits per machine (with probability at least $1-2^{(1-dq)2^{\Theta(dq)}}$), with $ O(\frac{\alpha n\sigma^2}{q^2})$ output variance under $\ell_2$ norm, succeeding with probability at least $1-\frac 1\alpha$.
\end{theorem}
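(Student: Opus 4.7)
The plan is to mirror the reduction used for Theorem \ref{thm:VR}, but invoking Theorem \ref{thm:MEsublinear} in place of Theorem \ref{thm:ME}, with essentially identical bookkeeping. The only real conceptual difference is that since Algorithm \ref{alg:MEsublinear} does not average across machines (it simply broadcasts a random machine's quantized input), the $\Theta(\sigma^2/n)$ reduction inherent to averaging is not available, and we must instead absorb the input variance of the single chosen input into the quantization error term. Fortunately, in the $q=O(1)$ regime this is cheap.

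First I would apply Chebyshev's inequality exactly as in the proof of Theorem \ref{thm:VR}: for each $v$, $\Prob{\|\bx_v - \grad\| \ge \sigma\sqrt{\alpha n}} \le 1/(\alpha n)$, so by a union bound, with probability at least $1 - 1/\alpha$, every input is within distance $\sigma\sqrt{\alpha n}$ of $\grad$, and hence all pairs of inputs are within distance $y := 2\sigma\sqrt{\alpha n}$. Condition on this event for the rest of the argument; it contributes the stated $1-1/\alpha$ success probability.

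Next I would invoke Theorem \ref{thm:MEsublinear} with this value of $y$. It yields, for the chosen random source $u$ and any machine $v$, a common output $\bz_v$ that is an unbiased estimate of $\bx_u$ with $\ell_2$-variance $O(y^2/q^2) = O(\alpha n\sigma^2/q^2)$, using $O(dq)$ communication bits per machine with probability at least $1-2^{(1-dq)2^{\Theta(dq)}}$. Since $u$ is chosen uniformly at random and independently of the input randomness, $\bx_u$ is itself an unbiased estimator of $\grad$ with $\Exp{\|\bx_u - \grad\|^2} \le \sigma^2$; hence by linearity of expectation the final output is an unbiased estimator of $\grad$, and by the law of total variance (or just by expanding $\|\bz_v - \grad\|^2 \le 2\|\bz_v - \bx_u\|^2 + 2\|\bx_u - \grad\|^2$ and taking expectations) its variance is at most
\[
O\!\left(\frac{\alpha n\sigma^2}{q^2}\right) + O(\sigma^2) = O\!\left(\frac{\alpha n\sigma^2}{q^2}\right),
\]
where the second equality uses $\alpha n/q^2 = \Omega(1)$ (which follows from $\alpha > 1$, $n \ge 1$, $q = O(1)$).

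The main (minor) obstacle is simply keeping the two independent sources of failure straight: the Chebyshev event (failing with probability at most $1/\alpha$) and the communication bound from Theorem \ref{thm:MEsublinear} (failing with probability at most $2^{(1-dq)2^{\Theta(dq)}}$). Both are stated separately in the theorem, so they can be handled independently without a joint union bound affecting the $1-1/\alpha$ correctness guarantee. I expect no further complications, since unbiasedness, variance, and bit-count all transfer directly from Theorem \ref{thm:MEsublinear} once $y$ is set by the Chebyshev reduction.
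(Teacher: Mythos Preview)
Your proposal is correct and follows essentially the same approach as the paper: reduce to \textsc{MeanEstimation} via Chebyshev with $y = 2\sigma\sqrt{\alpha n}$, then invoke Theorem~\ref{thm:MEsublinear}. The only cosmetic difference is that you decompose the variance through $\bx_u$ (adding $O(\sigma^2)$) rather than through $\bm\mu$ (adding $O(\sigma^2/n)$), but both extra terms are absorbed by $O(\alpha n \sigma^2/q^2)$ in the $q=O(1)$ regime, so the arguments are equivalent.
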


\begin{proof}
	As in proof of Theorem \ref{thm:VR}, we have that a \textsc{MeanEstimation} algorithm using $y=2\sigma\sqrt{\alpha n}$ performs \textsc{VarianceReduction}, succeeding with probability at least $1-\frac{1}{\alpha}$, which we plug into Theorem \ref{thm:MEsublinear}.
\end{proof}	

\section{Lower Bounds}\label{sec:LB}
We prove matching lower bounds for \textsc{MeanEstimation} and \textsc{VarianceReduction}, using an argument based on bounding the volume of space for which a node can output a good estimate, if it receives a limited number of communication bits. Our lower bounds will be against algorithms with \emph{shared randomness}: that is, we assume access to a common random string $s$, drawn from some distribution, for all machines.

We begin with the simpler \textsc{MeanEstimation} bound:	
\begin{theorem}\label{thm:LB1}
	For any \textsc{MeanEstimation} algorithm in which any machine receives at most $b$ bits, \[\Exp{\|\bm{EST} - \mu\|^2}=\Omega(y^2 2^{-\frac {2 b}{d}})\enspace.\]
\end{theorem}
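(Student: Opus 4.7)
The plan is to combine Yao's minimax principle with a volume covering argument. Fix any randomized algorithm with shared randomness, and consider the following hard input distribution on $n\ge 2$ machines: one designated machine $v$ always receives $\bx_v=\bm 0$, while each of the other $n-1$ machines receives the same vector $\bm t$, drawn uniformly from $B_y(\bm 0)$. All pairwise input distances are either $0$ or $\|\bm t\|\le y$, so the instance is legal, and $\bm\mu=\tfrac{n-1}{n}\bm t$ is distributed uniformly in the ball $B_{(n-1)y/n}(\bm 0)$, whose radius is at least $y/2$ for $n\ge 2$.

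Next, condition on the shared random string $s$ to make the algorithm deterministic. Since $\bx_v$ is fixed and $v$ receives at most $b$ bits of communication, its output is a function of at most $b$ bits; hence the set $R(s)$ of possible outputs of $v$ (as $\bm t$ varies) satisfies $|R(s)|\le 2^b$. A union bound over balls then gives, for any $\delta>0$,
\[
\Prob{\min_{\bo\in R(s)}\|\bo-\bm\mu\|\le \delta} \le \frac{|R(s)|\cdot Vol(B_\delta)}{Vol(B_{(n-1)y/n})} \le 2^b\left(\tfrac{2\delta}{y}\right)^d,
\]
using $n\ge 2$ in the last step. Choosing $\delta=\Theta(y\cdot 2^{-b/d})$ so that the right-hand side is at most $1/2$, we conclude that with probability at least $1/2$ over $\bm\mu$, every point of $R(s)$ is at distance at least $\delta$ from $\bm\mu$, hence $\Exp{\|\bm{EST}-\bm\mu\|^2\mid s}\ge \tfrac 12 \delta^2=\Omega(y^2 2^{-2b/d})$. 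Averaging over $s$ preserves the bound, and Yao's principle then produces a worst-case input achieving the stated randomized expected error.

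The main subtlety is that, a priori, $v$'s output depends on its own input $\bx_v$ and its local randomness in addition to the received bits. Local randomness can be absorbed into the shared random string $s$, but $\bx_v$ must be pinned to a fixed vector under the hard distribution---otherwise $v$ could encode information about $\bm\mu$ through $\bx_v$ and bypass the bit-count restriction. For instance, the naive distribution ``all machines receive the same random $\bx$'' fails because each machine could output its own input with zero communication. Pinning $\bx_v=\bm 0$ forces every bit of information about $\bm\mu$ to travel through the $\le b$ received bits, at which point the volume-covering bound is tight up to constants.
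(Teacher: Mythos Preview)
Your proof is correct and follows essentially the same volume-packing argument as the paper: fix machine $v$'s input, observe that $v$ has at most $2^b$ possible outputs given the received bits, and show by a ball-volume comparison that some legal choice of $\bm\mu$ lies at distance $\Omega(y\,2^{-b/d})$ from all of them. The only cosmetic difference is the order in which randomness is handled---you condition on the shared string $s$ first and then average (invoking Yao to extract a worst-case input), whereas the paper averages over $s$ inside its definition of the ``well-estimable'' set $OUT_{B_v}$ and then picks a bad $\bm\mu$ directly; both routes yield the same bound.
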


\begin{proof}
	We construct a hard input for mean estimation as follows: fix a machine $v$, and arbitrarily fix its input vector $\bx_v$. Machine $v$'s output $\bm{EST}$ is dependent only on $\bx_v$, the random string $s$, and the string of bits $B_v$ that $v$ receives (from any other machines) during the course of the algorithm. We denote by $b_v$ the number of such bits. If $b_v<b$ for some $b$, then $v$ has $\sum_{i=0}^{b-1} 2^i < 2^b$ possible strings $B_v$, and therefore fewer than $2^b$ possible output distributions (over choice of $s$). 
	
	For each $B_v$ we denote by $OUT_{B_v}$ the set of points $\bm z\in \real^d$ for which $\bm P_s[\|\bm{EST}- \bm z\|\le \delta] > \frac12$, when $B_v$ is the string received. For any $B_v$, $OUT_{B_v}$ has volume at most $Vol(B_{2\delta})$, since the $\delta$-balls of any two points in $OUT_{B_v}$ must intersect (as the probabilities of $\bm{EST}$ falling within the balls sum to more than $1$). We further denote $OUT_{v}$ to be the union of these sets over all $B_v$. Then:
	
	\[Vol(OUT_{v})\le \sum_{B_v} Vol(OUT_{B_v}) < 2^b \cdot Vol(B_{2\delta}) \le 2^b \left(\frac{4 \delta}{y}\right)^d Vol(B_{\frac y2}) \enspace.  \]
	
	We choose $\delta = \frac{y}{4} 2^{-\frac bd}$, and see that: 
	
	\[Vol(OUT_{v}) < 2^b \left(\frac{4 \cdot \frac y4 2^{-\frac bd}}{y}\right)^d Vol(B_{\frac y2})  =  2^b \cdot 2^{-b} \cdot Vol(B_{\frac y2}) = Vol(B_{\frac y2})\enspace.  \]
	
	Therefore there is some point $\bm z$ in $B_{\frac y2}(\bx_v) \setminus Vol(OUT_{v})$. We choose $\mu$ to be such a point $\bm z$, by setting one other machine's input to $2\bm z-\bx_v$, and any others to $\bm z$ (note that this satisfies the condition that all inputs are within distance $y$). Then, regardless of $B_v$, $\bm P_s[\|\bm{EST}- \mu\|\le \delta] \le \frac12$, and so $\Exp{\|\bm{EST} - \mu\|^2} \ge \frac12 \cdot \delta^2 = \Omega(y^2 2^{-\frac {2b}{d}})$.
\end{proof}

This lower bound is for algorithms with a fixed communication cost, but we show that it can easily be extended to apply to \emph{expected} communication cost, proving Theorem \ref{thm:LB1b}:

\begin{proof}[Proof of Theorem \ref{thm:LB1b}]
	By Markov's inequality, a machine $v$ which receives at most $b$ bits in expectation receives at most $1.5 b$ bits with probability at least $\frac 13$. Then, \[\Exp{\|\bm{EST} - \mu\|^2} \ge \frac 13 \Exp{\|\bm{EST}- \mu\|^2\mid \text{$v$ receives at most $1.5 b$ bits}} =\Omega(y^2 2^{-\frac {3 b}{d}})\enspace.\]
\end{proof}

The bounds for \textsc{VarianceReduction} are somewhat more complex since we must define a hard \emph{input distribution} in which the input vectors are independent estimates of the true vector $\grad$ and have variance at most $\sigma^2$. 

We first show a bound on the amount of communication bits that \emph{all} machines must use (Theorem \ref{thm:LB2a}, before proceeding to Theorem \ref{thm:LB2b} which shows that some machines must use more bits.

\begin{proof}[Proof of Theorem \ref{thm:LB2a}]
	We define an input distribution $I$ as follows: we choose  $\grad$ uniformly at random from $B_{d^2 \sigma}(\textbf 0)$. We then independently choose each machine $v$'s input $\bx_v$ uniformly at random from $B_{\sigma}(\grad)$.
	
	For any machine $v$, we define event $A_v = \{\|\bx_v\|\le (d^2-1) \sigma\}$. 
	
	\begin{align*}	
		\Prob{A_v} &\ge \Prob{\|\grad\|\le (d^2-2)\sigma}\\
		& \ge \frac{Vol(B_{(d^2-2)\sigma})}{Vol(B_{d^2 \sigma})}\\
		&\ge \left(\frac{(d^2-2)\sigma}{d^2 \sigma}\right)^d\\
		&\ge \left(1-\frac{2}{d^2}\right)^d\\
		&\ge 1-\frac{4}{d}\enspace.
	\end{align*}
	
	Now, conditioning on $A_v$, $\grad$ is distributed uniformly in $B_{\sigma}(\bx_v)$. We next show that in this case, $v$ will, with high probability, not be able to closely estimate $\grad$.
	
	Using the same argument and definitions as in proof of Theorem \ref{thm:LB1}, we have:
	
	\[Vol(OUT_{v})\le \sum_{B_v} Vol(OUT_{B_v}) < 2^b \cdot Vol(B_{2\delta}) \le 2^b \left(\frac{2 \delta}{\sigma }\right)^d Vol(B_{\sigma}) \enspace.  \]
	
	We choose $\delta = \frac{\sigma}{4} 2^{-\frac {b}{d}}$, and see that $Vol(OUT_{v}) < 2^{-d} Vol(B_{\sigma})$. Therefore, $\bm P_I [\grad \in OUT_v\mid A_v]\le 2^{-d}$, and so $P_I [\grad \notin OUT_v]\le (1-\frac 4d)(1-2^{-d}) \le\frac 12$. When $\grad \notin OUT_v$, with probability at least $\frac 12$, $\bm P_s[\|\bm{EST}- \mu\|\le \delta] \le \frac12$. 
	
	So, we have that:
	\[\Prob{\|\bm{EST}- \mu\|> \delta} \ge \frac 12  \Prob{\grad \notin OUT_v}  > \frac 12  (1-\frac{4}{d})\bm P_I[\bigcup_{v\in M}A_v]>0.2\enspace.\]
	
	Then, $\Exp{\|\bm{EST}-\grad\|^2} > 0.2 \delta^2 = \Omega(\sigma^2 n 2^{-\frac{2b}{d}})$.
\end{proof}

Theorem \ref{thm:LB2b} is proven similarly, but using a different input distribution that causes some inputs to be further from $\grad$.

\begin{proof}[Proof of Theorem \ref{thm:LB2b}]
	We define an input distribution $I$ as follows: we choose  $\grad$ uniformly at random from $B_{d^2 n\sigma}(\textbf 0)$. We then independently choose each machine $v$'s input $\bx_v$ as follows: with probability $\frac 1n$, $\bx_v$ is chosen uniformly from $B_{\sigma \sqrt{n}}(\grad)$, and otherwise $\bx_v = \grad$. 
	
	For any machine $v$, we are interested in the event $A_v$ that 
	\begin{itemize}
		\item $\bx_v$ is chosen from $B_{\sigma \sqrt{n}}(\grad)$, and
		\item $\|\bx_v\|\le d^2 n\sigma - \sigma \sqrt{n}$;
	\end{itemize}
	
	If we condition on $\grad \in B_{d^2n\sigma-2\sigma \sqrt{n}}(\bm 0)$, the second criterion will be true for all $v$, and so the events $A_v$ will occur independently with probability $\frac 1n$. We use this to show that the probability that $A_v$ occurs for \emph{some} $v$ is greater than $\frac 12$:

	\begin{align*}
		\bm P_I [\bigcup_{v \in M}A_v] &\ge \bm P_I [\{\grad \in B_{d^2n\sigma-2\sigma \sqrt{n}}(\bm 0)\}\cap \bigcup_{v \in M}A_v]\\
		&\ge \bm P_I [\{\grad \in B_{d^2n\sigma-2\sigma \sqrt{n}}(\bm 0)\}]\cdot \bm P_I[\bigcup_{v \in M}A_v\mid \grad \in B_{d^2n\sigma-2\sigma \sqrt{n}}(\bm 0)]\\
		&= \frac{Vol(B_{d^2n\sigma-2\sigma \sqrt{n}})}{Vol(B_{d^2 n\sigma})}\cdot \left(1-(\frac {n-1}{n})^n\right)\\
		&\ge \left(\frac{d^2n\sigma-2\sigma \sqrt{n}}{d^2 n\sigma}\right)^d \cdot (1-e^{-1})\\
		&> 0.6 \left(1-\frac{1}{d^2}\right)^d \ge 0.6\left(1-\frac{2}{d}\right) > 0.5\enspace.
	\end{align*}
	
	Now, conditioning on $A_v$, $\grad$ is distributed uniformly in $B_{\sigma \sqrt{n}}(\bx_v)$. We next show that in this case, $v$ will, with high probability, not be able to closely estimate $\grad$.
	
	Again we have:
	
	\[Vol(OUT_{v})\le \sum_{B_v} Vol(OUT_{B_v}) < 2^b \cdot Vol(B_{2\delta}) \le 2^b \left(\frac{2 \delta}{\sigma \sqrt{n}}\right)^d Vol(B_{\sigma \sqrt{n}}) \enspace.  \]
	
	We choose $\delta = \frac{\sigma \sqrt{n}}{4} 2^{-\frac {b}{d}}$, and see that $Vol(OUT_{v}) < 2^{-d} Vol(B_{\sigma \sqrt{n}})$. Therefore, $\bm P_I [\grad \in OUT_v\mid A]\le 2^{-d}$, and so $P_I [\grad \in OUT_v]\le \frac 12$. When $\grad \notin OUT_v$, with probability at least $\frac 12$, $\bm P_s[\|\bm{EST}- \mu\|\le \delta] \le \frac12$. 
	
	So, we have that:
	\[\Prob{\|\bm{EST}- \mu\|> \delta} \ge \frac 12  \Prob{\grad \notin\bigcap_{v\in M} OUT_v}  > \frac 12  (1-2^{-d})\bm P_I[\bigcup_{v\in M}A_v]>0.2\enspace.\]
	
	Then, $\Exp{\|\bm{EST}-\grad\|^2} > 0.2 \delta^2 = \Omega(\sigma^2 n 2^{-\frac{2b}{d}})$. Again, by Markov's inequality, we can obtain that if machine $v$ receives at most $b$ bits \emph{in expectation}, we have $\Exp{\|\bm{EST}-\grad\|^2}= \Omega(\sigma^2 n 2^{-\frac{3b}{d}})$.
\end{proof}

Theorems \ref{thm:LB1} and \ref{thm:LB2a} imply that to obtain output variances of $O(\frac{y^2}{q})$ and $O(\frac{\sigma^2}{q})$ for \textsc{MeanEstimation} and \textsc{VarianceReduction} respectively, we require $\Omega(d\log q)$ communication bits per machine, thereby matching the upper bounds of Theorems \ref{thm:ME} and \ref{thm:VR2} (the latter in expectation). Theorem \ref{thm:LB2b} further implies that to achieve any variance reduction \emph{at all} (i.e. for output variance to be lower than input variance $\sigma^2$), at least one machine must receive $\Omega(d\log n)$ bits, matching the absolute upper bound on bits required to achieve the optimal output variance $O(\frac{\sigma^2}{n})$ by Theorem \ref{thm:VRshort}.

\out{
\section{Experimental Validation}\label{sec:exp-short}

	\paragraph{Example 1: Compressing Gradients.} We first apply compression to parallel solver for least-squares regression, which is given as input a matrix $A$, partitioned among the nodes, and a target vector $\bm b$, with the goal of finding $\bm w^* = \text{argmin}_{\bm w}||A\bm w-\bm b||^2_2$. In this example, we generate $\bm w^*\in \mathbb{R}^d$ and entries of $A\in \mathbb{R}^{S\times d}$ by sampling from $\mathcal{N}(0,1)$, and we set $\bm b = A\bm w^*$. Note that the input data (and gradients) are thus naturally normalized. We then run distributed gradient descent with our quantization scheme, using $3$ bits per coordinate for all methods, and examine the characteristics of the gradients sent, as well as the convergence and variance of the overall process. Figure~\ref{fig:regression} (left) shows the gradient norms and distances for the case of two nodes executing gradient descent for $d = 100$ and $8$K samples, while the center and right panels show the  variance of the gradient estimate for each method and convergence, respectively. 
	
	\begin{figure}[h]  
    \begin{subfigure}{0.3\textwidth}
        \includegraphics[width=5cm]{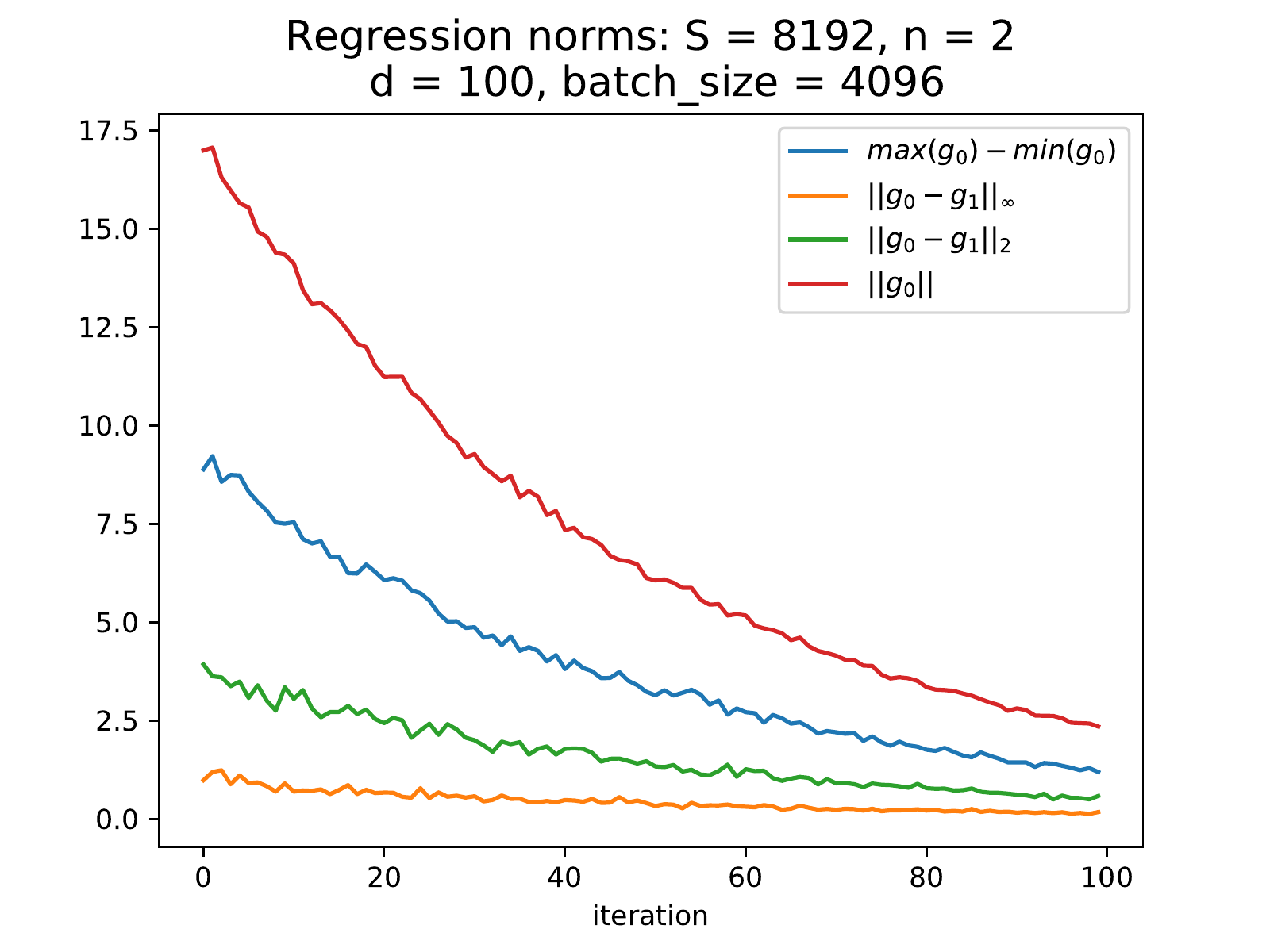}
    \end{subfigure}
    \hfill 
    \begin{subfigure}{0.3\textwidth}
        \includegraphics[width=5cm]{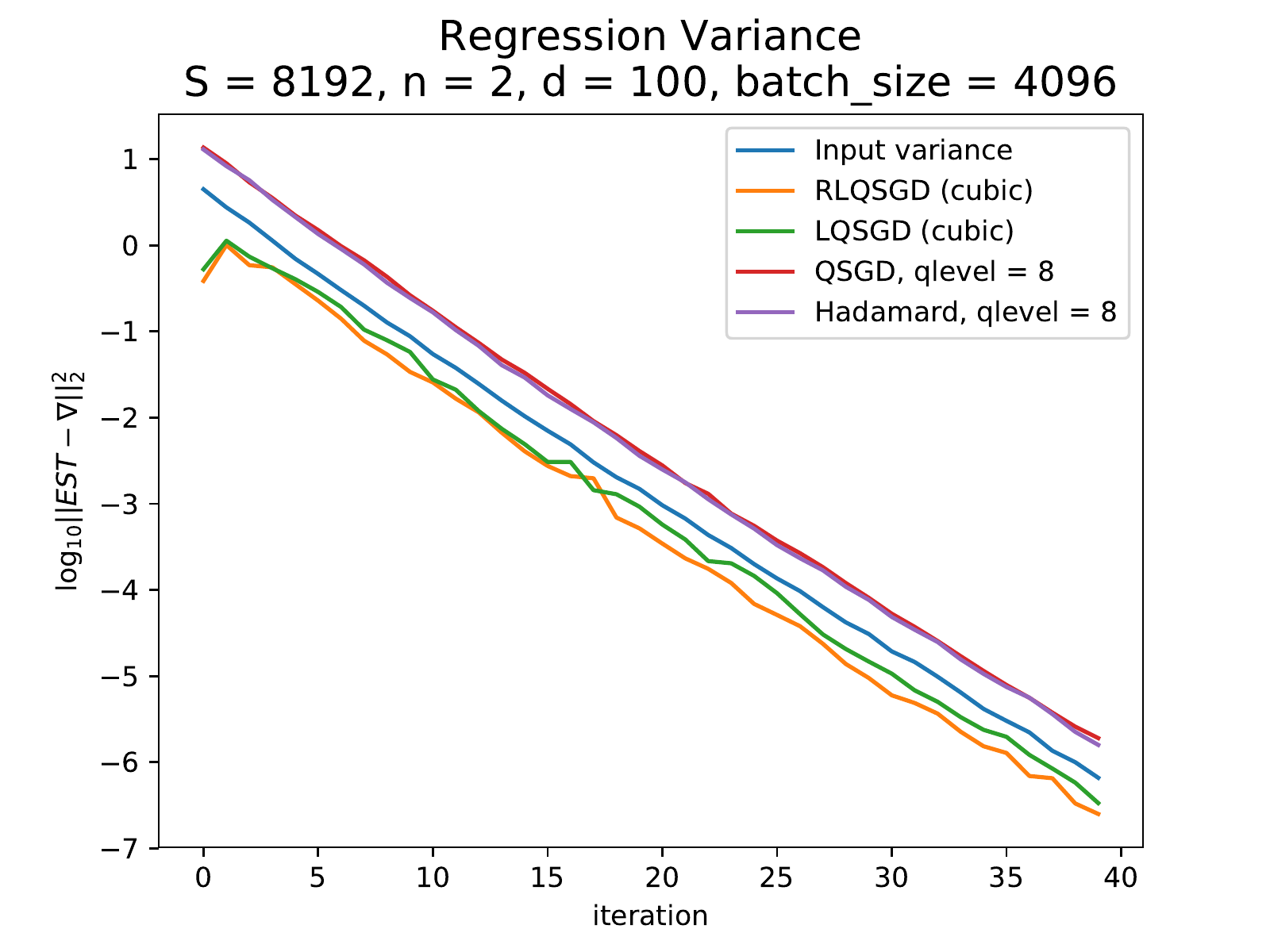}
    \end{subfigure}
    \hfill
    \begin{subfigure}{0.3\textwidth}
        \includegraphics[width=5cm]{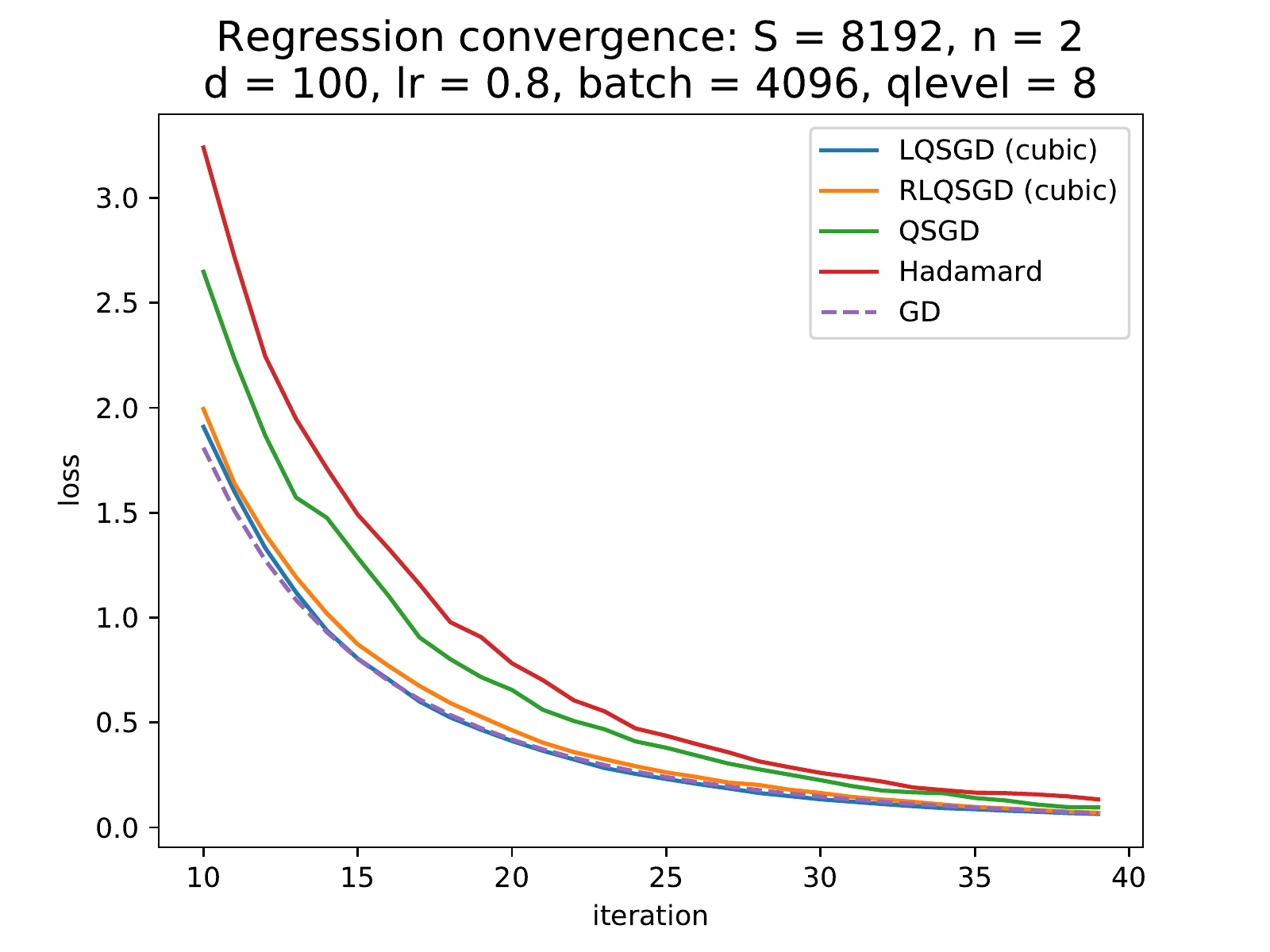}

    \end{subfigure}
    \caption{Gradient quantization results for the regression example.}
    \label{fig:regression}
\end{figure}

	Figure~\ref{fig:regression} (left) shows that, even in this simple normalized example, the distance between the two gradients throughout training is much smaller than the norm of the gradients themselves. Figure~\ref{fig:regression} (center) shows that both variants of our scheme (RLQSGD and LQSGD, with and without rotation) have significantly lower variance relative to Hadamard and standard QSGD, and in fact are the only schemes to get below input variance in this setting. Figure~\ref{fig:regression} (right) shows that this leads to better convergence using our method. 
	The full report in Section~\ref{sec:exp} contains experiments on other datasets, and node counts, as well as a larger-scale application of our scheme to train neural networks. This application shows that our scheme matches or slightly improves the performance of specialized gradient compression schemes, at the same bit budget.



\begin{figure}[h!]
    \begin{subfigure}{0.45\textwidth}
        \includegraphics[width=6cm]{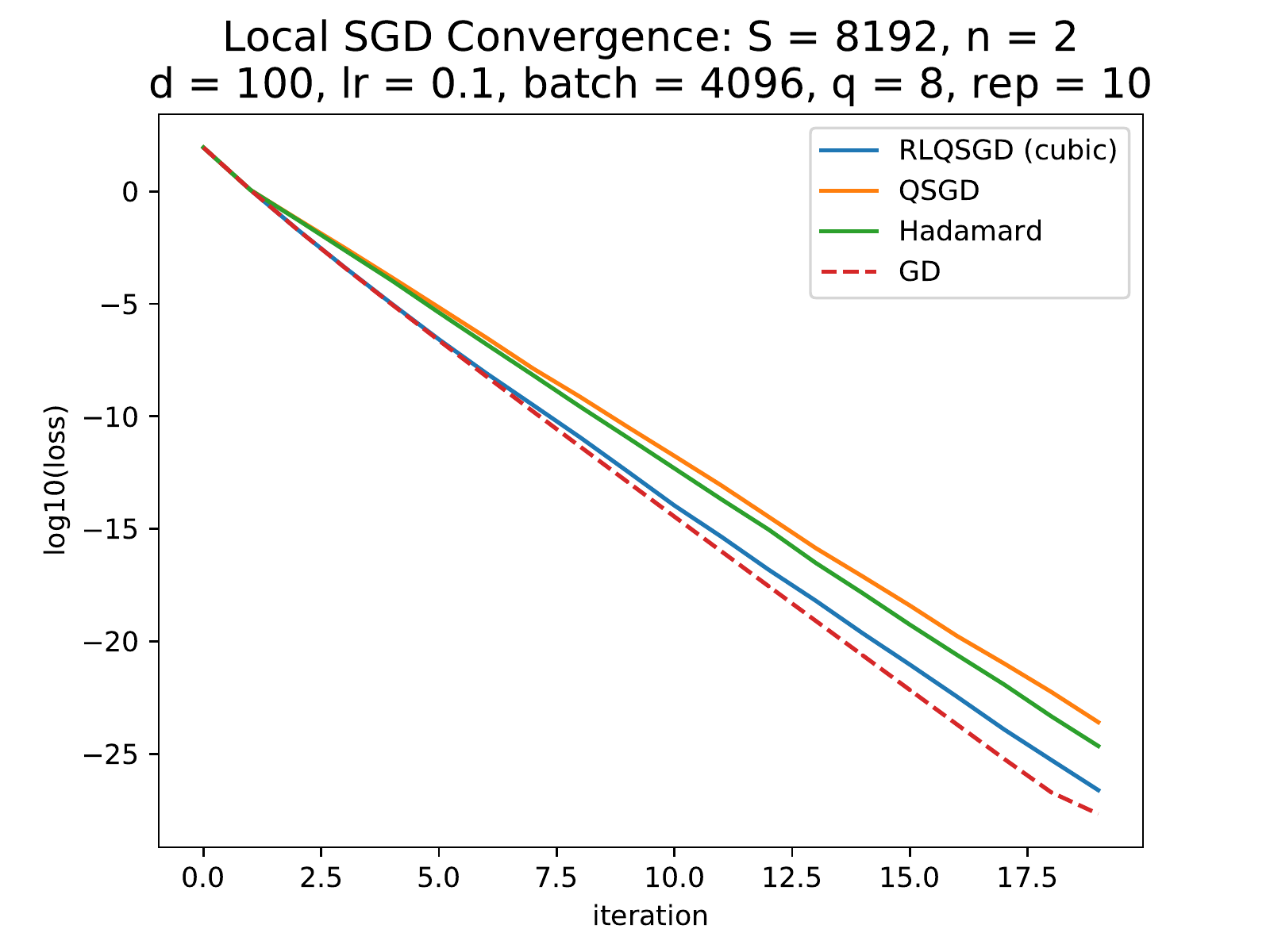}
    \end{subfigure}
    \hfill
    \begin{subfigure}{0.45\textwidth}
        \includegraphics[width=6cm]{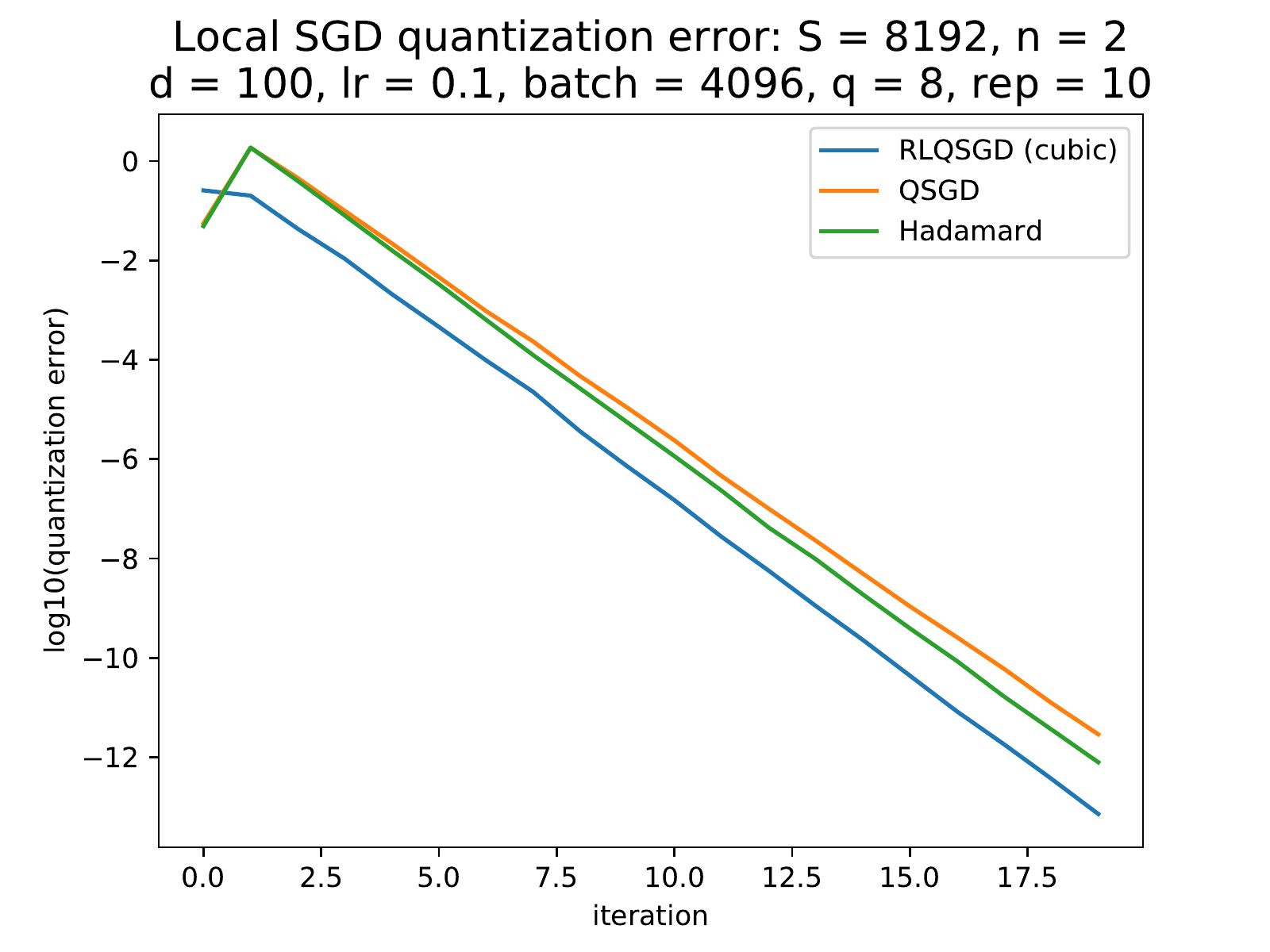}
    \end{subfigure}
    \caption{Local SGD: convergence for different quantizers (left) and quantization error (right).}
    \label{fig:local}
\end{figure}
	
	\paragraph{Example 2: Local SGD.} 
	A related example is that of compressing models in LocalSGD~\cite{stich2018local}, where each node takes several SGD steps on its local model, followed by a global \emph{model averaging} step, among all nodes. 
(Similar algorithms are popular in Federated Learning~\cite{kairouz2019advances}.) 
We use RLQSGD to quantize the models transmitted by each node as part of the averaging: to avoid artificially improving our performance, we compress the \emph{model difference} $\Delta_i$ between averaging steps, at each node $i$. 
RQSGD is a good fit since neither the models nor the $\Delta_i$ are zero-centered. We consider the same setup as for the previous example, averaging every 10 local SGD iterations. We illustrate the convergence behavior and the quantization error in Figure~\ref{fig:local}, which shows better convergence and higher accuracy for lattice-based quantization.

	\paragraph{Example 3: Distributed Power Iteration.} 
	The power iteration method estimates the principal eigenvector of an input matrix $X$, whose rows are partitioned across machines, to form input matrices $X_i$ at the nodes. Each row of the input matrix $X$ is generated from a multivariate gaussian with first two eigenvalues large and comparable.
	In each iteration, each machine updates their estimate relative to its input matrix as $\bm u_i = X_i^T X_i \bx$, and nodes average these estimates. We apply $8$-bit quantization to communicate these vectors $\bm u_i$, following~\cite{MeanEstimation}. Notice that our method is a good candidate here since these estimates clearly need not be zero-centered, a fact that is evident in Figure~\ref{fig:power} (left). Please see Section~\ref{sec:power-iteration} for a full description and additional results. Figure \ref{fig:power} (center) shows convergence under different quantization schemes, while Figure \ref{fig:power} (right) shows that our method provides significantly lower quantization error across iterations.
	
	\begin{figure}[h]
		\centering
		\begin{minipage}[b]{0.3\linewidth}
			\centering
			\includegraphics[width=5cm]{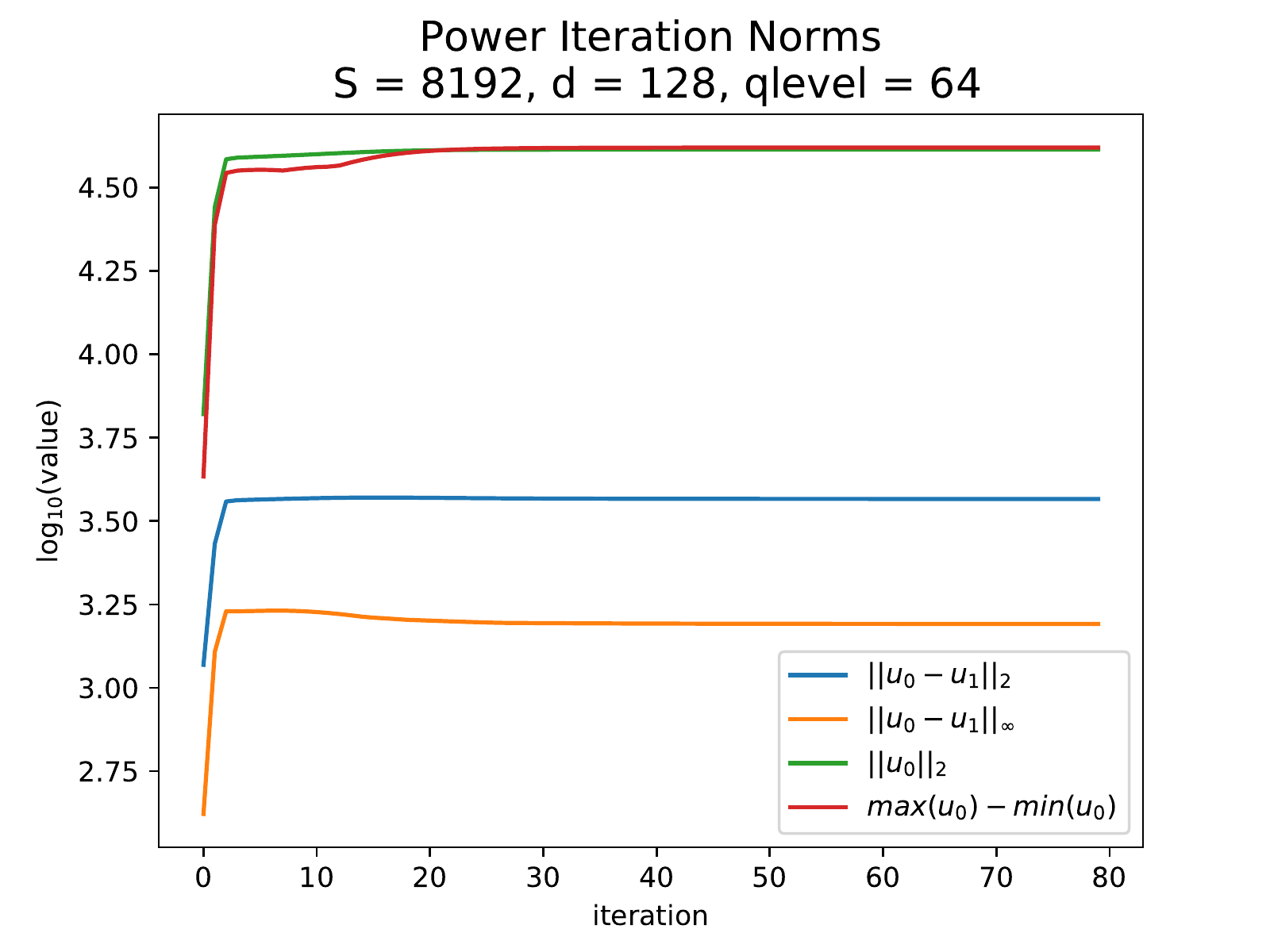}
		\end{minipage}
		\quad
		\begin{minipage}[b]{0.3\linewidth}
			\centering
			\includegraphics[width=5cm]{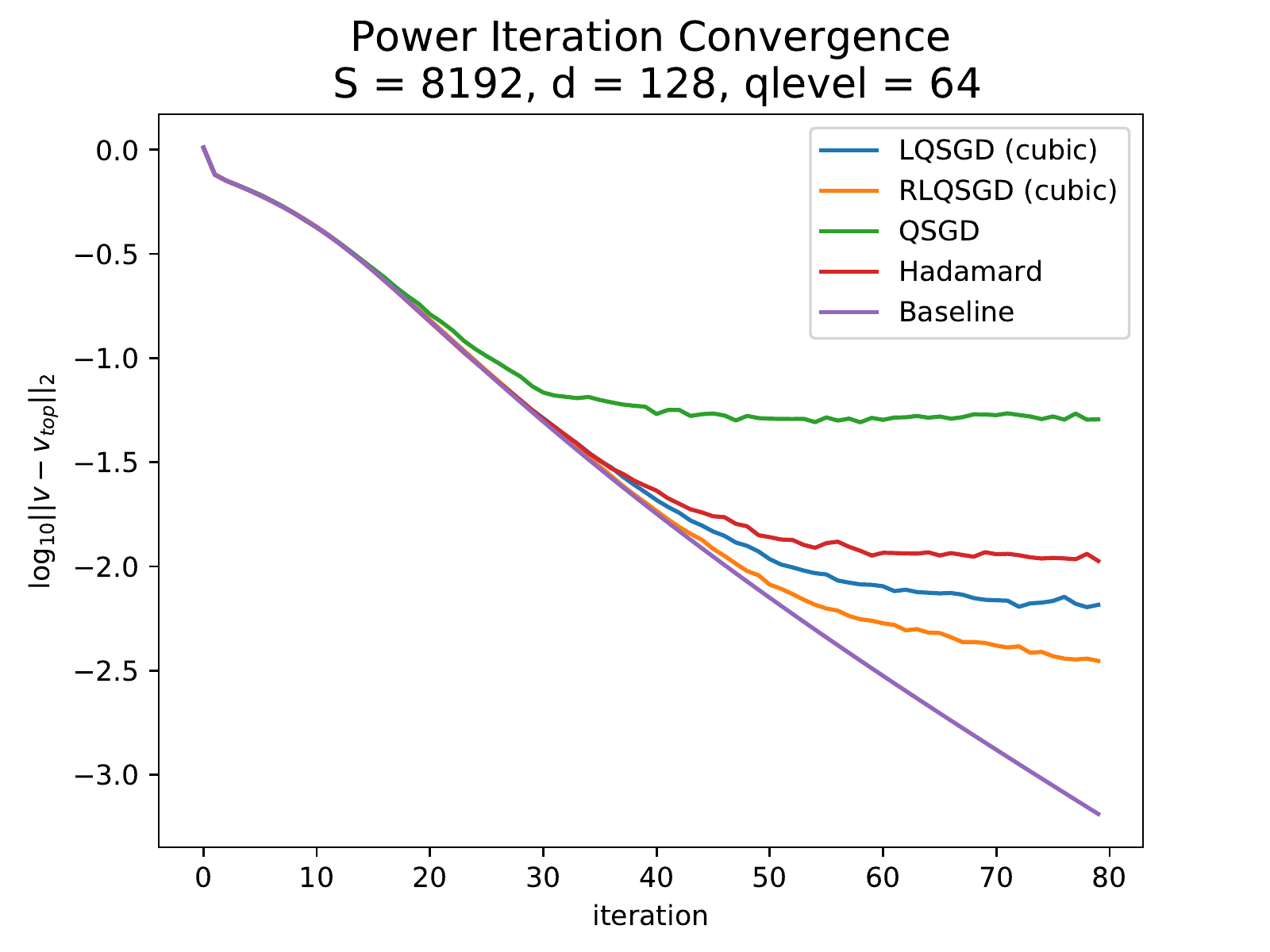}
		\end{minipage}
		\quad
		\begin{minipage}[b]{0.3\linewidth}
			\centering
			\includegraphics[width=5cm]{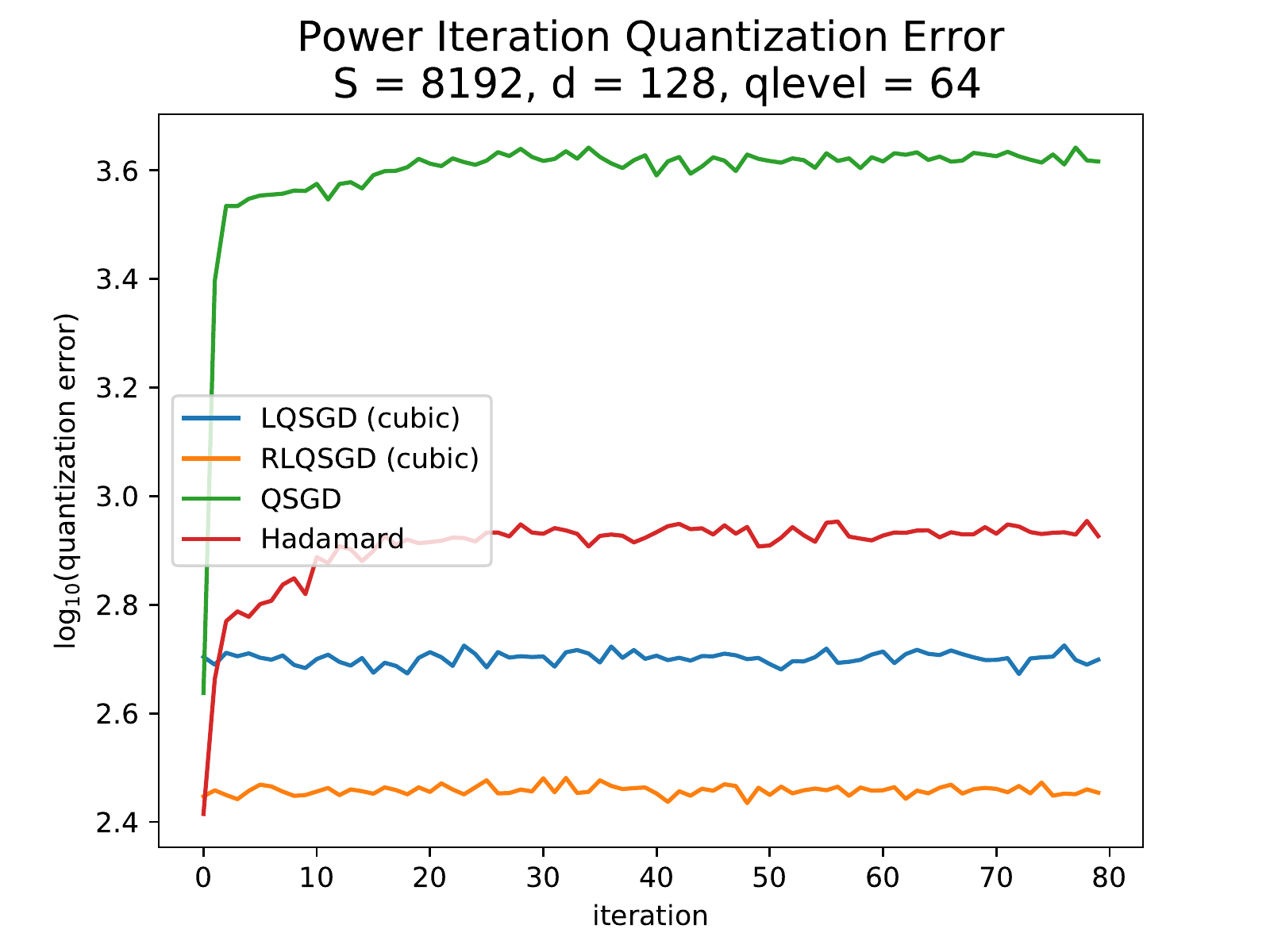}

		\end{minipage}
    \caption{Power iteration: input norms (left), convergence (center) and quantization error (right).}
    \label{fig:power}
\end{figure}

}

	\section{Experimental Report}\label{sec:exp}
We consider three distinct applications:  data-parallel SGD, distributed power iteration for eigenvalue computation~\cite{MeanEstimation}, and adding communication compression to local SGD~\cite{stich2018local}. We implement the practical version of our algorithm (see Section~\ref{sec:practical} below), which for simplicity we call LQSGD, using a cubic lattice and $\ell_\infty$ to measure input variance, and a version we call RLQSGD with the Hadamard rotation to ensure close-to-optimal variance. We compare against QSGD~\cite{QSGD}, the Hadamard-based scheme of~\cite{MeanEstimation}, as well as uncompressed baselines.

\subsection{The Algorithm in Practice}\label{sec:practical}

Practically, the LQSGD algorithm with two machines ($u$ and $v$) works as follows: we first offset the cubic lattice by a uniformly random vector in $[-\frac{s}{2},\frac{s}{2}]^d$, using shared randomness. This ensures that quantizing to the \emph{nearest} lattice point now gives an unbiased estimator, avoiding the extra cost of the convex-hull method (though that can still be employed practically if shared randomness is unavailable, and for the cubic lattice is a simple coordinate-wise rounding procedure). Machine $u$ rounds to its closest lattice point and sends its mod-$q$ color to machine $v$, who decodes it by finding its closest lattice point with the same color, and vice versa. Finding the closest lattice point under the cubic lattice can be performed coordinate-wise using $\tilde O(d)$ computation, and gives a lattice point within $\frac s2$ $\ell_\infty$ norm distance. Mod-$q$ coloring can also be efficiently computed, and has the property that any two lattice points $\bm\lambda$, $\bm\lambda'$ with the same coloring have $\|\bm\lambda-\bm\lambda'\|_\infty \ge qs $. Therefore, if the input gradients $\bm g_0$, $\bm g_1$ have $\|\bm g_0-\bm g_1\|_{\infty} \le \frac{(q-1)s}{2}$, then decoding is successful. So, assuming we have an estimate $y$ such that for all $\bm g_0$, $\bm g_1$ we have $\|\bm g_0-\bm g_1\|_{\infty} \le y$, we set our side-length $s=\frac{2y}{q-1}$. 

We also implement the algorithm using the structured random rotation described in Section \ref{sec:cubic}, which we call RLQSGD. Here, we also generate the matrix $D$ on machines using shared randomness, and then apply the transformation $HD$ to inputs before quantization. The algorithm then proceeds exactly as LQSGD; when setting lattice side length we use an estimate $y_R$ of $\ell_\infty$-norm distance \emph{after} applying $HD$,
i.e., $||HD(g_0-g_1)||_\infty\leq y_R$.

We will describe in each experiment how we set and update our estimate of $y$; generally, this can be done by simply measuring $\ell_\infty$ norms between inputs during the course of the algorithm, and multiplying these by a small constant factor ($1.5$ to $3.5$) to ensure sufficient slack that all decodes succeed. 

\subsection{Least-Squares Regression}\label{sec:LSexp}
The classic problem of least-squares is as follows: given as input some matrix $A$ and target vector $\bm b$, our goal is to find $\bm w^* = \text{argmin}_{\bm w}||A\bm w-\bm b||^2_2$, i.e. the vector which, upon multiplication by $A$, minimizes Euclidean distance to $\bm b$.

To obtain instances of the problem in order to test our approach, we generate $\bm w^*\in \mathbb{R}^d$ and entries of $A\in \mathbb{R}^{S\times d}$ by sampling from $\mathcal{N}(0,1)$, and we set $\bm b = A\bm w^*$. We then run distributed gradient descent using our quantization scheme, with the following settings: 
\begin{enumerate}
	\item $S = 8192$ samples in $d = 100$ dimensions
	\item $n = 2$ worker machines
	\item results are averaged over 5 random seeds [0, 10, 20, 30, 40]
\end{enumerate}
In all experiments, in each iteration of gradient descent, the dataset (i.e., the rows of matrix $A$) will be randomly divided into two equal $\frac{S}{2}$-sized groups and provided to the two machines. We denote $\frac{S}{2}$-batch gradients by $\bm g_0$, $\bm g_1$ respectively. We use large batch sizes in order to be in a regime where our algorithm's assumptions can give a visible advantage over other bounded norm based approaches. 

\paragraph{Experiment 1: Norms relevant to quantization schemes.}
Our first experiment is to provide justification of our main message, that \emph{input variance} (or similar measure of mutual distance depending on the setting) is often far lower than \emph{input norm} in practical applications, and is a more natural quantity with which to measure output variance of algorithms.

In Figures \ref{fig:invar1} and \ref{fig:invar2} we compare four quantities, in the setting of least-squares on two worker machines: 

\begin{itemize}
	\item batch gradient distance $\|\bm g_0-\bm g_1\|_2$ (the closest proxy of our theoretical quantity $y$ from \textsc{MeanEstimation}),
	\item $\|\bm g_0-\bm g_1\|_{\infty}$ (the equivalent quantity under $\ell_\infty$ norm, which is the appropriate norm when using the cubic lattice),
	\item $\|\bm g_0\|_2$, the batch gradient norm, (which is used as the measure of input size in QSGD-L2 \cite{QSGD}, as well as most other prior work)
	\item $\max(\bm g_0)-\min(\bm g_0)$, the batch gradient coordinate difference, used as the measure of input size in the QSGD implementation \cite{QSGD}
\end{itemize}  

We see that the former two quantities, used by our algorithms, are far lower than the latter two, i.e., input variance/distance is significantly lower than input norm, and batch gradients are not centered around the origin. Similar effects are seen in a wide variety of applications and parameter settings, and justify our focus on input distances over norms. Iterations in the figures are under gradient descent performed using full (unquantized) gradient.

\begin{figure}[ht]
	\centering
	\begin{minipage}[b]{0.45\linewidth}
		\centering
		\includegraphics[width=7cm]{figures/norms/norms_S_8192_d_100.pdf}
		\caption{norms using fewer samples}
		\label{fig:invar1}
	\end{minipage}
	\quad
	\begin{minipage}[b]{0.45\linewidth}
		\centering
		\includegraphics[width=7cm]{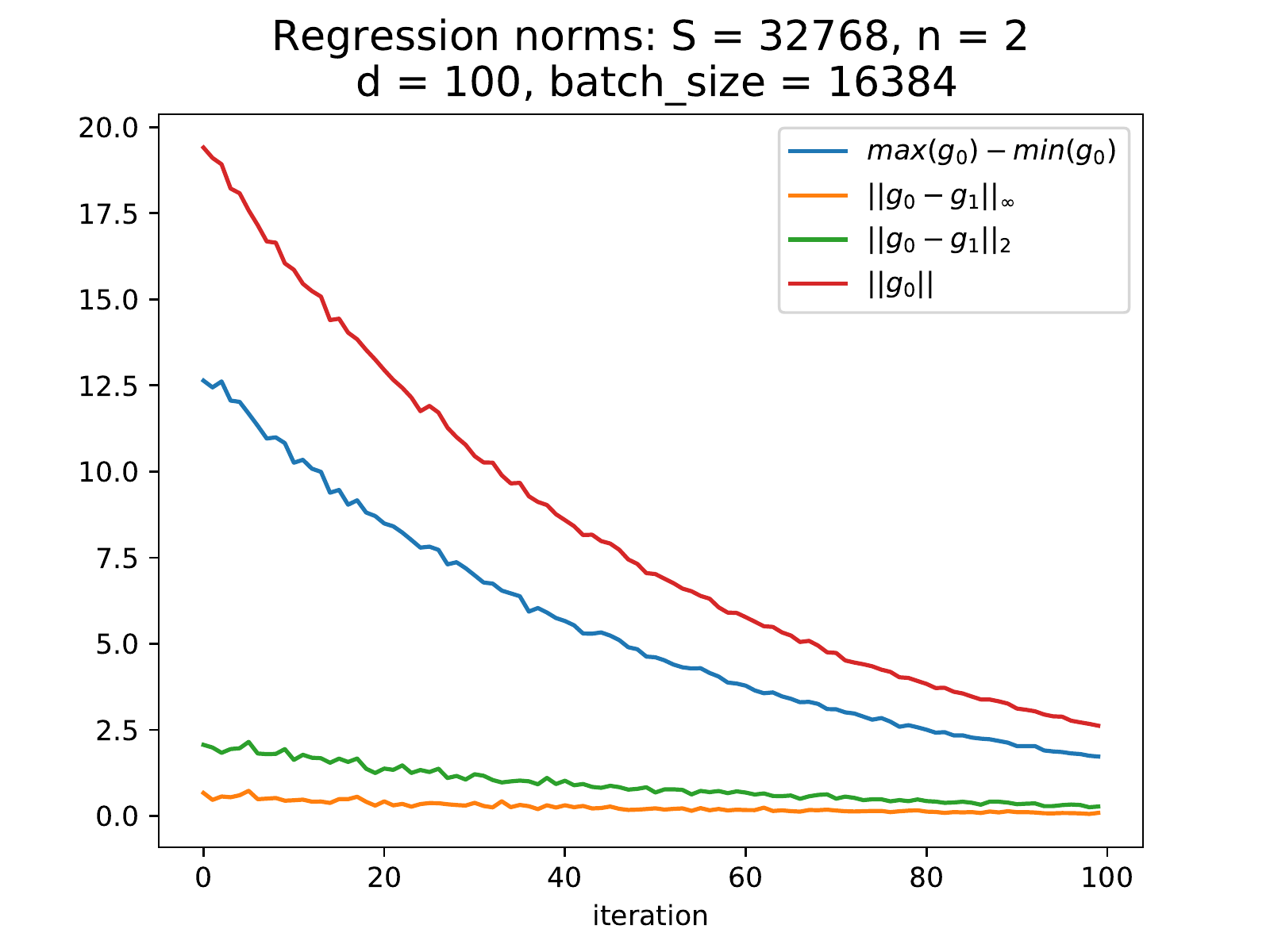}
		\caption{norms using more samples}
		\label{fig:invar2}
	\end{minipage}
\end{figure}

\paragraph{Experiment 2: Variances using quantization methods.}
We now apply quantization schemes to the communication gradients, and measure the variances induced, in order see whether the input norm versus variance difference demonstrated by Figures \ref{fig:invar1} and \ref{fig:invar2} translates to lower output variance for LQSGD over other schemes.

In this experiment, we perform distributed stochastic gradient descent, by communicating, averaging, and applying the \emph{quantized} batch gradients $\bm g_0$ and $\bm g_1$ of the two worker machines in each iteration. By construction, this resulting value $EST$ will be an unbiased estimator of the full true gradient $\grad$, and we will examine the associated \emph{output variance} $\mathbb{E}[\| EST-\grad \|^2_2]$, for different quantization schemes. We experiment with $q= 8$ for all methods (for QSGD the equivalent quantity is referred to as qlevel), which means that messages comprise of $\log_2 8 = 3$ bits per coordinate. In norm-based methods like QSGD, there is an additional communication overhead of one/two $64-$bit floating point values for every quantization, used to convey input norms.

To allow machines using LQSGD to maintain an estimate $y$ of distance between inputs, in the first iteration alone a pre-computed estimate is provided to both machines. (To ensure that this does not convey an unfair advantage, we allow the other methods to update using the full true gradient $\grad$ in the first iteration.) Henceforth, machines dynamically adjust $y$, to ensure that we always have $\|\bm g_0-\bm g_1\|_\infty < y$, in order for decoding to be successful. For each iteration $t$, denoting as $Q(\bm g_0),Q(\bm g_1)$ the encoded (i.e., rounded to closest lattice point) estimates, machines use the value $y(t+1) = 1.5 \cdot\|Q(\bm g_0)-Q(\bm g_1)\|_\infty$ for the next iteration. Similarly, for RLQSGD, $y_R(t+1) = 1.5\cdot \|HD(Q(\bm g_0)-Q(\bm g_1))\|_\infty$. This value is common knowledge to both machines as long as decoding in iteration $t$ is successful, and sufficed, in this experiment, to ensure that all decodes succeeded.

Our baseline is \emph{naive averaging} (i.e., what would be attained if we could communicate at full precision between the machines), and we compare against variants of \emph{QSGD}~\cite{QSGD}, and the \emph{Hadamard}-based quantization scheme of~\cite{MeanEstimation}. 

Figures \ref{fig:sgdvar1} and \ref{fig:sgdvar2} demonstrate that LQSGD provides the lowest output variance among the methods, and is the only method to achieve \emph{variance reduction}, i.e. the output variance of the average of the quantized batch gradients is lower than the input variance of the \emph{individual} batch gradients.

\vspace{10pt}
\begin{figure}[ht]
	\centering
	\begin{minipage}[b]{0.45\linewidth}
		\centering
		\includegraphics[width=7cm]{figures/superlinear/variance_S_8192_d_100_log.pdf}
		\caption{variance at 3 bits per coordinate, fewer samples}\label{fig:sgdvar1}
	\end{minipage}
	\quad
	\begin{minipage}[b]{0.45\linewidth}
		\centering
		\includegraphics[width=7cm]{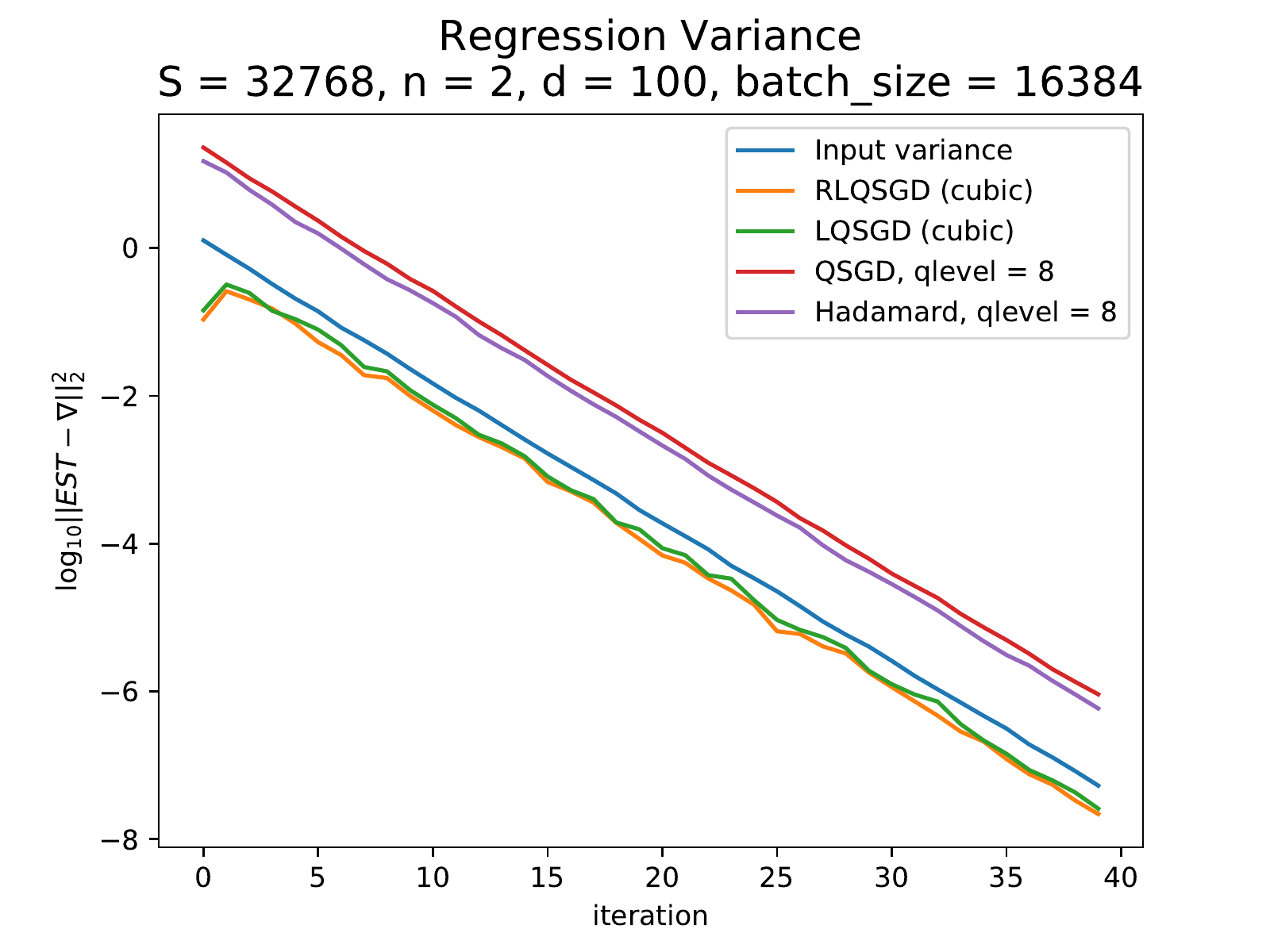}
		\caption{variance at 3 bits per coordinate, more samples}\label{fig:sgdvar2}
	\end{minipage}
\end{figure}

\paragraph{Experiment 3: Convergence using quantization methods.}
We next measure the effect on the \emph{convergence} of the SGD process of the quantization schemes, using the same input data. That is, machines now apply the unbiased estimate of $\grad$ they obtain by averaging quantized batch gradient in each iteration. To clearly show the effects of quantization on convergence, we will use a high learning rate of 0.8. Estimation of $y$, and other parameter settings, remain the same as previously. Figures \ref{fig:conv1} and \ref{fig:conv2} demonstrate faster convergence for LQSGD over other methods.

\vspace{10pt}
\begin{figure}[ht]
	\centering
	\begin{minipage}[b]{0.45\linewidth}
		\centering
		\includegraphics[width=7cm]{figures/superlinear/convergence_S_8192_d_100.pdf}
		\caption{convergence at 3 bits per coordinate, fewer samples}\label{fig:conv1}
	\end{minipage}
	\quad
	\begin{minipage}[b]{0.45\linewidth}
		\centering
		\includegraphics[width=7cm]{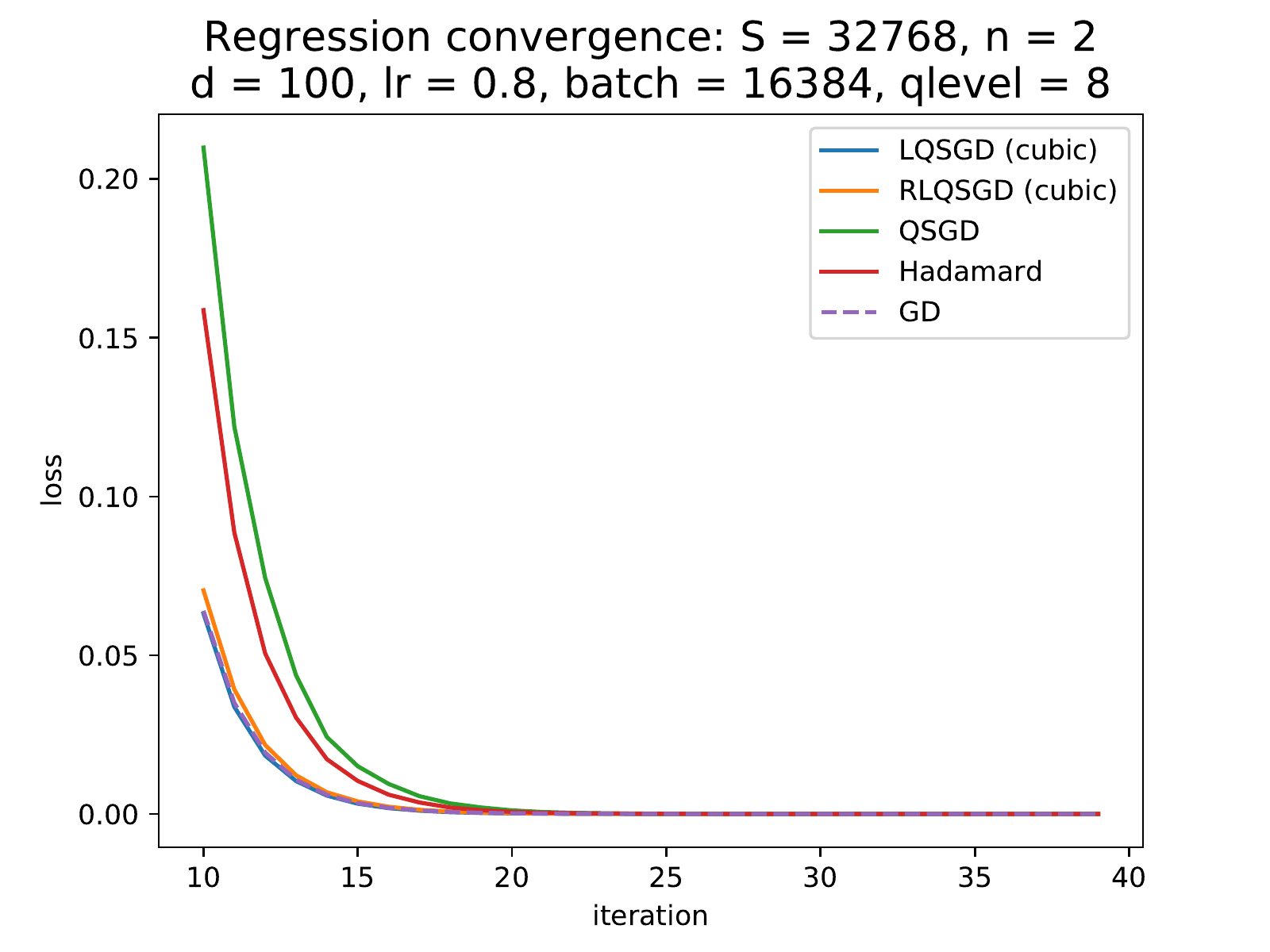}
		\caption{convergence at 3 bits per coordinate, more samples}\label{fig:conv2}
	\end{minipage}
\end{figure}

\paragraph{Experiment 4: Exploration of potential variance with sub-linear quantization.}
We also wish to test the performance of our sublinear quantization scheme from Section \ref{sec:sublinear}. Unfortunately, a naive implementation of the scheme (particularly, of finding the closest lattice point of a particular color under the chosen random coloring) is not computationally feasible under high enough dimension for meaningful results. Therefore, here we will only simulate what the effect of the quantization would be on output variance compared to state-of-the-art schemes; in future work we will explore how the scheme can be adapted for efficient implementation.

In this experiment we again have two worker machines $u,v$, and we compare the variance induced when $u$ sends its quantized batch gradient $\bm g_0$ to $v$, who decodes it using $\bm g_1$. The sublinear-communication quantization method that we compare with is the vQSGD cross polytope method  with repetition \cite{VQSGD}. We will perform our experiments using $\frac{d}{2}$ bits, i.e., 0.5 bits per coordinate (and so we set the number of vQSGD repetitions accordingly).

\out{
	At $O(\sqrt{d})$ bits, vQSGD achieves $O(\sqrt{d}\log(d)\Vert g_0\Vert^2_2)$ variance whereas LQSGD achieves $O(d\Vert g_0-g_1\Vert^2_2)$ variance. So if $\Vert g_0\Vert_2$ is not too much larger than $\Vert g_0 - g_1\Vert_2$, then vQSGD has a clear edge here. On the other hand, at $O(d)$ bits, vQSGD achieves $O(\log(d)\Vert g_0\Vert^2_2)$ variance whereas LQSGD achieves $O(\Vert g_0-g_1\Vert^2_2)$ variance. Hence, we will perform our experiments at $\frac{d}{2}$ bits i.e 0.5 bit per coordinate.}

We use a slightly different means of dynamically updating $y$, since variance from quantization means that the previous method no longer gives good estimates. Now, once in 5 iterations, machine $u$ receives 2 batches, which allows it to compute two gradient estimates $\bm g_0$ and $\bm g_0'$, and compute $y = 1.6\cdot||\bm g_0-\bm g_1||_\infty$ (which suffices to ensure correct decoding in this experiment). It then sends this to $v$ as a 64-bit floating point value. Note that this method of updating $y$ generalizes to many machines, and that the constant factor $1.6$ can be changed as necessary.

We use the cubic lattice, and as before denote its side length by $s$. Given $\|\bm g_0-\bm g_1\|_\infty < y$. We can see from the analysis of the sublinear scheme (in particular, in proof of Lemma \ref{lem:vorno}) that if we choose a number of \emph{colors} (i.e., bit-strings) equal to the expected number of expanded Voronoi regions covering a point, quantization will succeed with at least constant probability. This gives an expression of $\log_2\left(1+2y/r_p\right)^d = d\log_2\left(1+4y/s\right)$ bits. To use $0.5d$ bits, we set $\log_2\left(1+4y/s\right) = 0.5$, from which we get $s = 4y/(\sqrt{2}-1)$ (though our number of bits and therefore value of $s$ slightly differs from this in order to exactly match the communication used by vQSGD, where number of repetitions must be an integer). Since the randomization in the quantization effectively shifts each coordinate independently in $[-s/2,s/2]$, we get variance $ds^2/12$, which is what we plot in the figures. 

Figures \ref{fig:subvar1} and \ref{fig:subvar2} demonstrate that sublinear LQSGD could be competitive with state-of-the-art methods, though it only outperforms when using large numbers of samples with respect to dimension. The steps in the graph are due to the periodic updates to $y$.

\vspace{10pt}
\begin{figure}[ht]
	\centering
	\begin{minipage}[b]{0.45\linewidth}
		\centering
		\includegraphics[width=7cm]{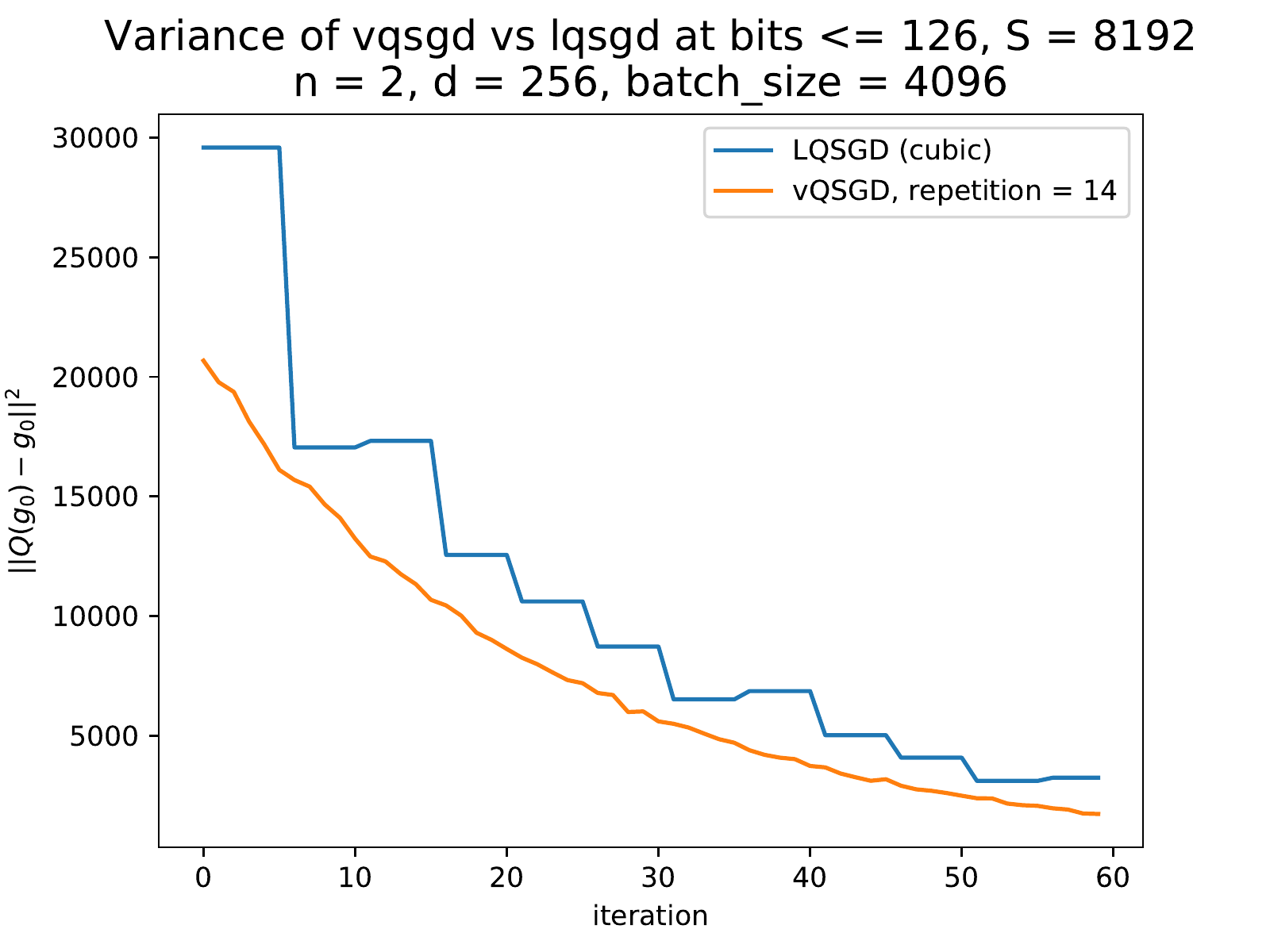}
		\caption{sublinear schemes, variance for fewer samples}
		\label{fig:subvar1}
	\end{minipage}
	\quad
	\begin{minipage}[b]{0.45\linewidth}
		\centering
		\includegraphics[width=7cm]{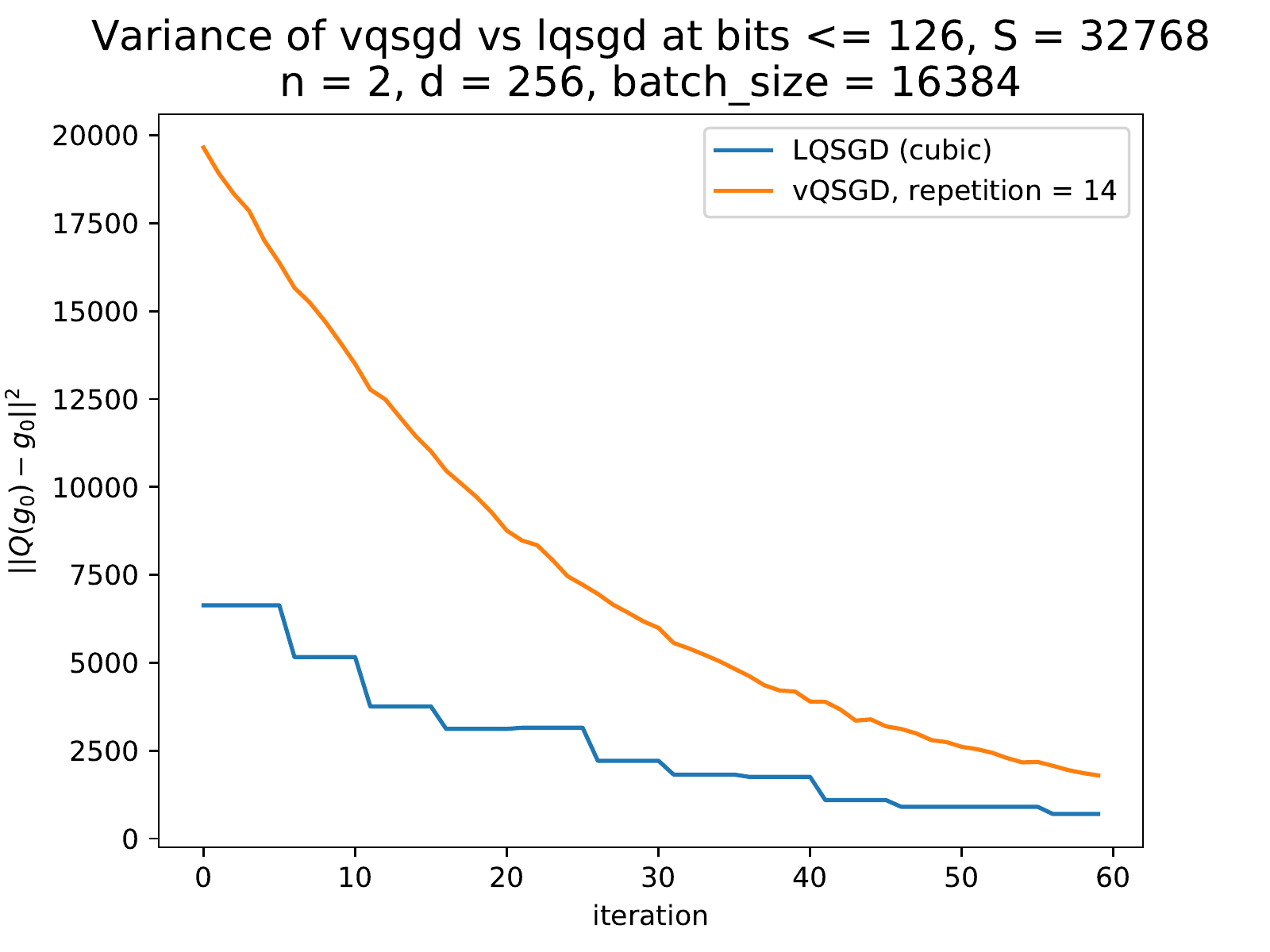}
		\caption{sublinear schemes, variance for more samples}
		\label{fig:subvar2}
	\end{minipage}

\end{figure}

\paragraph{Experiment 5: Convergence on real dataset using quantization schemes with more than 2 machines.}
We now test the performance of the (linear/superlinear) quantization scheme in a more realistic setting, on a real dataset using multiple machines. We use the dataset \texttt{cpusmall\_scale} from the LIBSVM datasets \cite{LibSVM}, which has $S=8192$ and $d=12$. We initialize the initial weight to a point far from the origin relative to $\bm w_{\text{opt}}$, specifically the vector of $-1000$ in each coordinate. This is to synthetically study the behavior one can expect on a general convex loss function when $w_{\text{opt}}$ is arbitrary. We study the convergence behavior with $q = 16$, $n = 8, 16$ and batch size = $\frac{S}{n}$.

\begin{figure}[ht]
	\centering
	\begin{minipage}[b]{0.45\linewidth}
		\centering
		\includegraphics[width=7cm]{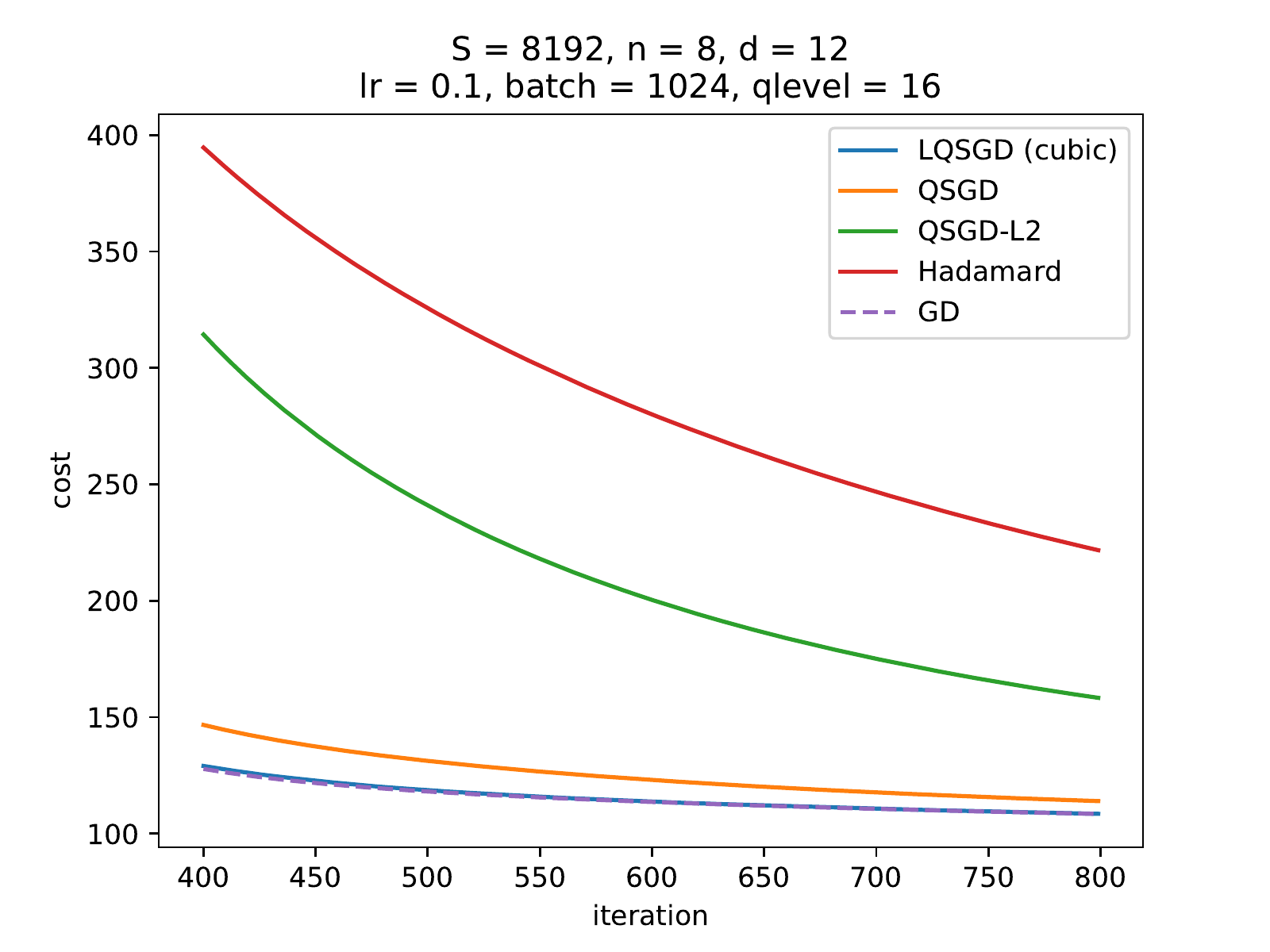}
		\caption{\texttt{cpusmall\_scale}, 8 machines}\label{fig:libsvm1}
	\end{minipage}
	\quad
	\begin{minipage}[b]{0.45\linewidth}
		\centering
		\includegraphics[width=7cm]{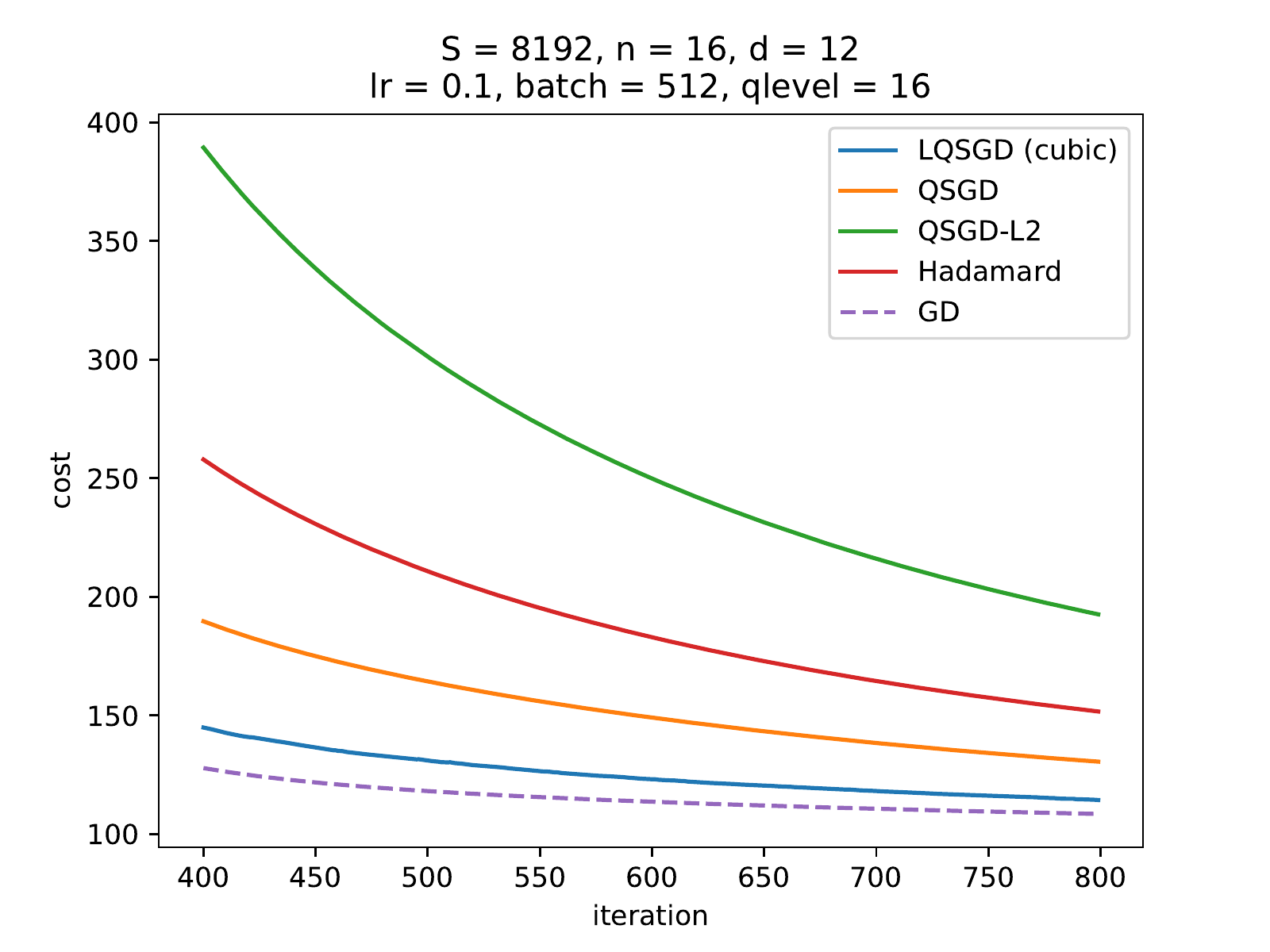}
		\caption{\texttt{cpusmall\_scale}, 16 machines}\label{fig:libsvm2}
	\end{minipage}
\end{figure}

We apply Algorithm \ref{alg:ME}, with one randomly chosen machine acting as leader to collect all the quantized batch gradients $Q(\bm g_i)$, and broadcast the average as a quantized vector i.e $Q\left(\sum_i Q(\bm g_i)/n\right)$. This machine also uses the quantized gradients $Q(\bm g_i)$ to evaluate and broadcast $y$ for the next round as a 64-bit float using $y = 3\cdot \max_{i,j}||Q(\bm g_i)-Q(\bm g_j)||_\infty$, where the factor $3$ is again chosen large enough to ensure that decoding is always successful in this experiment (and can be altered as necessary in other settings).

Figures \ref{fig:libsvm1} and \ref{fig:libsvm2} exhibit fast convergence compared to other methods.

\subsection{Local SGD}

A related application is that of compressing \emph{models} in LocalSGD~\cite{stich2018local}, where each node takes several SGD steps on its local model, followed by a global \emph{model averaging} step, among all nodes. 
(Similar algorithms are popular in Federated Learning~\cite{kairouz2019advances}.) 

\paragraph{Experiment 6: Local SGD convergence on synthetic data.}
We use RLQSGD to quantize the models transmitted by each node as part of the averaging: to avoid artificially improving our performance, we compress the \emph{model difference} $\Delta_i$ between averaging steps, at each node $i$. 
RLQSGD is a good fit since neither the models nor the $\Delta_i$ are zero-centered. We consider the same setup as in Section \ref{sec:LSexp}, averaging every 10 local SGD iterations. We illustrate the convergence behavior and the quantization error in Figure~\ref{fig:local}, which shows better convergence and higher accuracy for lattice-based quantization. 

\begin{figure}[h!]
	\begin{subfigure}{0.49\textwidth}
		\includegraphics[width=7cm]{figures/local_sgd/convergence_S_8192_d_100_log.pdf}
	\end{subfigure}
	\hfill
	\begin{subfigure}{0.49\textwidth}
		\includegraphics[width=7cm]{figures/local_sgd/qe_S_8192_d_100_log.pdf}
	\end{subfigure}
	\caption{Local SGD: convergence for different quantizers (left) and quantization error (right).}
	\label{fig:local}
\end{figure}

\subsection{Neural Network Training}

Our next task is applying LQSGD for training neural networks in a data-parallel environment, where we use it to average gradients. 
We note that this application slightly extends the setting for distributed mean estimation~\cite{MeanEstimation}, since other methods can, e.g., employ historical information from previous iterations~\cite{stich2018sparsified, karimireddy19a}. 

\paragraph{Experiment 7: Gradient compression for neural network training.}
We train ResNets~\cite{he2016deep} on a  subset of 10 classes from the ILSVRC dataset~\cite{ImageNet}, with full-sized images, as well as on the CIFAR dataset~\cite{CIFAR10}. The former dataset is popular for model and hyperparameter calibration~\cite{imagenette} relative to the full version of the dataset, and model performance on it is known to correlate well with performance on the full ILSVRC dataset. The latter is a classic small dataset in computer vision.  

The results in Figure~\ref{table:quant} show the Top-1 validation accuracy on ResNet18 (11M parameters) for LQSGD with an average of $4$ bits per entry, versus 4-bit QSGD (L2 and LInf normalized variants), PowerSGD~\cite{vogels2019powersgd} with rank $16$ (as suggested by the authors), and 1bitSGD/EFSignSGD~\cite{Seide14, karimireddy19a}. For LQSGD, each node uses one batch per epoch to estimate $\sigma$, and uses $y = 3\sigma$ as its upper bound estimate. 
(This results in a rate of incorrect decodings of $\sim 3\%$, which we allow, but which has no impact on convergence.) Results are averaged over 2 runs, since variance is small.

We note that all compression algorithms lose accuracy relative to the full-precision baseline. (The large gap for EFSignSGD is reasonable, since it uses the fewest bits.) Perhaps surprisingly, LQSGD loses the least accuracy relative to the baseline, although the difference relative to QSGD is small.  
We conclude that, although gradient compression is not the main application for lattice-based quantization, it can yield competitive results for this task.  

Figure~\ref{fig:cifar100} presents additional accuracy results for neural network training with compressed gradients, training ResNet20 on the CIFAR-100 dataset~\cite{CIFAR10,cifar10data}. 
The network is trained for 200 epochs, with standard hyper-parameters. Quantization is applied at the level of each layer, and we use 4 bits per coordinate (except for EF-SignSGD which uses approximately one bit per component). 
The results confirm that LQSGD can be competitive with state-of-the-art quantization methods.

\begin{figure}
	\centering
	\begin{minipage}{.47\textwidth}
		\centering
		\includegraphics[width=0.9\linewidth]{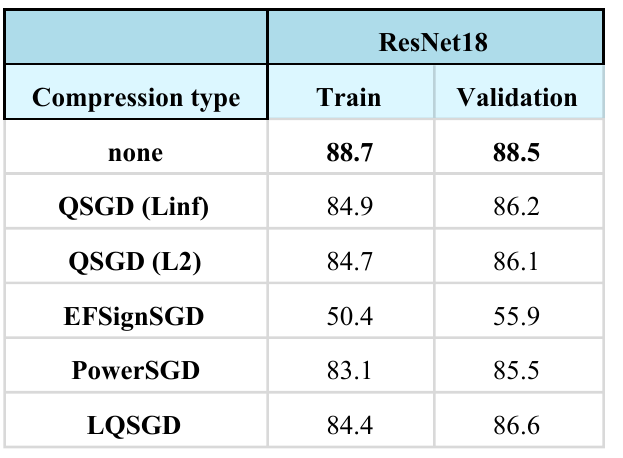}
		\caption{Accuracy results for ResNet18\\ on ILSVRC}
		\label{table:quant}
	\end{minipage}
	\begin{minipage}{.47\textwidth}
		\centering
		\includegraphics[width=0.9\linewidth]{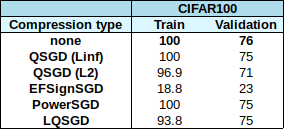}
		\caption{Top-1 Validation Accuracy  for ResNet20 on CIFAR-100.}
		\label{fig:cifar100}
	\end{minipage}%
	
\end{figure}

\subsection{Power Iteration}\label{sec:power-iteration}
Power iteration is an algorithm for finding the principal eigenvector of an input matrix $X$. In a distributed setting over two machines, the algorithm works as follows:
the rows of our input matrix $X$ are partitioned into two subsets $X_0$, $X_1$. At the beginning of each round, both machines have the same unit vector $\bx$, which is the current estimate of the principal eigenvector of $X$. During the round, they must perform the update $\bx \gets \frac{X^T X \bx}{||X^T X \bx||}$. For this, machine $i$ evaluates $\bm u_i = X_i^T X_i \bx$ and shares it with the other machine; both machines can then calculate $X^T X \bx = X_0^T X_0\bx + X_1^T X_1\bx = \bm u_0+\bm u_1$, and thereby perform the update step. We apply quantization methods to communicate these vectors $\bm u_i$, in order to test the performance of LQSGD in this setting. We also apply the method on $8$ worker machines in order to test how our methods scale with more machines.

\begin{figure}[ht]
	\centering
	\begin{minipage}[b]{0.3\linewidth}
		\centering
		\includegraphics[width=5cm]{figures/power_iteration/norms_q_64_log.pdf}
	\end{minipage}
	\quad
	\begin{minipage}[b]{0.3\linewidth}
		\centering
		\includegraphics[width=5cm]{figures/power_iteration/convergence_q_64_log.pdf}
	\end{minipage}
	\quad
	\begin{minipage}[b]{0.3\linewidth}
		\centering
		\includegraphics[width=5cm]{figures/power_iteration/qe_q_64_log.pdf}
		
	\end{minipage}
	\caption{Power iteration: input norms (left), convergence (center) and quantization error (right). Principal eigenvector is $e_2$.}
	\label{fig:power1}
\end{figure}

\begin{figure}[ht]
	\centering
	\begin{minipage}[b]{0.3\linewidth}
		\centering
		\includegraphics[width=5cm]{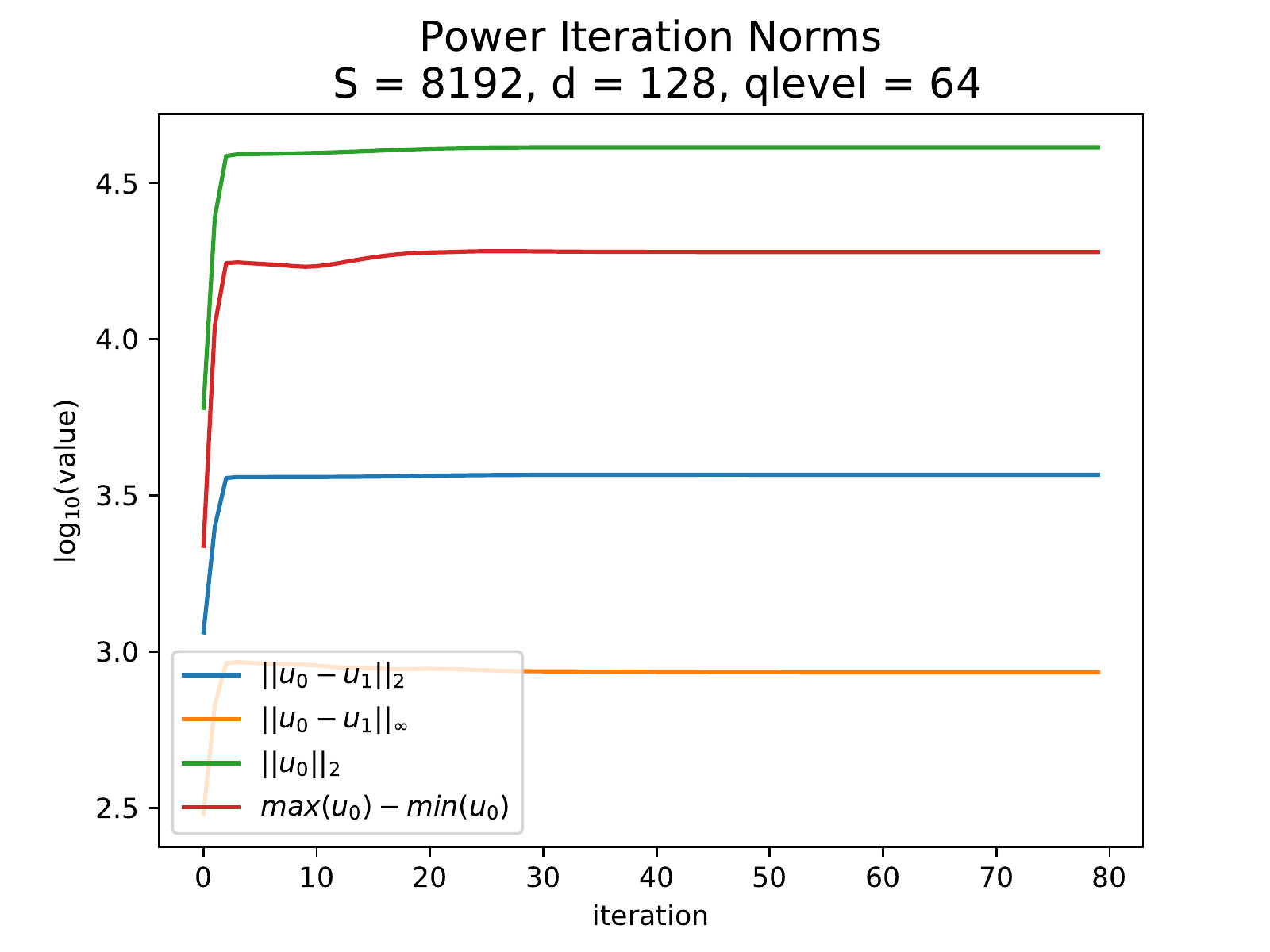}
	\end{minipage}
	\quad
	\begin{minipage}[b]{0.3\linewidth}
		\centering
		\includegraphics[width=5cm]{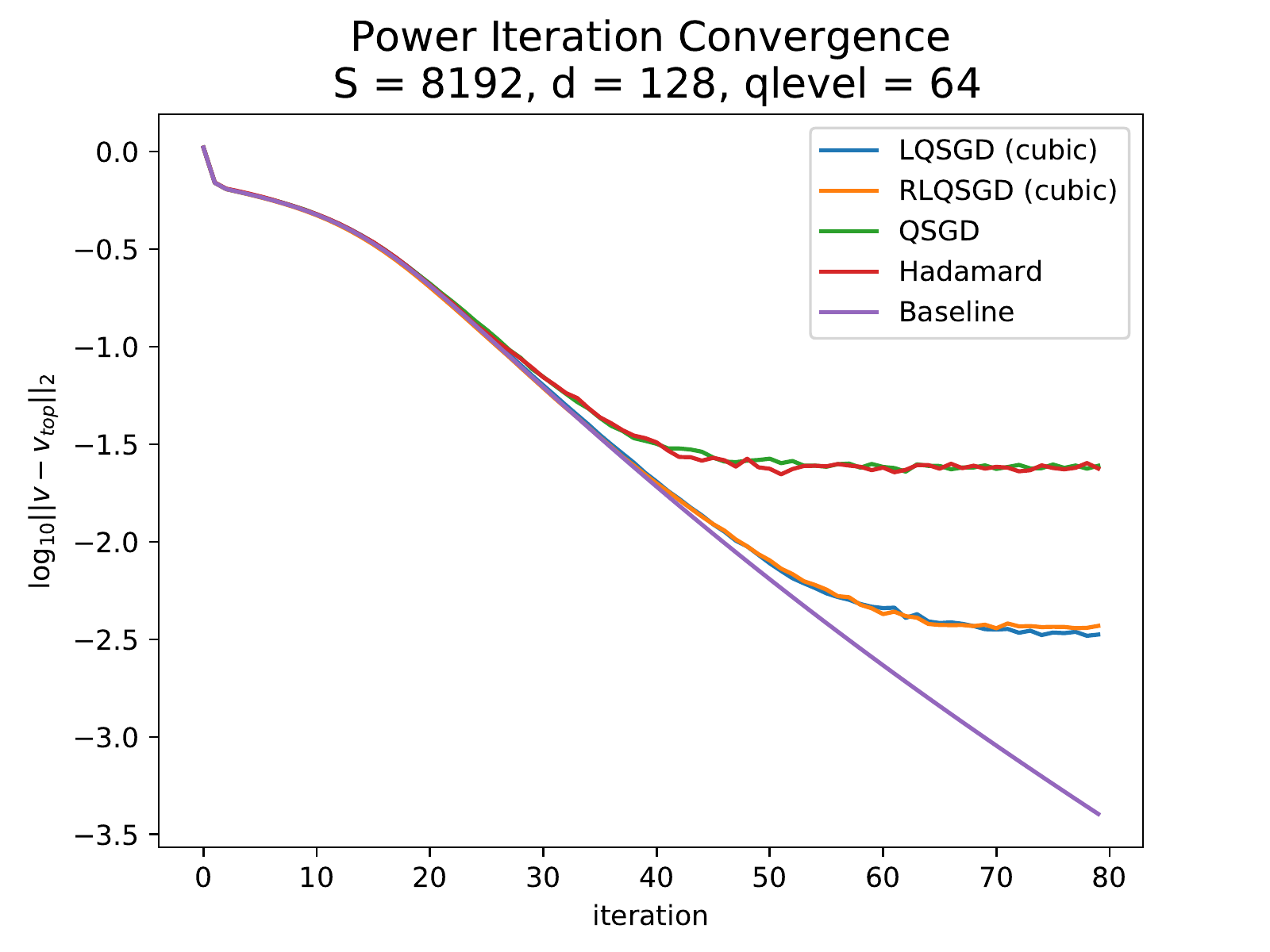}
	\end{minipage}
	\quad
	\begin{minipage}[b]{0.3\linewidth}
		\centering
		\includegraphics[width=5cm]{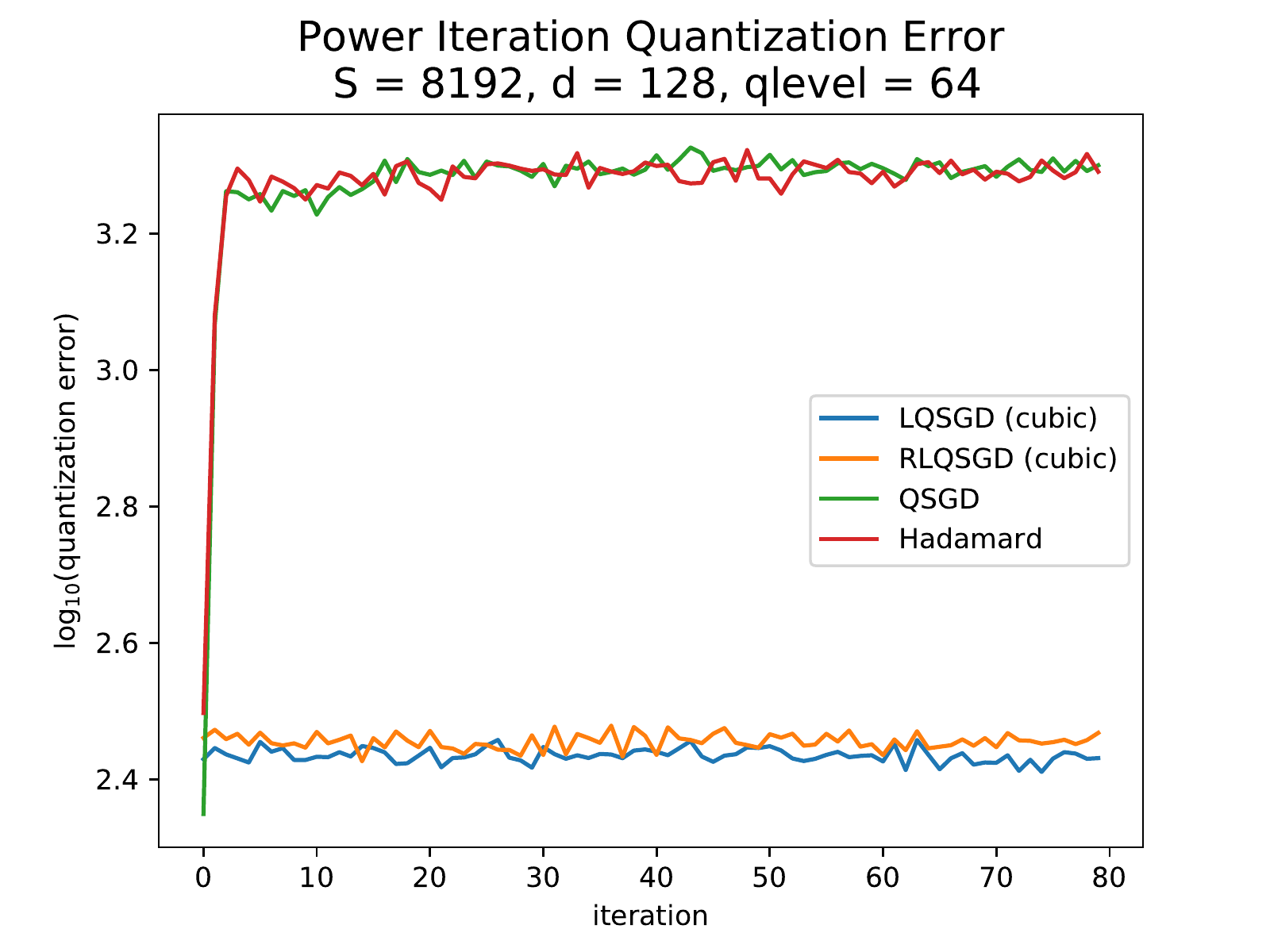}
		
	\end{minipage}
	\caption{Power iteration: input norms (left), convergence (center) and quantization error (right). Principal eigenvector is along a random direction.}
	\label{fig:power2}
\end{figure}

\begin{figure}[h]
	\centering
	\begin{minipage}[b]{0.3\linewidth}
		\centering
		\includegraphics[width=5cm]{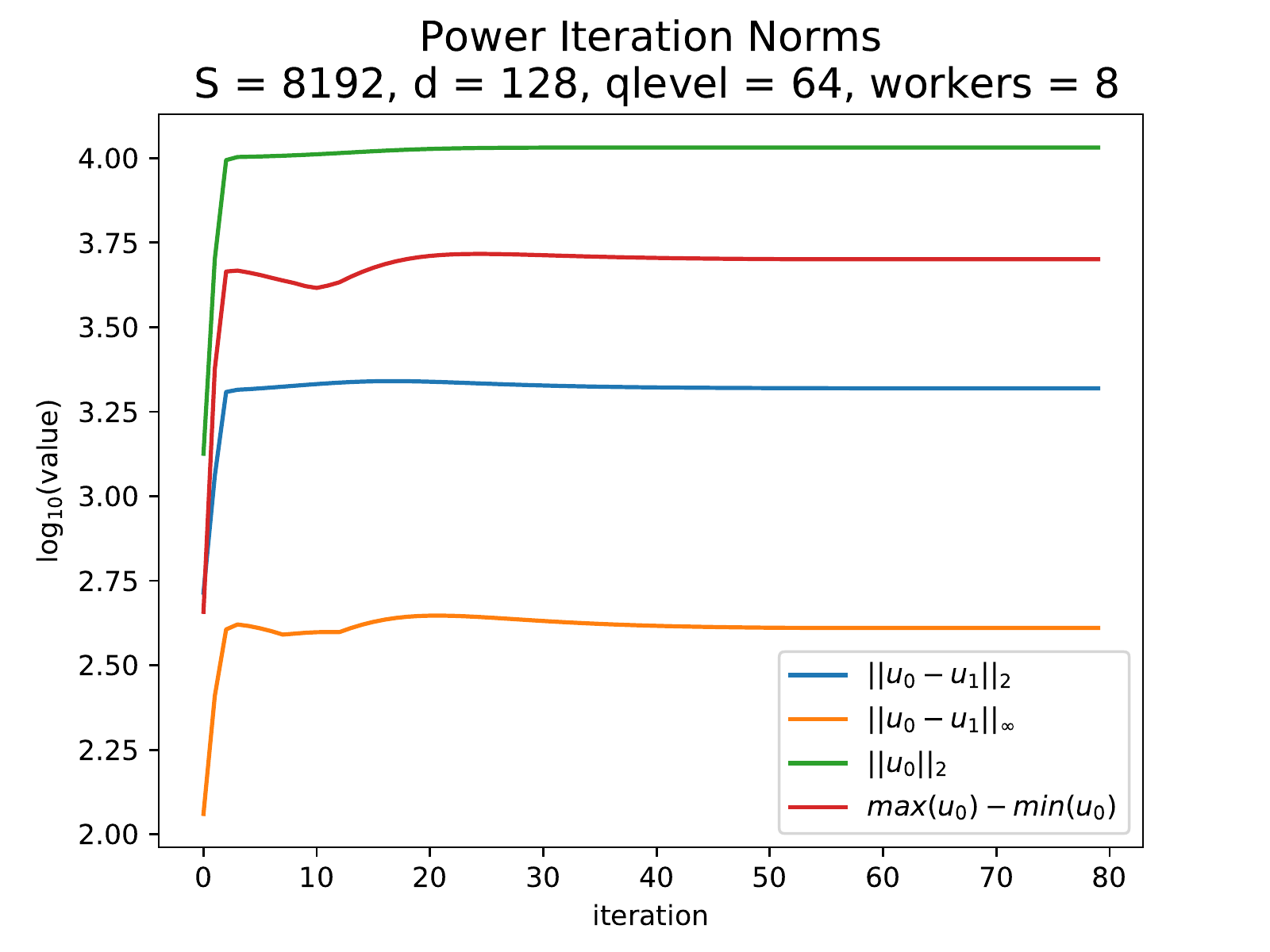}
	\end{minipage}
	\quad
	\begin{minipage}[b]{0.3\linewidth}
		\centering
		\includegraphics[width=5cm]{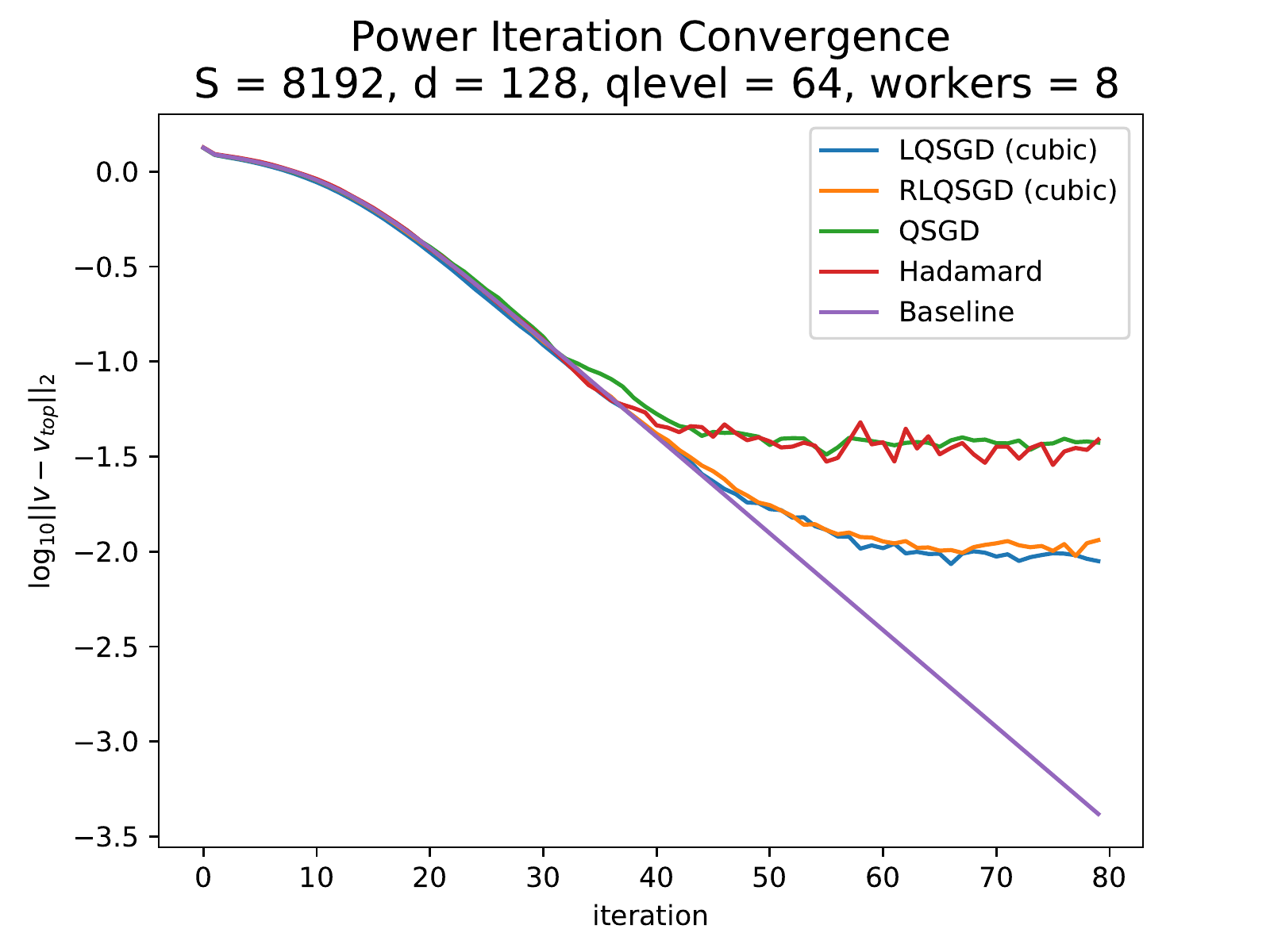}
	\end{minipage}
	\quad
	\begin{minipage}[b]{0.3\linewidth}
		\centering
		\includegraphics[width=5cm]{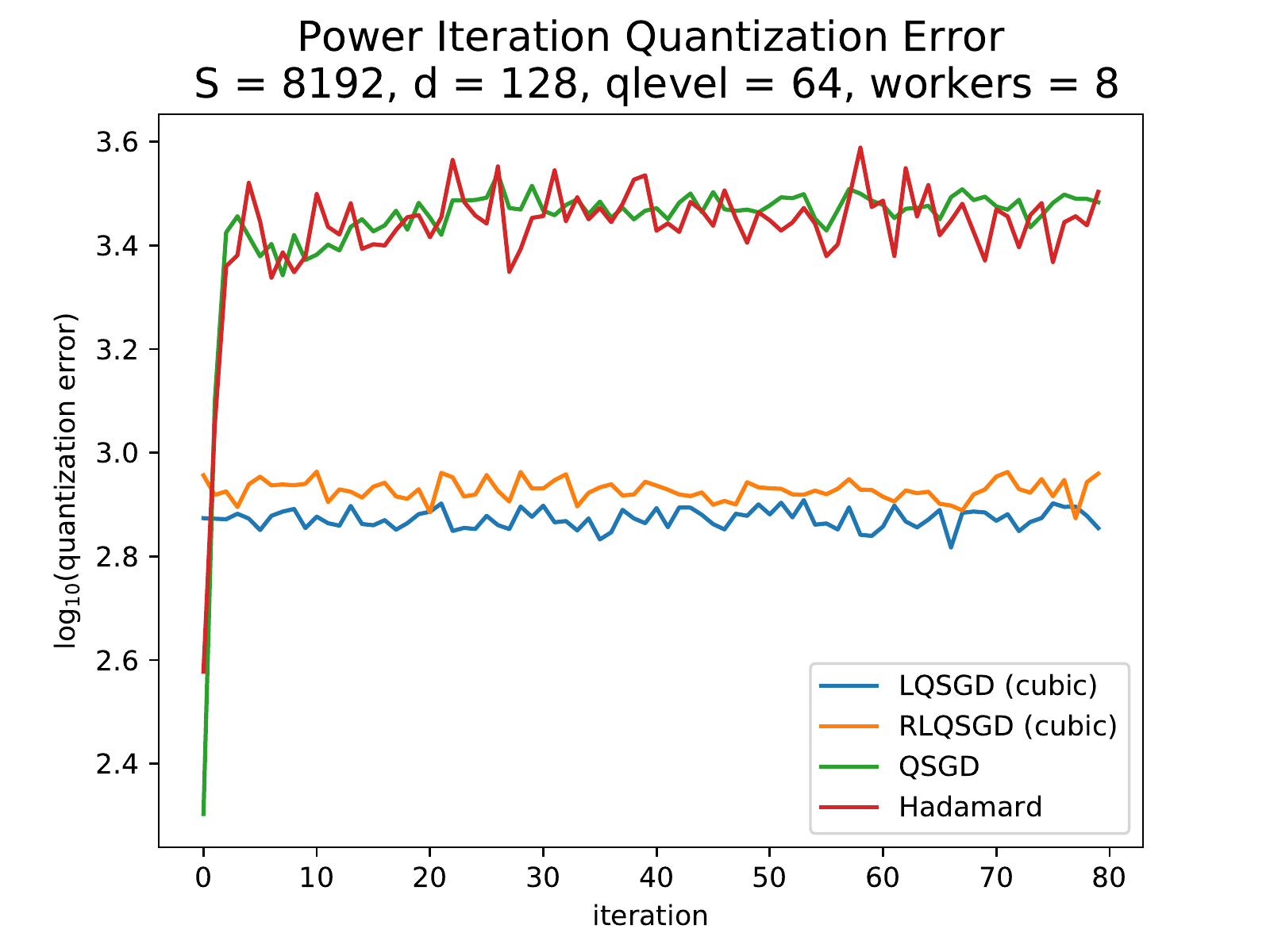}
		
	\end{minipage}
	\caption{Input norms (left), convergence (center) and quantization error (right) when executing distributed power iteration on $8$ parallel workers.}
	\label{fig:power8}
\end{figure}

\paragraph{Experiment 8: Quantization for compression during power iteration.}
Each row of the input matrix $X$ is generated from a multivariate gaussian with first two eigenvalues large and comparable, which is to ensure that power iteration doesn't converge too fast that we cannot observe the effect of quantization. The estimate $\bx$ of the principal eigenvector is initialized to a random unit vector. We use $S = 8192$ samples, dimension $d=128$, $q = 64$ i.e., 6 bits per coordinate.  

From Figure \ref{fig:power1}, \ref{fig:power2} one can see that the relevant norms fit with our general message concerning norms: distance between quantization inputs (used in LQSGD and RLQSGD) is substantially lower than difference between coordinate values within a single quantization input (used as a surrogate for input norm in QSGD). Also, we see that these norms settle quickly (relative to the number of iterations one would run to get a good estimate of the principal eigenvector) and are then near-constant. Hence, one can run the baseline algorithm (either on a single machine, or assuming full-precision communication) for a few iterations until $||\bm u_0-\bm u_1||_\infty$ stabilizes, and then set the value of $y$ for LQSGD to  $2\cdot \max ||\bm u_0-\bm u_1||_\infty$ (where the maximum is over all iterations currently seen). Similarly, for RLQSGD we set $y_R = 2\cdot \max ||HD(\bm u_0-\bm u_1)||_\infty$. We then run LQSGD and RLQSGD from iteration $0$, but using the computed value of $y$.

Upon doing so, Figure \ref{fig:power1}, \ref{fig:power2} demonstrates substantially better estimation of the principal eigenvector for {RLQSGD} and LQSGD compared to other quantization methods. Figure \ref{fig:power8} shows similar results on $8$ parallel workers.

\vspace{10pt}

	\section{Conclusions}
	We have argued in this work that for the problems of distributed mean estimation and variance reduction, one should measure the output variance in terms of the input variance, rather than the input norm as used by previous works. Through this change in perspective, we have shown truly optimal algorithms, and matching lower bounds, for both problems, independently of the norms of the input vectors. This improves significantly over previous work whenever the inputs are not known to be concentrated around the origin, both theoretically and in terms of practical performance. In future work, we plan to explore practical applications for variants of our schemes, for instance in the context of federated or decentralized distributed learning.

\bibliographystyle{plain}

	\bibliography{arxiv}

\end{document}